
\typeout{IJCAI--24 Instructions for Authors}


\documentclass{article}
\pdfpagewidth=8.5in
\pdfpageheight=11in

\usepackage{ijcai24}

\usepackage{times}
\usepackage{soul}
\usepackage{url}
\usepackage[hidelinks]{hyperref}
\usepackage[utf8]{inputenc}
\usepackage[small]{caption}
\usepackage{graphicx}
\usepackage{amsmath}
\usepackage{amsthm}
\usepackage{booktabs}
\usepackage[switch]{lineno}


\urlstyle{same}

\usepackage[ruled,vlined,linesnumbered]{algorithm2e}

\usepackage{amsmath,amssymb}

\newtheorem{definition}{Definition}
\newtheorem{lemma}{Lemma}
\newtheorem{theorem}{Theorem}

\newtheorem{claim}{Claim}
\newcommand{\stit}{\rightsquigarrow}
\newcommand{\nstit}{\not\rightsquigarrow}
\renewcommand{\O}{{\sf O}}
\renewcommand{\phi}{\varphi}
\renewcommand{\epsilon}{\varepsilon}
\usepackage{booktabs}
\usepackage{multirow}
\usepackage{subcaption}
\newcommand{\SA}{\text{\upshape\fontfamily{qhvc}\selectfont SA}}
\newcommand{\WA}{\text{\upshape\fontfamily{qhvc}\selectfont WA}}
\newcommand{\SE}{\text{\upshape\fontfamily{qhvc}\selectfont SE}}
\newcommand{\WE}{\text{\upshape\fontfamily{qhvc}\selectfont WE}}
\newcommand{\XSTIT}{\text{\upshape\fontfamily{qhvc}\selectfont XSTIT}}
\renewcommand{\O}{{\sf O}}
\newcommand{\X}{{\sf X}}
\renewcommand{\P}{{\sf P}}

\newenvironment{proof-of-claim}{\begin{trivlist}\item\noindent{\em Proof of Claim.}}{\hfill {\tiny $\boxtimes$}\end{trivlist}}

\usepackage{stmaryrd}

\usepackage{multicol,skull}
\usepackage{makecell}
\usepackage{xcolor}
\usepackage{enumitem}
\renewcommand{\[}{\llbracket}
\renewcommand{\]}{\rrbracket}
\newcommand{\llangle}{\langle\!\langle}
\newcommand{\rrangle}{\rangle\!\rangle}

\usepackage{lipsum}
\newcommand\blfootnote[1]{%
  \begingroup
  \renewcommand\thefootnote{}\footnote{#1}%
  \addtocounter{footnote}{-1}%
  \endgroup
}





\pdfinfo{
/TemplateVersion (IJCAI.2024.0)
}

\title{IJCAI--24 Formatting Instructions}

\author{
    Qi Shi
    \affiliations
    University of Southampton
    \emails
    qi.shi@soton.ac.uk
}

\title{Agentive Permissions in Multiagent Systems}

\begin{document}

\maketitle

\begin{abstract}
This paper proposes to distinguish four forms of agentive permissions in multiagent settings. The main technical results are the complexity analysis of model checking, the semantic undefinability of modalities that capture these forms of permissions through each other, and a complete logical system capturing the interplay between these modalities.\blfootnote{This is the full version of the IJCAI-24 conference paper of the same name. A technical appendix is attached to the end.}
\end{abstract}

\section{Introduction}\label{sec:introduction}

Imagine a large factory being built in a city on a river. The factory will dump a pollutant into the river.
It is known that a small factory located in another city higher up the river already exists and can dump up to $60$g/day of the same pollutant.
Also, more than $100$g/day of the pollutant dumped into the river combined by the two factories will kill the fish.

Suppose that the large factory will dump $20$g/day of the pollutant. Then, the total dumped amount by both factories will not exceed $80$g/day no matter how much the other factory dumps and thus the fish in the river will survive {\em for sure}.
In other words, the action that dumping $20$g/day {\bf\em ensures} the survival of the fish.
On the contrary, if the large factory will dump $60$g/day of the pollutant, then the fish will be killed once the other factory dumps more than $40$g/day.
That is to say, the action that dumping $60$g/day does not ensure the survival of the fish. However, it still {\em leaves the possibility} for the fish to survive, \textit{e.g.} when the other factory dumps no more than $40$g/day.
In this situation, we say that the action that dumping $60$g/day {\bf\em admits} of the survival of the fish.

To ensure and to admit show two different types of {\bf\em agency} of an action. The difference comes from that, in multiagent or non-deterministic settings, the effect of an action of one agent might be affected by the actions of other agents or the nondeterminacy. 
It is worth noting that, admitting is the {\em dual} of ensuring. To be specific, if an action of an agent ensures an outcome, then the action does {\em not} admit of the {\em opposite} outcome, and vice versa. 
It is easy to see that their meanings coincide in single-agent deterministic settings.

In this paper, we consider the two types of agency together with permissions. Let us come back to the large factory. 
Suppose the city government passes a regulation:
\begin{equation}\notag
\begin{minipage}{0.85\linewidth}
     {\em The factory is permitted to dump any amount of the pollutant {as long as} the fish is not killed.}   
\end{minipage}
\end{equation}
This regulation can be interpreted in two ways corresponding to two types of agency.
The first could be
\begin{equation}
\tag{$\SA$}\label{eq:SA}
\begin{minipage}{0.85\linewidth}
    {\em {\bf\em any} dumping amount that {\bf\em admits} of the survival of the fish is permitted.}   
\end{minipage}
\end{equation}
In this interpretation, the permitted dumping amount of the factory is any amount no more than $100$g/day. As long as the factory follows this regulation, there is a chance for the fish to survive if the other factory dumps so little that the total dumped amount does not exceed $100$g/day.
The second interpretation of the regulation could be
\begin{equation}
\tag{$\SE$}\label{eq:SE}
\begin{minipage}{0.85\linewidth}
    {\em {\bf\em any} dumping amount that {\bf\em ensures} the survival of the fish is permitted.}   
\end{minipage}
\end{equation}
In this interpretation, the factory should not dump any amount over $40$g/day and the fish cannot be killed as long as the factory follows the regulation, no matter how much of the pollutant is dumped by the other factory.

The above two interpretations give the factory two different permissions. It is worth noting that, either of the permissions {\em enables} the factory to take any of the actions satisfying some criteria. In this paper, we call such permissions ``strong''.
Specifically, we refer to \eqref{eq:SA} and \eqref{eq:SE} as {\em strong permission to admit} and {\em strong permission to ensure}, respectively.

Not all permissions are in the same form as strong permissions. 
Suppose that, instead of the city's regulation, the factory is under some contractual obligation. To satisfy this obligation, the factory has to dump at least $30$g/day of the pollutant.
In this case, not every amount that ensures/admits of the survival of the fish (\textit{e.g.} $20$g/day) is contractually permitted to be dumped.
Nevertheless, 
\begin{equation}
\tag{$\WE$}\label{eq:WE}
\begin{minipage}{0.85\linewidth}
    {\em {\bf\em there is} a permitted dumping amount that {\bf\em ensures} the survival of the fish.}   
\end{minipage}
\end{equation}
For example, 35g/day is a contractually permitted dumping amount that ensures the survival of the fish.
Similarly, if the contractual obligation forces the factory to dump at least $50$g/day of the pollutant, then no contractually permitted dumping amount ensures the survival of the fish. However, 
\begin{equation}
\tag{$\WA$}\label{eq:WA}
\begin{minipage}{0.85\linewidth}
    {\em {\bf\em there is} a permitted dumping amount that {\bf\em admits} of the survival of the fish.} 
\end{minipage}
\end{equation}
In contrast to strong permissions, the above two permissions express the {\em capability} of the factory to achieve some statements with a permitted action.
We call such permissions ``weak''. Specifically, we refer to \eqref{eq:WE} and \eqref{eq:WA} as {\em weak permission to ensure} and {\em weak permission to admit}, respectively.

As shown in Section~\ref{sec:literature}, the terms ``strong permission'' and ``weak permission'' have already existed in the literature for decades~\cite{raz1975permissions,royakkers1997giving,governatori2013computing}. In Section~\ref{sec:axiom}, we show the consistency between how these terms are used in our paper and in the literature.
However, as far as we see, we are the first to make a clear distinction between ``permission to ensure'' and ``permission to admit'', which are {\bf\em agentive} permissions specific to multiagent settings.
We are also the first to cross-discuss both strong and weak agentive permissions in multiagent settings.

In this paper, we discuss four forms of permissions in multiagent settings that generalise those expressed in statements \eqref{eq:SA}, \eqref{eq:SE}, \eqref{eq:WE}, and \eqref{eq:WA}. 
Our contribution is three-fold.
First, we propose a formal semantics for the four corresponding modalities in multiagent transition systems (Section~\ref{sec:syntax and semantics}). 
We also consider the model-checking problem (Section~\ref{sec:model checking}) and the reduction to STIT logic and ATL (Section~\ref{sec:reduction}).
Second, we prove these modalities are semantically undefinable through each other (Section~\ref{sec:undefinability}). This contrasts to the fact that, when separated from permissions, ensuring and admitting are dual to each other.
Third, we give a sound and complete logical system for the four modalities (Section~\ref{sec:axiom} and Section~\ref{sec:completeness}). This reveals the interplay between the four forms of permissions and offers an efficient way for permission reasoning.

\section{Literature Review and Notion Discussion}\label{sec:literature}

Deontic logic~\cite{mcnamara2022deontic} is an appealing approach to solving AI ethics problems by enabling autonomous agents to comprehend and reason about their \textit{obligation}, \textit{permission}, and \textit{prohibition}.
It aims at ``translating'' the deontic statements in natural languages into logical propositions and building up a system for plausible deduction. 
Von Wright~\shortcite{wright1951deontic} launched the active development of symbolic deontic logic from the analogies between normative and alethic modalities. Several follow-up works~\cite{anderson1956formal,prior1963logic} built up the Standard Deontic Logic (SDL), taking obligation as the basic modality and defining permission as the dual of obligation and prohibition as the obligation of the negation. \citeauthor{anderson1967some}~\shortcite{anderson1967some} and \citeauthor{kanger1971new}~\shortcite{kanger1971new} reduced this system by defining a propositional constant $d$ for ``all (relative) normative demands are met''. By this means, obligation (modality~$\O$) can be defined as
$
\O \phi:=\Box(d\to \phi),
$
which is read as ``it is {\em necessary} that $\phi$ is true when all normative demands are met''. 
As the dual of obligation, permission (modality~$\P$) is defined as
\begin{equation}\label{eq:weak permission def}
\P \phi:=\Diamond(d\wedge\phi),
\end{equation}
which is read as ``it is {\em possible} that all normative demands are met and $\phi$ is true''. In this way, the inference rule
\begin{equation}\label{eq:weak permission ir}
\dfrac{\phi\to\psi}{\P\phi\to\P\psi} 
\end{equation}
is valid. The notion of permission that satisfies statement~\eqref{eq:weak permission ir} is called {\bf weak permission}. There are two well-known related paradoxes about weak permission: 

(\romannumeral1) {\em Ross's paradox} \cite{ross1944imperatives}. The formula 
\begin{equation}\label{eq:weak permission 1}
\P\phi\to\P(\phi\vee\psi)
\end{equation}
is valid by statement~\eqref{eq:weak permission ir}. However, in common sense, for a kid ``it is permitted to eat an apple'' is true but ``it is permitted to eat an apple or drink alcohol'' should be false, which contradicts statement~\eqref{eq:weak permission 1}.

(\romannumeral2) The {\em free choice permission paradox} \cite{wright1968essay,kamp1973free}. 
According to linguistic intuition, if ``it is permitted to eat an apple or a banana'', then both ``eating an apple'' and ``eating a banana'' should be permitted. This shows that disjunctive permission is treated as {\bf free choice permission}, which means the formula
\begin{equation}\label{eq:free choice permission paradox}
\P(\phi\vee\psi)\to\P\phi\wedge\P\psi
\end{equation}
should be valid. However, statement~\eqref{eq:free choice permission paradox} is {\em not} derivable in SDL.
Free choice permission is a form of {\bf strong permission}~\cite{asher2005free}, satisfying the inference rule
\begin{equation}\label{eq:strong permission ir}
\dfrac{\phi\to\psi}{\P^s\psi\to\P^s\phi}.
\end{equation}
Following \citeauthor{anderson1967some} and \citeauthor{kanger1971new}'s way, \citeauthor{benthem1979minimal}~\shortcite{benthem1979minimal} captured the notion of strong permission as 
\begin{equation}\label{eq:strong permission def}
\P^s\phi=\Box(\phi\to d),
\end{equation}
which is read as ``it is {\em necessary} that if $\phi$ is true then all normative demands are met''. He then gave a complete axiom system for obligation ($\O$) and strong permission ($\P^s$).

Most researchers agree that both weak and strong permission makes sense. As discussed by \citeauthor{lewis1979problem}~\shortcite{lewis1979problem}, no universal comprehension of permission seems to exist.
In general, weak permission is treated as the dual of obligation. Strong permission, as well as free choice permission, is more intractable and arouses more interesting discussions due to its anti-monotonic inference property in statement~\eqref{eq:strong permission ir}. For instance, \citeauthor{anglberger2015obligation}~\shortcite{anglberger2015obligation} adopted the notion of strong permission and defined a notion of obligation as the weakest form of (strong) permission. \citeauthor{wang2023strong}~\shortcite{wang2023strong} axiomatised a logic of strong permission that satisfies some commonly desirable logical properties.
Strong permission is also studied in defeasible logic~\cite{asher2005free,governatori2013computing}, which is believed to be able to capture the logical intuition about permission. 
 
The above discussion of permission applies possible-world semantics without specifically considering agents and their agency.
However, it is noticed that two kinds of normative statements exist: the {\em agentless} norms that talk about states (\textit{e.g.} it is permitted to eat an apple) and the {\em agentive} norms that talk about actions (\textit{e.g.} John is permitted to eat an apple).
The possible-world semantics cannot distinguish them.
To fill the gap, \citeauthor{chisholm1964ethics}~\shortcite{chisholm1964ethics} proposed a transfer from any agentive norm to an agentless norm. For instance, the statement ``agent $a$ is permitted to do $\phi$'' is transformed into ``it is permitted that agent $a$ does $\phi$''.
Some recent work \cite{kulicki2017connecting,kulicki2023unified} aimed at integrating the agentless and agentive norms in a unified logical frame.

Things become more complicated when agents and their agency are incorporated.
In the literature, STIT logic~\cite{chellas1969logical,belnap1988seeing,belnap1992way} is used to express the agency. 
\citeauthor{horty1995deliberative}~\shortcite{horty1995deliberative} and \citeauthor{horty2001agency}~\shortcite[chapter 4]{horty2001agency} introduced a deontic STIT logic for {\em ought-to-be} and {\em ought-to-do} semantics, respectively. 
The former corresponds to the agentless obligations while the latter corresponds to the agentive obligations in STIT models.
\citeauthor{horty2001agency}~\shortcite[chapter 3]{horty2001agency} further showed that the transfer proposed by
\citeauthor{chisholm1964ethics} does not always work properly. Following \citeauthor{horty2001agency}, \citeauthor{putte2017free}~\shortcite{putte2017free} briefly discussed the dual of the ought-to-do obligation, which is the weak permission in deontic STIT logic, and then defined a form of free choice permission following statement~\eqref{eq:free choice permission paradox}.
Although the agency is considered in deontic STIT logic, the distinction between to ensure and to admit is never discussed there.

In the field of AI, there is a rising interest in applying deontic logic into agents' planning: how to achieve a goal while complying with the deontic constraints \cite{pandvic2022boid,areces2023deontic}. There is also some discussion of agents' comprehending and reasoning norms \cite{arkoudas2005toward,broersen2018formalising}. However, to the best of our knowledge, the agentive weak and strong permissions have never been cross-discussed before.

In this paper, we consider both permission to ensure and permission to admit in both weak and strong forms that follow statements~\eqref{eq:weak permission def} and \eqref{eq:strong permission def}. 
In a word, we consider four forms of permissions as illustrated in statements \eqref{eq:SA}, \eqref{eq:SE}, \eqref{eq:WE}, and \eqref{eq:WA}.
It is worth mentioning that, our formalisation has a connection with Horty's ought-to-do deontic STIT logic \cite{horty2001agency}. 
On the one hand, the notion ``ensure'' captures the same idea as ``see to it that'' in STIT logic. 
On the other hand, our formalisation can be seen as a {\em reasonable} reduction of Horty's formalisation. 
Specifically, Horty's approach is to, first, define a preference over the outcomes (\textit{i.e.} ``histories'' in STIT models) of actions in the {\em model}, then, apply the {\em dominance act utiliarianism} to decide which actions are permitted (\textit{i.e.} ``optimal'' in his work) in {\em semantics}, and, finally, define the ought-to-do obligation based on the permitted actions in {\em semantics}. 
In particular, an action in the STIT frame is the set of outcomes that may follow from the action. An action ``sees to it that'' $\phi$ if and only if $\phi$ is true in all the potential outcomes. Then, ``do $\phi$'' is interpreted as ``seeing to it that $\phi$''.
In this paper, we combine the first two steps of Horty's approach, directly defining the deontic constraints on actions in the {\em model} and then defining four forms of permissions in {\em semantics}. 
Note that, the definition of deontic notions is independent of the process that combines the first two steps in Horty's approach.
In other words, our work in this paper can easily be transformed from action-based models into outcome-based models by recovering the step of deciding permitted actions based on preference over outcomes using dominance act utilitarianism.
Moreover, we give a reduction of our semantics into STIT logic in Section~\ref{sec:reduction}.

\section{Syntax and Semantics}\label{sec:syntax and semantics}

In this section, we introduce the syntax and semantics of our logical system.
Throughout the paper, unless stated otherwise, we assume a fixed set $\mathcal{A}$ of agents and a fixed nonempty set of propositional variables.

\begin{definition} \label{df:transition system}
A {\bf\em transition system} is a tuple $(S,\Delta,D,M,\pi)$:
\begin{enumerate}
    \item $S$ is a (possibly empty) set of \textbf{states};\label{df:transition system S}
    \item $\Delta=\{\Delta_a^s\}_
    {s\in S,a\in\mathcal{A}}$ is the \textbf{action} space, where $\Delta_a^s$ is a nonempty set of actions available to agent $a$ in state $s$; \label{df:transition Delta}
    \item $D=\{D_a^s\}_{s\in S,a\in\mathcal{A}}$ is the \textbf{deontic constraints}, where $D_a^s$ is a set of permitted actions and $\varnothing\subsetneq D_a^s\subseteq\Delta_a^s$;\label{df:transition system D}
    \item $M=\{M_s\}_{s\in S}$ is the \textbf{mechanism}, where a relation $M_s\subseteq \prod_{a\in\mathcal{A}}\Delta_a^s\times S$ satisfies the {\bf\em continuity} condition: for each \textbf{action profile} $\delta\in \prod_{a\in\mathcal{A}}\Delta_a^s$ there is a state $t\in S$ such that $(\delta,t)\in M_s$;\label{df:transition system M}
    \item $\pi(p)\subseteq S$ for each propositional variable $p$. \label{df:transition system pi}
\end{enumerate}
\end{definition}
The continuity condition in item~\ref{df:transition system M} above requires the existence of a ``next'' state $t$.
We say that a transition system is {\bf\em deterministic} if such state $t$ is always unique.

The language $\Phi$ of our logical system is defined by the following grammar:
$$\phi := p\; |\;\neg\phi\; |\;\phi\vee\phi\; |\;\WA_a\phi\; |\;\WE_a\phi\; |\;\SE_a\phi\; |\;\SA_a\phi,$$
where $p$ is a propositional variable and $a\in\mathcal{A}$ is an agent. 
Intuitively, we interpret $\WA_a\phi$ as ``there is a permitted action of agent $a$ that admits of $\phi$'', 
$\WE_a\phi$ as ``there is a permitted action of agent $a$ that ensures $\phi$'',
$\SE_a\phi$ as ``each action of agent $a$ that ensures $\phi$ is permitted'', 
and
$\SA_a\phi$ as ``each action of agent $a$ that admits of $\phi$ is permitted''.
We assume that conjunction $\wedge$, implication $\to$, and Boolean constants true $\top$ and false $\bot$ are defined in the usual way. Also, by $\bigwedge_{i\le n}\phi_i$ and $\bigvee_{i\le n}\phi_i$ we denote, respectively, the conjunction and the disjunction of the formulae $\phi_1,\dots\phi_n$. As usual, we assume that the conjunction and the disjunction of an empty list are $\top$ and $\bot$, respectively.

\begin{definition} 
\label{df:sat}
For each transition system $(S,\Delta,D,M,\pi)$, each state $s\in S$, and each formula $\phi\in\Phi$, the {\bf\em satisfaction relation} $s\Vdash\phi$ is defined recursively as follows:
\begin{enumerate}
    \item $s\Vdash p$, if $s\in\pi(p)$;\label{item:sat propositional}
    \item $s\Vdash\neg\phi$, if $s\nVdash\phi$;\label{item:sat negation}
    \item $s\Vdash \phi\vee\psi$, if $s\Vdash\phi$ or $s\Vdash\psi$;\label{item:sat disjunction}
    \item $s\Vdash\WA_a\phi$, if $(s,i)\nstit_a\neg\phi$ for some $i\in D_a^s$;\label{item:sat WA}
    \item $s\Vdash\WE_a\phi$, if $(s,i)\stit_a\phi$ for some $i\in D_a^s$;\label{item:sat WE}
    \item $s\Vdash\SE_a\phi$, if $i\in D_a^s$ for each $i$ such that $(s,i)\stit_a\phi$;\label{item:sat SE}
    \item $s\Vdash\SA_a\phi$, if $i\in D_a^s$ for each $i$ such that $(s,i)\nstit_a\neg\phi$,\label{item:sat SA}
\end{enumerate}
where the notation $(s,i)\stit_a\phi$ means that, for each tuple $(\delta,t)\in M_s$, if $\delta_a=i$, then $t\Vdash\phi$.
\end{definition}
Items~4 - 7 above capture the generalised notions of permissions in statements \eqref{eq:WA}, \eqref{eq:WE}, \eqref{eq:SE}, and \eqref{eq:SA} in Section~\ref{sec:introduction}. Informally, $(s,i)\stit_a\phi$ means that that action $i$ of agent $a$ in state $s$ {\em ensures} that $\phi$ is true in the next state. 
Accordingly, $(s,i)\nstit_a\neg\phi$ means that action $i$ of agent $a$ in state $s$ {\em admits} of the situation that $\phi$ is true in the next state.
Observe that, if a transition system is deterministic and has only one agent $a$, then $(s,i)\nstit_a\neg\phi$ if and only if $(s,i)\stit_a\phi$.
Then, the next lemma follows from items 4 - 7 of Definition~\ref{df:sat}.
\begin{lemma}\label{lm:ensure=admit}
If set $\mathcal{A}$ contains only agent $a$, then for any formula $\phi\in\Phi$ and state $s$ of a deterministic transition system,
\begin{enumerate}
    \item $s\Vdash\WA_a\phi$ if and only if $s\Vdash\WE_a\phi$;
    \item $s\Vdash\SA_a\phi$ if and only if $s\Vdash\SE_a\phi$.
\end{enumerate}
\end{lemma}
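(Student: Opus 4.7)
The plan is to reduce both equivalences to the single observation stated just before the lemma, namely that in a single-agent deterministic system $(s,i)\stit_a\phi$ and $(s,i)\nstit_a\neg\phi$ coincide. Once that is established, each part of the lemma is an immediate rewriting of the definitions in items~4--7 of Definition~\ref{df:sat}.

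First I would verify the key observation in detail. When $\mathcal{A}=\{a\}$, an action profile is just a single action $i\in\Delta_a^s$, and the continuity condition together with determinism guarantees a unique $t\in S$ with $((i),t)\in M_s$. The definition of $(s,i)\stit_a\phi$ then reduces to $t\Vdash\phi$, while $(s,i)\stit_a\neg\phi$ reduces to $t\Vdash\neg\phi$, i.e.\ $t\nVdash\phi$. Taking the negation of the latter yields precisely $(s,i)\nstit_a\neg\phi\iff t\Vdash\phi$, so the two conditions are equivalent.

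Next I would apply this equivalence to prove part~1: by Definition~\ref{df:sat}.\ref{item:sat WA}, $s\Vdash\WA_a\phi$ iff there exists $i\in D_a^s$ with $(s,i)\nstit_a\neg\phi$; by the observation, this is iff there exists $i\in D_a^s$ with $(s,i)\stit_a\phi$, which by Definition~\ref{df:sat}.\ref{item:sat WE} is $s\Vdash\WE_a\phi$. For part~2, Definition~\ref{df:sat}.\ref{item:sat SA} gives $s\Vdash\SA_a\phi$ iff every $i$ with $(s,i)\nstit_a\neg\phi$ lies in $D_a^s$; by the observation the condition $(s,i)\nstit_a\neg\phi$ can be replaced by $(s,i)\stit_a\phi$, yielding Definition~\ref{df:sat}.\ref{item:sat SE} and hence $s\Vdash\SE_a\phi$.

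Because the lemma is essentially a direct consequence of unpacking definitions once the observation is made, there is no real obstacle; the only subtlety to watch is that $\nstit_a$ denotes the negation of $\stit_a$ (not a distinct modality), so the double-negation step $(s,i)\nstit_a\neg\phi\iff\neg\bigl((s,i)\stit_a\neg\phi\bigr)\iff t\nVdash\neg\phi\iff t\Vdash\phi$ must be written out carefully. I would present the proof as a two-line argument for each clause, preceded by a brief justification of the key equivalence.
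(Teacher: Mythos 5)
Your proof is correct and follows the same route the paper takes: it first establishes the observation stated just before the lemma (that in a single-agent deterministic system $(s,i)\nstit_a\neg\phi$ and $(s,i)\stit_a\phi$ coincide, since determinism plus continuity gives a unique next state $t$) and then reads off both equivalences from items~4--7 of Definition~\ref{df:sat}. Your write-up merely makes explicit the unpacking that the paper leaves implicit, so nothing further is needed.
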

\noindent Note that, in other cases (\textit{i.e.} multiagent or non-deterministic systems), these modalities are not only semantically {\em inequivalent} but also {\em undefinable} through each other. We show this in Section~\ref{sec:undefinability}.

\subsection{Model Checking}\label{sec:model checking}

Following Definition~\ref{df:sat}, we consider the {\bf\em global} model-checking problem \cite{muller1999model} of language $\Phi$. 
For a finite transition system and a formula $\phi\in\Phi$, the global model checking determines the {\em truth set} $\[\phi\]$ that consists of all states satisfying $\phi$ in the transition system. Formally, we define the truth set of a formula as follows.
\begin{definition}\label{df:truth set}
For any given transition system and any formula $\phi\in\Phi$, the \textbf{truth set} $\[\phi\]$ is the set $\{s\;|\;s\Vdash\phi\}$. 
\end{definition}

The global model checking of formula $\phi$ applies a trivial recursive process on its structural complexity. 
The next theorem shows its time complexity. See Appendix~\ref{app:direct model checking} for the model checking algorithm and detailed analysis.

\begin{theorem}[time complexity]
For a finite transition system $(S,\Delta,D,M,\pi)$ and a formula $\phi\in\Phi$, the time complexity of global model checking is $O\big(|\phi|\cdot(|S|\!+\!|M|\!+\!|\Delta|)\big)$, where $|\phi|$ is the size of the formula, $|S|$ is the number of states, $|M|=\sum_{s\in S}|M_s|$ is the size of the mechanism, and $|\Delta|=\sum_{a\in\mathcal{A}}\sum_{s\in S}|\Delta_a^s|$ is the size of the action space.
\end{theorem}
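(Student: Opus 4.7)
The plan is to give a standard bottom-up algorithm that computes $\[\psi\]$ for every subformula $\psi$ of $\phi$ in order of increasing structural complexity, and then show that the incremental cost per subformula is $O(|S|+|M|+|\Delta|)$. Since $\phi$ has at most $|\phi|$ subformulae, the total complexity follows.

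The Boolean cases are routine. Representing each truth set as a bit-vector indexed by $S$, the atomic case $\[p\]=\pi(p)$ is a direct lookup, $\[\neg\psi\]=S\setminus\[\psi\]$ is complementation, and $\[\psi_1\vee\psi_2\]=\[\psi_1\]\cup\[\psi_2\]$ is union, each costing $O(|S|)$ once the children are computed.

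The substantive step is the modal case, where the key observation is that, once $\[\psi\]$ is known, both derived predicates $(s,i)\stit_a\psi$ and $(s,i)\nstit_a\neg\psi$ can be determined simultaneously for every $s$ and every $i\in\Delta_a^s$ by a single pass over the mechanism. Concretely, at each state $s$ maintain, for every $i\in\Delta_a^s$, two counters: the number of pairs $(\delta,t)\in M_s$ with $\delta_a=i$, and the number of such pairs that also satisfy $t\in\[\psi\]$. A single sweep through $M_s$ fills both counters. Then $(s,i)\stit_a\psi$ holds exactly when the two counters coincide, while $(s,i)\nstit_a\neg\psi$ holds exactly when the second counter is positive. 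With this table in hand, each of the four satisfaction conditions in items~4--7 of Definition~\ref{df:sat} reduces to one scan over $\Delta_a^s$ that also tests membership in $D_a^s$. Summing over $s\in S$ gives per-modality cost $O(|S|+|M|+|\Delta|)$.

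I expect the main obstacle to be not conceptual but accounting in this modal step: one must charge each tuple in $M_s$ only once (rather than once per action that appears in it), and the truth-set representation must make the test $t\in\[\psi\]$ constant-time; the bit-vector encoding described above handles both. With that in place, the structural recursion is standard and yields the stated bound.
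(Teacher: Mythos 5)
Your proposal is correct and follows essentially the same route as the paper: a bottom-up recursion over the at most $|\phi|$ subformulae, bit-vector truth sets making membership tests $O(1)$, and for each modal operator a single pass over the mechanism followed by a scan of agent $a$'s actions with $D_a^s$-membership tests, giving $O(|S|+|M|+|\Delta|)$ per operator. The only difference is cosmetic: the paper uses four separate procedures (a witness search for $\WA$/$\SA$ and an ``Ensurer'' set pruned by counterexample successors for $\WE$/$\SE$), whereas your per-action counter table computes the $\stit$/$\nstit$ predicates uniformly, which is equally valid and yields the same bound.
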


\subsection{Reduction to Other Logics}\label{sec:reduction}

Recall statements~\eqref{eq:weak permission def} and \eqref{eq:strong permission def} in Section~\ref{sec:literature}, which show the way how \citeauthor{anderson1967some} and \citeauthor{kanger1971new} reduces SDL. 
Using a similar technique, we can translate our modalities into modalities in STIT logic and ATL~\cite{alur2002alternating} after properly interpreting the transition system in Definition~\ref{df:transition system}.

\paragraph{Reduction to STIT}
Instead of being about the states, the statements in STIT logic are about moment-history ($m/h$) pairs. Due to this fact, there is no exact reduction of our logic to STIT logic. However, we can interpret our modalities in STIT models in the appearance of the necessity and possibility modalities $\Box$ and $\Diamond$\footnote{$m/h\Vdash\Box\phi$ iff $m/h'\!\Vdash\phi$ for each history $h'$ such that $m\in h'$.}.
In order to do this, we first incorporate the deontic constraints into the models as atomic propositions. To be specific, $m/h\Vdash d_a$ represents that the action of agent $a$ at moment $m$ that includes history $h$ is permitted. 
We use the modality $\XSTIT$ \cite{broersen2008complete,broersen2011deontic}. Informally, $m/h\Vdash\XSTIT_a\phi$ could be interpreted as ``the action of agent $a$ at moment $m$ that includes history $h$ sees to it that $\phi$ is true at the next moment''.
Then, our four modalities can be translated as:
\begin{equation}\notag
\begin{aligned}
&\WA_a\phi:=\Diamond(d_a\wedge\neg\XSTIT_a\neg\phi);\\
&\WE_a\phi:=\Diamond(d_a\wedge\XSTIT_a\phi);\\
&\SE_a\phi:=\Box(\XSTIT_a\phi\to d_a);\\
&\SA_a\phi:=\Box(\neg\XSTIT_a\neg\phi\to d_a).
\end{aligned}
\end{equation}

\paragraph{Reduction to ATL}
Unlike in STIT logic, in ATL, the statements are about states but there is no way to express the properties of actions.
For this reason, we encode deontic constraints into the states. 
To do this, we expand each state in our original transition system into a set of states in the ATL model.
Specifically, each state $s$ in our original transition system corresponds to a set $\{\langle s,\mathcal{D}\rangle\,|\,\mathcal{D}\subseteq\mathcal{A}\}$ of states in the ATL model.
Informally, the tuple $\langle s,\mathcal{D}\rangle$ encodes the information that ``state $s$ is reached after the agents in set $\mathcal{D}$ taking permitted actions and the others taking non-permitted actions''.
Then, $\langle s,\mathcal{D}\rangle\Vdash d_a$ if and only if $a\in \mathcal{D}$. Also, $\langle s,\mathcal{D}\rangle\Vdash p$ if and only if $s\Vdash p$ in our original transition system.
Correspondingly, the transition $(\langle s,*\rangle,\langle t,\mathcal{D}\rangle)$ exists in the ATL model if there is a tuple $(\delta,t)\in M_s$ in our original transition system such that $\mathcal{D}=\{a\in\mathcal{A}\,|\,\delta_a\in D_a^s\}$ and $*$ is a wildcard.
Note that, ATL requires the transitions to be deterministic.
Thus, if needed, we incorporate a dummy agent $Nature$ into the agent set $\mathcal{A}$ to achieve determinacy.
Then, we can translate our modalities into standard ATL syntax as:
\begin{equation}\notag
\begin{aligned}
&\WA_a\phi:= \llangle\mathcal{A}\rrangle \X(d_a\wedge\phi);\\
&\WE_a\phi:=\llangle a\rrangle \X(d_a\wedge\phi);\\
&\SE_a\phi:=\neg\llangle a\rrangle\X\neg(\phi\to d_a);\\
&\SA_a\phi:=\neg\llangle\mathcal{A}\rrangle\X\neg(\phi\to d_a),
\end{aligned}
\end{equation}
where $\llangle \mathcal{C}\rrangle\X\phi$ is informally interpreted as ``the agents in set $\mathcal{C}$ can cooperate to enforce $\phi$ in the next state'' and $\llangle a\rrangle$ is the abbreviation for $\llangle \{a\}\rrangle$.

\section{Mutual Undefinability}\label{sec:undefinability}

As we define four modalities in the language, we would like to figure out if all of them are necessary to express the corresponding notions of permission. Specifically, if some of these modalities are semantically definable through the others, then the definable ones are not necessary for the language. As an example, a well-known result in Boolean logic is De Morgan's laws, which say conjunction and disjunction are inter-definable in the presence of negation. Therefore, to consider a ``minimal'' system for propositional logic, it is not necessary to include both conjunction and disjunction. 

In this section, we consider the definability of modalities in the same way as De Morgan's laws (\textit{i.e.} semantical equivalence). For example, modality $\WA$ is definable through modalities $\WE$, $\SE$, and $\SA$ if every formula in language $\Phi$ is semantically equivalent to a formula using only modalities $\WE$, $\SE$, and $\SA$. Formally, in the transition systems, we define semantical equivalence as follows.

\begin{definition}\label{df:semantically equivalent}
Formulae $\phi$ and $\psi$ are \textbf{semantically equivalent} if $\[\phi\]=\[\psi\]$ for each transition system.    
\end{definition}

We prove that none of the modalities $\WA$, $\WE$, $\SE$, and $\SA$ is definable through the other three. To do this, it suffices to show that, for each modality $\odot$ of the four modalities, there exists a formula $\odot \phi\in\Phi$ and a transition system where $\[\odot\phi\]\neq \[\psi\]$ for each formula $\psi$ not using modality $\odot$.
In particular, we use the {\em truth set algebra} technique~\cite{knight2022truth}. 
This technique uses one model (\textit{i.e.} transition system) and shows that, in this model, the truth sets of all formulae $\psi$ not using modality $\odot$ form a {\em proper subset} of the family of all truth sets in language $\Phi$, while the truth set of the formula $\odot\phi$ does not belong to this subset.
We formally state the undefinability results in the next theorem.
A detailed explanation of the technique and proof can be found in Appendix~\ref{app:undefinability proofs}.

\begin{theorem}[undefinability of $\WA$]
\label{th:undefinability WA}
The formula $\WA_a p$ is not semantically equivalent to any formula in language $\Phi$ that does not use modality $\WA$. 
\end{theorem}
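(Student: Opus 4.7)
The plan is to apply the truth set algebra technique of \cite{knight2022truth} anticipated in the paragraph preceding the statement. It suffices to exhibit a single transition system $T$ with state set $S$, together with a family $\mathcal{F} \subseteq 2^S$, such that (i) $\pi(p) \in \mathcal{F}$ for every propositional variable $p$, (ii) $\mathcal{F}$ is closed under complement, union, and the three set-valued operators corresponding to $\WE_a$, $\SE_a$, and $\SA_a$ (namely $X \mapsto \{s : \exists\, i \in D_a^s,\ (s,i) \stit_a X\}$, $X \mapsto \{s : \forall\, i,\ (s,i) \stit_a X \text{ implies } i \in D_a^s\}$, and $X \mapsto \{s : \forall\, i,\ (s,i) \nstit_a (S \setminus X) \text{ implies } i \in D_a^s\}$), and (iii) $\[\WA_a p\] \notin \mathcal{F}$. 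A routine structural induction on $\WA$-free formulae $\psi$ then gives $\[\psi\] \in \mathcal{F}$, whence $\[\psi\] \neq \[\WA_a p\]$ in $T$, which is precisely the inequivalence required.

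To construct $T$, I would exploit the fact that $\WA_a$ is the unique modality among the four whose semantics is purely existential in both the action and the successor, whereas $\WE_a$, $\SE_a$, and $\SA_a$ each couple their action-quantifier with either a universal over successors or with an implication constraining permittedness. A natural design is a small single-agent, non-deterministic transition system containing two ``twin'' states $s_1, s_2$ in which the available actions, the permitted subsets, and the successor relations are arranged so that the three alternative operators always send $s_1$ and $s_2$ to the same side of each $X \in \mathcal{F}$, while exactly one of them has a permitted action with a $p$-witnessing successor. The root structure would use two actions at each twin, with the permitted one admitting of $p$ at $s_1$ without ensuring it, and the whole setup tuned so that the ``ensure''-based operators cannot detect the relevant successor.

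Next, I would define $\mathcal{F}$ as the closure of $\{\varnothing, S, \pi(p)\}$ under Boolean operations and the three set-valued operators above. By design of $T$, this closure should reduce to a small finite list of subsets, so closure can be verified by exhaustive finite case analysis: for each $X$ on the list, directly compute the image under each operator and check it is already on the list. Computing $\[\WA_a p\]$ from item~\ref{item:sat WA} of Definition~\ref{df:sat} and inspecting the enumeration then delivers (iii), since $\[\WA_a p\]$ would contain $s_1$ but not $s_2$ while every $X \in \mathcal{F}$ treats $s_1$ and $s_2$ alike.

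The main obstacle is the joint calibration of $T$. The structure of actions, permitted sets, and transitions must be tight enough that no combination of the three alternative modalities distinguishes $s_1$ from $s_2$, yet loose enough that $\[\WA_a p\]$ does distinguish them; too much symmetry collapses $\WA_a p$ as well, while too little blows up $\mathcal{F}$ until it accidentally contains $\[\WA_a p\]$. Since the three operators are heterogeneous, the bulk of the work will be the finite but careful verification that $\mathcal{F}$ is closed under each and that $\[\WA_a p\]$ lies outside it.
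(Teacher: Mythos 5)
Your overall scheme is exactly the paper's: fix a single transition system, take a family $\mathcal{F}$ of subsets of its state space that contains $\pi(p)$ and is closed under complement, union, and the set operators corresponding to $\WE$, $\SE$, $\SA$, prove by structural induction that every $\WA$-free formula has its truth set in $\mathcal{F}$, and then check that $\[\WA_a p\]\notin\mathcal{F}$; your formulations of the three operators are also correct. The genuine gap is that you never exhibit the witnessing transition system. Everything is deferred to a model described only by the properties you hope it will have (``a natural design is\dots'', ``the whole setup tuned so that\dots''), and you yourself flag the ``joint calibration'' of $T$ as the main obstacle. That calibration, together with the finite closure verification, is the entire mathematical content of the theorem: until a concrete model is written down and the closure of $\mathcal{F}$ under each operator is actually checked, nothing has been proved. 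For comparison, the paper's witness is a three-state system with states $s,t,u$, two agents, every action permitted, and $\pi(p)=\{u\}$; the family is simply $\mathcal{F}=\{\[p\],\[\neg p\],\[\top\],\[\bot\]\}$, closure is verified by a finite case analysis (one case is done explicitly, the remaining $23$ combinations of operator, agent, and member of $\mathcal{F}$ are analogous), and $\[\WA_a p\]=\{s,u\}\notin\mathcal{F}$, which indeed separates the two ``twin'' states $s$ and $t$ that all $\WA$-free formulae treat alike --- so your intuition about what the model must achieve is right, but the object realizing it is missing.

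A second, smaller problem is your choice of a single-agent non-deterministic system. The theorem quantifies over all formulae of $\Phi$, and in the setting of this result the agent set $\mathcal{A}$ contains at least two agents (the paper assumes without loss of generality exactly two); hence $\WA$-free formulae may use $\WE_b$, $\SE_b$, $\SA_b$ for another agent $b$, so your family must be closed under the operators of \emph{every} agent and your model must specify the other agents' action sets and permitted actions as well. Collapsing the second agent into ``the non-deterministic factor'' is legitimate only when $\mathcal{A}$ is literally a singleton, which is the separate single-agent remark in Section~\ref{sec:undefinability}, not a proof of Theorem~\ref{th:undefinability WA} as stated.
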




The formal statements of the undefinability results for modalities $\WE$, $\SE$, and $\SA$ are the same as Theorem~\ref{th:undefinability WA} except for using the corresponding modalities instead of $\WA$, see Theorem~\ref{th:undefinability WE}, Theorem~\ref{th:undefinability SE}, and Theorem~\ref{th:undefinability SA} in Appendix~\ref{app:undefinability proofs}.

Note that, all four undefinability results, as presented in Appendix~\ref{app:undefinability proofs}, require that our language contains at least two agents.
In single-agent settings, if a transition system is non-deterministic, these undefinability results still hold. This can be observed by modifying the two-agent transition systems in the proofs into single-agent non-deterministic transition systems by treating one of the agents as the non-deterministic factor.
If a single-agent transition system is deterministic, then, as observed in Lemma~\ref{lm:ensure=admit}, modalities $\WA$ and $\WE$ are semantically equivalent, so as modalities $\SE$ and $\SA$. However, {\em modalities $\WA$ ($\WE$) and $\SA$ ($\SE$) are not definable through each other}. See Appendix~\ref{app:undefinability single} for proof.

\section{Axioms}\label{sec:axiom}
In addition to the tautologies in language $\Phi$, our logical system contains the following schemes of axioms for all agents $a,b\in\mathcal{A}$ and all formulae $\phi,\psi\in\Phi$:

\begin{enumerate}[label=A\arabic*., ref=A\arabic*,left=0pt]
    \item $\neg\WA_a\bot$; \label{ax: WA bot}
    \item $\WE_a\top$; \label{ax:WE T}
    \item  $\SA_a\bot$;\label{ax: SA bot}
    \item $\SE_a\top\to\SA_a\top$;\label{ax: SA SE T}
    \item $\WA_a(\phi\vee\psi)\to\WA_a\phi\vee\WA_a\psi$;\label{ax: WA disjunction}
    \item $\SA_a\phi\wedge\SA_a\psi\to\SA_a(\phi\vee\psi)$;\label{ax: SA conjunction}
    \item $\WE_a\phi\wedge\neg\WE_a\psi\to\WA_a(\phi\wedge\neg\psi)$;\label{ax: WA WE}
    \item $\neg\SE_a\phi\wedge\SE_a\psi\to\neg\SA_a(\phi\wedge\neg\psi)$;\label{ax: SE SA}
    \item $\neg\WA_a\phi\wedge\SA_a\psi\to\neg\WA_b(\phi\wedge\psi)\wedge\SA_b(\phi\wedge\psi)$.\label{ax: WA SA}
\end{enumerate}

Axiom~\ref{ax: WA bot} says agent $a$ does not have a permitted action that has no next state. This is true because of the continuity property of the mechanism (item~\ref{df:transition system M} of Definition~\ref{df:transition system}). 
Axiom~\ref{ax:WE T} says agent $a$ always has a permitted action that ensures a next state. This is true because of the continuity property and the nonempty set of permitted actions (item~\ref{df:transition system D} of Definition~\ref{df:transition system}). 
Axiom~\ref{ax: SA bot} says every action that may have no next state is permitted. This is true because no such actions exist again due to the continuity property.
Axiom~\ref{ax: SA SE T} is true because both $\SE_a\top$ and $\SA_a\top$ mean that every action of agent $a$ is permitted. 

Axiom~\ref{ax: WA disjunction} says, if agent $a$ has a permitted action that admits of $\phi\vee\psi$, then agent $a$ either has a permitted action that admits of $\phi$ or has a permitted action that admits of $\psi$.
This is true because the permitted action that admits of $\phi\vee\psi$ indeed either admits of $\phi$ or admits of $\psi$ (item~\ref{item:sat disjunction} of Definition~\ref{df:sat}).
Axiom~\ref{ax: SA conjunction} says, if every action of agent $a$ that admits of $\phi$ is permitted and every action of agent $a$ that admits of $\psi$ is permitted, then every action of agent $a$ that admits of $\phi\vee\psi$ is permitted. This is true because any action that admits of $\phi\vee\psi$ either admits of $\phi$ or admits of $\psi$ (item~\ref{item:sat disjunction} of Definition~\ref{df:sat}). 

Axiom~\ref{ax: WA WE} says, if agent $a$ has a permitted action that ensures $\phi$ and has no permitted action that ensures $\psi$, then agent $a$ has a permitted action that admits of $\phi\wedge\neg\psi$. This is true because the permitted action $i$ that ensures $\phi$ does not ensure $\psi$. Hence, action $i$ admits of $\neg\psi$ while $\phi$ is ensured to happen.
Axiom~\ref{ax: SE SA} says, if agent $a$ has a non-permitted action that ensures $\phi$ and every action that ensures $\psi$ is permitted, then agent $a$ has a non-permitted action that admits of $\phi\wedge\neg\psi$. This is true because the non-permitted action $j$ that ensures $\phi$ does not ensure $\psi$. Hence, action $j$ admits of $\neg\psi$ while $\phi$ is ensured to happen.

Axiom~\ref{ax: WA SA} says, if agent $a$ has no permitted action that admits of $\phi$ and every action that admits of $\psi$ is permitted, then agent $b$ has no permitted action that admits of $\phi\wedge\psi$ and every action of agent $b$ that admits of $\phi\wedge\psi$ is permitted. This is true because the antecedent means agent $a$'s permitted actions ensure $\neg\phi$ and non-permitted actions (if existing) ensure $\neg\psi$. Thus, every action of agent $a$ ensures $\neg\phi\vee\neg\psi$ (item~\ref{item:sat disjunction} of Definition~\ref{df:sat}). Hence, $\neg(\phi\wedge\psi)$ is {\em unavoidable} in the next state. This implies that agent $b$ has no permitted action that admits of $\phi\wedge\psi$, and any action of agent $b$ that admits of $\phi\wedge\psi$ is permitted because no such action of agent $b$ exists. 

We write $\vdash\phi$ and say that formula $\phi$ is a {\bf\em theorem} of our logical system if it can be derived from the axioms using the following four inference rules:
\begin{enumerate}[label=IR\arabic*., ref=IR\arabic*,left=0pt]
    \item $\dfrac{\phi\, ,\hspace{3mm}\phi\to\psi}{\psi}$ (Modus Ponens); \label{ir: MP}
    \item $\dfrac{\phi\to\psi}{\WA_a\phi\to\WA_a\psi}$; \label{ir: WA}
    \item $\dfrac{\phi\to\psi}{\SA_a\psi\to\SA_a\phi}$; \label{ir: SA}
    \item $\dfrac{\phi_1\wedge\dots\wedge\phi_m\to\neg\psi_1\vee\dots\vee\neg\psi_n}{\WE_{a_1}\phi_1\wedge\dots\wedge\WE_{a_m}\phi_m\to\SE_{b_1}\psi_1\vee\dots\vee\SE_{b_n}\psi_n}$,\vspace{0.5mm}\\
    where agents $a_1,\dots,a_m,b_1,\dots,b_n$ are distinct. \label{ir: WE SE}
\end{enumerate}

Rule~\ref{ir: WA} is the {\bf monotonicity} rule for modality $\WA$. It is valid because, in each state of each transition system, the permitted action of agent $a$ that admits of $\phi$ also admits of $\psi$, as $\phi\to\psi$ is universally true. By this rule, modality $\WA$ represents a form of {\bf\em weak permission} following statement~\eqref{eq:weak permission ir}.
Rule~\ref{ir: SA} is the {\bf anti-monotonicity} rule for modality $\SA$. It is valid because, in each state of each transition system, the set of actions that admits of $\psi$ is a {\em superset} of the set of actions that admits of $\phi$, as $\phi\to\psi$ is universally true. Hence, as long as the actions in the former set are all permitted, those in the latter set are also permitted. This rule shows that modality $\SA$ represents a form of {\bf\em strong permission} following statement~\eqref{eq:strong permission ir}. 
It can be derived that modality $\WE$ represents a form of weak permission and modality $\SE$ represents a form of strong permission, see Appendix~\ref{app:SE strong ir}.

Rule~\ref{ir: WE SE} is a conflict-preventing rule following the notion of ``ensure'' in semantics. The premise says, if every one of $\phi_1,\dots,\phi_m$ is true, then at least one of $\psi_1,\dots,\psi_n$ is false. The conclusion says, for a set of distinct agents $\{a_1,\dots,a_m,b_1,\dots,b_n\}\subseteq\mathcal{A}$, if every agent $a_i$ has a permitted action to ensure $\phi_i$, then for at least one agent $b_j$, every action that ensures $\psi_j$ is permitted.
This is valid because there would be a conflict otherwise. To be specific, if there is a state $s$ where the conclusion of the rule is false, then, each agent $a_i$ has a permitted action $k_i$ that ensures $\phi_i$ and each agent $b_j$ has a non-permitted action $\ell_j$ that ensures $\psi_j$. Consider an action profile $\delta$ such that $\delta_{a_i}=k_i$ for each $i\leq m$ and $\delta_{b_j}=\ell_j$ for each $j\leq n$. By the continuity condition in item~\ref{df:transition system M} of Definition~\ref{df:transition system}, there is a state $t$ such that $(\delta,t)\in M_s$. In such state $t$, each of the formulae $\phi_1,\dots,\phi_m,\psi_1,\dots,\psi_n$ is true. However, this conflicts with the premise of the rule.

We write $X\vdash\phi$ if a formula $\phi$ can be derived from the {\em theorems} of our logical system and an additional set of assumptions $X$ using {\em only} the Modus Ponens rule. Note that statements $\varnothing\vdash\phi$ and $\vdash\phi$ are equivalent. We say that a set of formulae $X$ is {\em consistent} if $X\nvdash\bot$.

\begin{lemma}[deduction] \label{lm:deduction}
If $X,\phi\vdash \psi$, then $X\vdash\phi\to\psi$.
\end{lemma}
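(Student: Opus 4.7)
The plan is the standard Hilbert-style deduction argument, exploiting the crucial design feature that $X \vdash \psi$ is defined to allow \emph{only} Modus Ponens (\ref{ir: MP}) as an inference rule, with \ref{ir: WA}, \ref{ir: SA}, and \ref{ir: WE SE} being used exclusively to produce theorems $\vdash \chi$. This separation is what makes the deduction lemma go through cleanly; if the modal rules were admissible from arbitrary assumption sets, they would block the induction, because from $X, \phi \vdash \chi \to \chi'$ one cannot in general conclude $X \vdash \phi \to (\WA_a\chi \to \WA_a\chi')$.

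I would proceed by induction on the length $n$ of a derivation of $\psi$ from $X \cup \{\phi\}$ using only \ref{ir: MP}. For the base case, $\psi$ is either a theorem of the system or a member of $X \cup \{\phi\}$. If $\psi$ is a theorem or lies in $X$, then $X \vdash \psi$; combining this with the propositional tautology $\psi \to (\phi \to \psi)$ (which is a theorem as it is a tautology in $\Phi$) via \ref{ir: MP} yields $X \vdash \phi \to \psi$. If $\psi = \phi$, the tautology $\phi \to \phi$ is itself a theorem, so $X \vdash \phi \to \psi$ immediately.

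For the inductive step, suppose $\psi$ is obtained by \ref{ir: MP} from earlier formulae $\chi$ and $\chi \to \psi$ in the derivation. By the inductive hypothesis applied to these shorter derivations, $X \vdash \phi \to \chi$ and $X \vdash \phi \to (\chi \to \psi)$. Now invoke the propositional tautology
\[
(\phi \to \chi) \to \bigl((\phi \to (\chi \to \psi)) \to (\phi \to \psi)\bigr),
\]
which is a theorem of the system. Two applications of \ref{ir: MP} from the set $X$ then deliver $X \vdash \phi \to \psi$, completing the induction.

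The only real subtlety is bookkeeping: one must verify that the derivation witnessing $X, \phi \vdash \psi$ really is closed under \ref{ir: MP} alone, so that each step is one of the four cases handled above. There is no genuine obstacle — the modal inference rules \ref{ir: WA}, \ref{ir: SA}, \ref{ir: WE SE} never appear in derivations from assumptions, and the tautologies needed at each step belong to the axiom base. I expect the entire proof to be short and purely syntactic, with no appeal to semantics or to the specific modal axioms \ref{ax: WA bot}--\ref{ax: WA SA}.
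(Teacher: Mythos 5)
Your proof is correct and follows essentially the same route as the paper: induction along a Modus-Ponens-only derivation from $X\cup\{\phi\}$, handling the theorem/assumption/$\phi$ base cases via the tautology $\psi\to(\phi\to\psi)$ (resp.\ $\phi\to\phi$) and the MP step via the tautology $(\phi\to\chi)\to\bigl((\phi\to(\chi\to\psi))\to(\phi\to\psi)\bigr)$. Your opening remark about why restricting derivations from assumptions to \ref{ir: MP} is what makes the lemma go through is a sound observation that the paper leaves implicit.
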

\noindent See the proof of this lemma in Appendix~\ref{app:proof lm deduction}.

\begin{lemma}[Lindenbaum]\label{lm:Lindenbaum}
Any consistent set of formulae can be extended to a maximal consistent set of formulae.
\end{lemma}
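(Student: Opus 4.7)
The plan is to follow the standard Lindenbaum construction, enumerating formulae and greedily deciding each formula's membership while preserving consistency. Assuming the language $\Phi$ is countable (which follows from the countability of propositional variables implicit in the paper's setup), I would fix an enumeration $\phi_1,\phi_2,\dots$ of all formulae in $\Phi$, then define an increasing chain $X=X_0\subseteq X_1\subseteq X_2\subseteq\dots$ by stipulating
\[
X_n=\begin{cases}X_{n-1}\cup\{\phi_n\} & \text{if this set is consistent,}\\ X_{n-1}\cup\{\neg\phi_n\} & \text{otherwise,}\end{cases}
\]
and finally set $\hat X=\bigcup_{n\ge 0}X_n$. The claim is that $\hat X$ is a maximal consistent extension of $X$.

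The key lemma I would prove first, by induction on $n$, is that every $X_n$ is consistent. The base case is the hypothesis. For the inductive step, suppose $X_{n-1}$ is consistent but both $X_{n-1}\cup\{\phi_n\}$ and $X_{n-1}\cup\{\neg\phi_n\}$ are inconsistent. Then both derive $\bot$, so by the deduction lemma (Lemma~\ref{lm:deduction}) we have $X_{n-1}\vdash\neg\phi_n$ and $X_{n-1}\vdash\neg\neg\phi_n$. By Modus Ponens on the propositional tautology $\neg\neg\phi_n\to\phi_n$, we obtain $X_{n-1}\vdash\phi_n$, which with $X_{n-1}\vdash\neg\phi_n$ gives $X_{n-1}\vdash\bot$, contradicting the inductive hypothesis. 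Hence at least one of the two candidate extensions is consistent, so $X_n$ is well defined and consistent.

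Next I would verify that $\hat X$ itself is consistent: if $\hat X\vdash\bot$, then by the definition of $\vdash$ there is a finite subset $Y\subseteq\hat X$ with $Y\vdash\bot$. Since the chain is increasing and $Y$ is finite, $Y\subseteq X_n$ for some $n$, making $X_n$ inconsistent, a contradiction. Maximality is immediate from the construction: for every formula $\phi\in\Phi$, there is some $n$ with $\phi=\phi_n$, and by construction either $\phi_n\in X_n\subseteq\hat X$ or $\neg\phi_n\in X_n\subseteq\hat X$; thus any proper extension of $\hat X$ must contain some $\phi$ together with $\neg\phi$ already in $\hat X$, and hence be inconsistent.

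The only real subtlety, and what I expect to be the main point to get right, is the inductive step that preserves consistency; everything else is bookkeeping. The argument relies only on the deduction lemma and propositional reasoning inside $\vdash$, so no properties specific to the agentive-permission modalities are needed, which is why the proof can be deferred to the appendix as a routine adaptation of the classical Lindenbaum construction.
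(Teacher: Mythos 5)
Your proof is correct and is essentially the same argument the paper relies on: the paper gives no proof of its own but cites the standard Lindenbaum construction (Mendelson, Proposition 2.14), which is exactly your enumeration-and-greedy-extension argument, with the inductive consistency step resting on the deduction lemma and propositional reasoning, just as you present it. The only caveat is that countability of $\Phi$ is not actually stated in the paper (the sets of propositional variables and agents are merely ``fixed''), so strictly one should either add that assumption, replace the enumeration by a transfinite one, or invoke Zorn's lemma; this does not affect the substance of your argument.
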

\noindent The standard proof of this lemma can be found in \cite[Proposition 2.14]{mendelson2009introduction}. 

\begin{lemma}\label{lm:axiom deduction WE WA}
$\vdash\WE_a\phi\wedge\neg\WA_a\psi\to\WE_a(\phi\wedge\neg\psi)$.
\end{lemma}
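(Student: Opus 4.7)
The plan is to derive the lemma by applying Axiom~\ref{ax: WA WE} under a cleverly chosen substitution and then appealing to monotonicity of $\WA$ via IR\ref{ir: WA}. Semantically, the statement is clear: if $i$ is a permitted action of agent $a$ that ensures $\phi$, then since no permitted action admits $\psi$, action $i$ in particular must ensure $\neg\psi$, and therefore ensures $\phi\wedge\neg\psi$. My syntactic derivation will mirror this intuition but route through the contrapositive, since the existing axioms produce $\WA$-conclusions rather than $\WE$-conclusions.

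Concretely, I would first instantiate Axiom~\ref{ax: WA WE} with $\phi$ in place of $\phi$ and with $\phi\wedge\neg\psi$ in place of $\psi$, obtaining
\[
\WE_a\phi\wedge\neg\WE_a(\phi\wedge\neg\psi)\to\WA_a\bigl(\phi\wedge\neg(\phi\wedge\neg\psi)\bigr).
\]
Next, I would observe that $\phi\wedge\neg(\phi\wedge\neg\psi)\to\psi$ is a propositional tautology, so IR\ref{ir: WA} yields $\WA_a(\phi\wedge\neg(\phi\wedge\neg\psi))\to\WA_a\psi$. Chaining these two implications by propositional reasoning gives
\[
\WE_a\phi\wedge\neg\WE_a(\phi\wedge\neg\psi)\to\WA_a\psi.
\]

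Finally, taking the contrapositive in the appropriate form, namely rewriting $(A\wedge\neg B)\to C$ as $(A\wedge\neg C)\to B$, delivers exactly the desired conclusion
\[
\WE_a\phi\wedge\neg\WA_a\psi\to\WE_a(\phi\wedge\neg\psi).
\]
The whole derivation is short; the only nontrivial step is spotting the substitution $\psi\mapsto\phi\wedge\neg\psi$ in Axiom~\ref{ax: WA WE}, after which everything reduces to propositional manipulation and one application of the monotonicity rule IR\ref{ir: WA}. I do not foresee any genuine obstacle beyond that choice of substitution.
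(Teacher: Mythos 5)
Your proposal is correct and is essentially the paper's own proof: the same instance of Axiom~\ref{ax: WA WE} with $\psi\mapsto\phi\wedge\neg\psi$, the same tautology $\phi\wedge\neg(\phi\wedge\neg\psi)\to\psi$ fed through rule~\ref{ir: WA}, and the same final propositional rearrangement of $(A\wedge\neg B)\to C$ into $(A\wedge\neg C)\to B$. Only the order of the first two steps differs, which is immaterial.
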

\begin{lemma}\label{lm:axiom deduction SE SA}
$\vdash\neg\SE_a\phi\wedge\SA_a\psi\to\neg\SE_a(\phi\wedge\neg\psi)$.
\end{lemma}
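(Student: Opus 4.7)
The plan is to imitate the derivation of Lemma~\ref{lm:axiom deduction WE WA} from axiom~\ref{ax: WA WE}, substituting axiom~\ref{ax: SE SA} and inference rule~\ref{ir: SA} for axiom~\ref{ax: WA WE} and inference rule~\ref{ir: WA}.

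I would first instantiate axiom~\ref{ax: SE SA}, taking $\phi\wedge\neg\psi$ for its second parameter, to get
\[
\neg\SE_a\phi\wedge\SE_a(\phi\wedge\neg\psi)\to\neg\SA_a\bigl(\phi\wedge\neg(\phi\wedge\neg\psi)\bigr).
\]
Because $\phi\wedge\neg(\phi\wedge\neg\psi)$ is propositionally equivalent to $\phi\wedge\psi$, two applications of inference rule~\ref{ir: SA} (one in each direction of the equivalence) show that $\SA_a\bigl(\phi\wedge\neg(\phi\wedge\neg\psi)\bigr)$ is interderivable with $\SA_a(\phi\wedge\psi)$, so the displayed implication can be replaced by
\[
\neg\SE_a\phi\wedge\SE_a(\phi\wedge\neg\psi)\to\neg\SA_a(\phi\wedge\psi).
\]

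Next, from the tautology $\phi\wedge\psi\to\psi$, inference rule~\ref{ir: SA} delivers $\SA_a\psi\to\SA_a(\phi\wedge\psi)$. Chaining the contrapositive of this with the implication above gives $\neg\SE_a\phi\wedge\SE_a(\phi\wedge\neg\psi)\to\neg\SA_a\psi$, and a purely propositional rearrangement of that formula produces the desired $\neg\SE_a\phi\wedge\SA_a\psi\to\neg\SE_a(\phi\wedge\neg\psi)$.

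I expect no real obstacle: the argument is driven entirely by the substituted instance of axiom~\ref{ax: SE SA}, and the only mildly delicate step is the tautological simplification of $\phi\wedge\neg(\phi\wedge\neg\psi)$ to $\phi\wedge\psi$, which has to be carried through under the modality $\SA_a$ via two invocations of inference rule~\ref{ir: SA} rather than replaced by brute propositional reasoning.
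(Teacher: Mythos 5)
Your proposal is correct and follows essentially the same route as the paper: the same instance of axiom~\ref{ax: SE SA} with $\phi\wedge\neg\psi$, the anti-monotonicity rule~\ref{ir: SA}, and a final propositional rearrangement. The only difference is cosmetic---the paper applies rule~\ref{ir: SA} once to the tautology $\phi\wedge\neg(\phi\wedge\neg\psi)\to\psi$, whereas you factor that step through the intermediate formula $\SA_a(\phi\wedge\psi)$ using three applications of the rule.
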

\noindent See the proofs of the above two lemmas in Appendix~\ref{app:proof of axioms deduction lemma}.
The next theorem follows from the above discussion of the axioms and the inference rules.

\begin{theorem}[soundness]
If\; $\vdash\phi$, then $s\Vdash\phi$ for each state $s$ of each transition system. 
\end{theorem}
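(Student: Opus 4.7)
The plan is to proceed by induction on the length of the derivation of $\phi$. The base case requires that each axiom scheme \ref{ax: WA bot}--\ref{ax: WA SA} is valid in every state of every transition system; the inductive step requires that each of the four inference rules \ref{ir: MP}--\ref{ir: WE SE} preserves validity. Propositional tautologies and Modus Ponens are discharged by items~\ref{item:sat negation} and \ref{item:sat disjunction} of Definition~\ref{df:sat}, so I focus on the modal axioms and the three non-propositional inference rules, translating the informal justifications already given in the paper into precise semantic arguments.

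The first four axioms \ref{ax: WA bot}--\ref{ax: SA SE T} are routine unfoldings: they rest on the fact that no state satisfies $\bot$, that $\stit_a\top$ holds trivially, and that under continuity every action possesses at least one successor, together with the nonemptiness of $D_a^s$ from item~\ref{df:transition system D} of Definition~\ref{df:transition system}. Axioms \ref{ax: WA disjunction} and \ref{ax: SA conjunction} reduce directly to item~\ref{item:sat disjunction} of Definition~\ref{df:sat}: a successor satisfies $\phi\vee\psi$ iff it satisfies one of the disjuncts. Axioms \ref{ax: WA WE} and \ref{ax: SE SA} share one idea: an action that ensures $\phi$ but fails to ensure $\psi$ has a successor in which $\phi$ holds and $\psi$ fails, and so admits $\phi\wedge\neg\psi$; this witness is obtained by negating the universal quantifier in the definition of $\stit_a$.

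The most subtle axiom is \ref{ax: WA SA}, because it links two different agents. The plan is to show that the antecedent forces every successor $t$ of $s$ to satisfy $\neg(\phi\wedge\psi)$. From $\neg\WA_a\phi$, every permitted action of $a$ ensures $\neg\phi$; from $\SA_a\psi$, every non-permitted action of $a$ must ensure $\neg\psi$ (otherwise it would admit $\psi$ and hence be permitted). For any $(\delta,t)\in M_s$, the action $\delta_a$ is either permitted or not, giving $t\Vdash\neg\phi$ or $t\Vdash\neg\psi$. Consequently no action of $b$ can admit $\phi\wedge\psi$, yielding $\neg\WA_b(\phi\wedge\psi)$, and $\SA_b(\phi\wedge\psi)$ holds vacuously. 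The monotonicity rules \ref{ir: WA} and \ref{ir: SA} are then routine: a universally valid implication $\phi\to\psi$ enlarges or respectively shrinks the set of admitting actions at every state in the required direction.

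The main obstacle is \ref{ir: WE SE}, since it quantifies jointly over distinct agents and must invoke continuity in a delicate way. The approach is by contraposition: assume at some state $s$ of some transition system that $s\Vdash\WE_{a_i}\phi_i$ for every $i\le m$ and $s\Vdash\neg\SE_{b_j}\psi_j$ for every $j\le n$. Extract witnessing actions $k_i\in D_{a_i}^s$ with $(s,k_i)\stit_{a_i}\phi_i$ and $\ell_j\in\Delta_{b_j}^s\setminus D_{b_j}^s$ with $(s,\ell_j)\stit_{b_j}\psi_j$. Since the agents $a_1,\dots,a_m,b_1,\dots,b_n$ are pairwise distinct, I can assemble an action profile $\delta\in\prod_{c\in\mathcal{A}}\Delta_c^s$ whose components agree with $k_i$ and $\ell_j$, filling in the remaining coordinates from their nonempty action sets. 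Continuity then supplies some $(\delta,t)\in M_s$, and by construction $t$ simultaneously satisfies every $\phi_i$ and every $\psi_j$, contradicting the universally valid premise $\bigwedge_i\phi_i\to\bigvee_j\neg\psi_j$ of the rule. The distinctness assumption is essential, because without it a single agent might have been required to play two different actions at once, blocking the construction of $\delta$.
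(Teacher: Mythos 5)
Your proposal is correct and follows essentially the same route as the paper: the paper's proof is exactly the induction-on-derivations argument whose content is the validity discussion of axioms \ref{ax: WA bot}--\ref{ax: WA SA} and rules \ref{ir: WA}--\ref{ir: WE SE} given before the theorem, including the successor-case analysis for axiom~\ref{ax: WA SA} and the continuity-based profile construction for rule~\ref{ir: WE SE}. You have merely made those semantic arguments explicit, so there is nothing substantive to add.
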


\section{Completeness}\label{sec:completeness}

In this section, we prove the strong completeness of our logical system. As usual, at the core of the completeness theorem is the canonical model construction. In our case, it is a canonical transition system.

\subsection{Canonical Transition System}

In this subsection, we define the canonical transition system $(S,\Delta,D,M,\pi)$.

\begin{definition}\label{df:canonical S}
Set $S$ of states is the family of all maximal consistent subsets of our language $\Phi$.
\end{definition}

For each formula $\phi\in\Phi$, we introduce two actions: a permitted action $\phi^+$ and a non-permitted action $\phi^-$. Formally, by $\phi^+$ and $\phi^-$ we mean the pairs $(\phi,+)$ and $(\phi,-)$, respectively.
By item~\ref{item:sat SA} of Definition~\ref{df:sat}, the formula $\SA_a\top$ expresses that agent $a$ is permitted to use {\em every} action. In other words, if $\SA_a\top$ is true, then there are no non-permitted actions available to agent $a$ in the current state. This explains the intuition behind the following definition.

\begin{definition}\label{df:canonical action space & D}
For each state $s\in S$ and each agent $a\in\mathcal{A}$,
\begin{itemize}
\item[] $\Delta^s_a=
\begin{cases}
    \{\phi^+\;|\;\phi\in\Phi\}, & \text{if\; $\SA_a\top \in s$},\\
    \{\phi^+,\phi^-\;|\;\phi\in\Phi\}, & \text{otherwise;}
\end{cases}$
\item[] $D^s_a=\{\phi^+\;|\;\phi\in \Phi\}.$
\end{itemize}
\end{definition}

The next definition is the key part of the canonical transition system construction. It specifies the mechanism of the transition system. 
Recall that $\neg\WA_a\phi$ means that agent $a$ is not permitted to use any action that admits of $\phi$. 
Hence, each permitted action of $a$ must ensure $\neg\phi$. We capture this rule in item~\ref{dfitem:canonical mechanism 1} of the definition below. 
Recall that $\WE_a\phi$ means that agent $a$ has at least one permitted action that ensures $\phi$. In the canonical transition system, this action is defined to be $\phi^+$. The rule captured in item~\ref{dfitem:canonical mechanism 2} below guarantees $\phi$ in the next state whenever agent $a$ uses action $\phi^+$. 
Next, $\neg\SE_a\phi$ means that agent $a$ is not permitted to use at least one action that ensures $\phi$. We denote such action by $\phi^-$. The rule captured by item~\ref{dfitem:canonical mechanism 3} stipulates that action $\phi^-$ ensures $\phi$. 
Finally, $\SA_a\phi$ means that agent $a$ is permitted to use all actions that admit of $\phi$. In other words, $\SA_a\phi$ means that all non-permitted actions ensure $\neg\phi$. This is captured by item~\ref{dfitem:canonical mechanism 4} below.

\begin{definition}\label{df:canonical M}
$(\delta,t)\in M_s$ when for each agent $a\in\mathcal{A}$ and each formula $\phi\in\Phi$,
\begin{enumerate}
\item if $\delta_a\in D_a^s$ and $\WA_a\phi\notin s$, then $\neg\phi\in t$;\label{dfitem:canonical mechanism 1}
\item if $\delta_a=\phi^+$ and $\WE_a\phi\in s$, then $\phi\in t$;\label{dfitem:canonical mechanism 2}
\item if $\delta_a=\phi^-$ and $\SE_a\phi\notin s$, then $\phi\in t$;\label{dfitem:canonical mechanism 3}
\item if $\delta_a\in \Delta_a^s\setminus D_a^s$ and $\SA_a\phi\in s$, then $\neg\phi\in t$.\label{dfitem:canonical mechanism 4}
\end{enumerate}
\end{definition}
Note that, each state $s$ is a maximal consistent set by Defintion~\ref{df:canonical S}. Hence, for item~1 above, the statement $\WA_a\phi\notin s$ is equivalent to $\neg\WA_a\phi\in s$. The same goes for item~3.


\begin{definition}\label{df:canonical pi}
$\pi(p)=\{s\in S\;|\;p\in s\}$ for each $p$.    
\end{definition}

This concludes the definition of the canonical transition system $(S,\Delta,D,M,\pi)$. Next, we show that it satisfies the {\em continuity} condition in item~4 of Definition~\ref{df:transition system}. 

\begin{lemma}[continuity]
For each state $s\in S$ and each action profile $\delta\in\prod_{a\in\mathcal{A}}\Delta_a^s$, there is a state $t$ such that $(\delta,t)\in M_s$. 
\end{lemma}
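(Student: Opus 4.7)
I would use the standard canonical-model argument: define
$$X = X_1 \cup X_2 \cup X_3 \cup X_4,$$
where $X_i$ is the set of formulae that clause~$i$ of Definition~\ref{df:canonical M} forces into any suitable $t$, show $X$ is consistent, and apply Lemma~\ref{lm:Lindenbaum}. Concretely, $X_1 = \{\neg\phi \mid a\in\mathcal{A},\, \delta_a \in D_a^s,\, \WA_a\phi \notin s\}$, $X_2 = \{\phi \mid a\in\mathcal{A},\, \delta_a = \phi^+,\, \WE_a\phi \in s\}$, $X_3 = \{\phi \mid a\in\mathcal{A},\, \delta_a = \phi^-,\, \SE_a\phi \notin s\}$, and $X_4 = \{\neg\phi \mid a\in\mathcal{A},\, \delta_a \in \Delta_a^s\setminus D_a^s,\, \SA_a\phi \in s\}$. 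Any maximal consistent $t \supseteq X$ then satisfies $(\delta,t) \in M_s$ by direct inspection of Definition~\ref{df:canonical M}.

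The heart of the proof is the consistency of $X$. Suppose, for contradiction, that some finite subset of $X$ has inconsistent conjunction. Partition the agents featured in that subset into $A_1 = \{a \mid \delta_a \in D_a^s\}$ (contributing to $X_1 \cup X_2$) and $A_2 = \{a \mid \delta_a \notin D_a^s\}$ (contributing to $X_3 \cup X_4$); these sets are disjoint, so the agents in $A_1 \cup A_2$ are pairwise distinct, as required by rule~\ref{ir: WE SE}. For each $a \in A_1$, package $a$'s contributions into a single formula $\mu_a = \phi_a \wedge \bigwedge_i \neg\psi_a^i$, where $\phi_a$ is the unique formula with $\delta_a = \phi_a^+$ if $\WE_a\phi_a \in s$ and $\phi_a = \top$ otherwise, and the $\psi_a^i$'s are the $X_1$-contributions of $a$ used in the subset. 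Symmetrically, for $a \in A_2$, set $\nu_a = \phi_a \wedge \bigwedge_j \neg\chi_a^j$, with $\phi_a = \top$ when $\SE_a\phi_a \in s$. The assumed joint inconsistency rephrases as the theorem $\vdash \bigwedge_{a \in A_1} \mu_a \to \bigvee_{a \in A_2} \neg\nu_a$.

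I would then establish $\WE_a\mu_a \in s$ for every $a \in A_1$: seed with $\WE_a\phi_a \in s$ (or $\WE_a\top \in s$ via Axiom~\ref{ax:WE T} when $\phi_a = \top$), then iteratively absorb each $\neg\psi_a^i$ using $\neg\WA_a\psi_a^i \in s$ and Lemma~\ref{lm:axiom deduction WE WA}. The mirror argument yields $\neg\SE_a\nu_a \in s$ for every $a \in A_2$: membership in $A_2$ forces $\SA_a\top \notin s$ by Definition~\ref{df:canonical action space & D}, so the contrapositive of Axiom~\ref{ax: SA SE T} delivers $\neg\SE_a\top \in s$, and iterated use of Lemma~\ref{lm:axiom deduction SE SA} together with the assumptions $\SA_a\chi_a^j \in s$ builds the conjunction inside $\neg\SE_a$. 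Applying rule~\ref{ir: WE SE} to the displayed theorem produces $\bigwedge_{a \in A_1}\WE_a\mu_a \to \bigvee_{a \in A_2}\SE_a\nu_a$, whose antecedent lies in $s$; hence some $\SE_a\nu_a \in s$, contradicting $\neg\SE_a\nu_a \in s$.

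The main obstacle is the bookkeeping that collapses the per-agent, per-item contributions into the single conjunctive formula expected by rule~\ref{ir: WE SE}, together with the two degenerate corners when item~2 (resp.~item~3) of Definition~\ref{df:canonical M} contributes nothing for some agent --- both handled by padding with $\top$ and seeding the iteration with Axiom~\ref{ax:WE T} (resp.~the contrapositive of Axiom~\ref{ax: SA SE T}).
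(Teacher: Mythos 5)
Your proposal is correct and follows essentially the same route as the paper: the same canonical set of forced formulae, a consistency argument built on Axiom~\ref{ax:WE T}, the contrapositive of Axiom~\ref{ax: SA SE T}, Lemmas~\ref{lm:axiom deduction WE WA} and \ref{lm:axiom deduction SE SA}, rule~\ref{ir: WE SE} applied to the per-agent packaged formulae, and a Lindenbaum extension. The only (harmless) difference is bookkeeping: you absorb the $\neg\psi$'s and $\neg\chi$'s by iterating Lemmas~\ref{lm:axiom deduction WE WA}/\ref{lm:axiom deduction SE SA}, whereas the paper first aggregates them via Axioms~\ref{ax: WA disjunction} and \ref{ax: SA conjunction} (with Axioms~\ref{ax: WA bot} and \ref{ax: SA bot} for the empty cases) and applies each lemma once.
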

\begin{proof}
Consider a partition $\{A,B\}$ of the set $\mathcal{A}$ of agents:
\begin{align}
&A=\{a\in\mathcal{A}\,|\,\delta_{a}\in D_{a}^s\};\label{eq:4-July-11}\\
&B=\{b\in\mathcal{A}\,|\,\delta_{b}\in \Delta_{b}^s\setminus D_{b}^s\}.\label{eq:4-July-12}
\end{align}
Then, $\SA_{b}\top\notin s$ for each agent $b\in B$ by Definitions~\ref{df:canonical action space & D}. Hence, $\neg\SA_{b}\top\in s$ for each agent $b\in B$ because $s$ is a maximal consistent set. 
Then, by the contrapositive of axiom~\ref{ax: SA SE T},
\begin{equation}\label{eq:continuity 0}
\neg\SE_{b}\top\in s.
\end{equation}
Consider the set of formulae
\begin{equation}\label{eq:continuity a}
\begin{aligned}
X=&\{\neg\psi\;|\;\exists\, a\in A \,(\neg\WA_a\psi\in s)\} \\
&\cup \{\sigma\;|\;\exists\, a\in A \,(\delta_a=\sigma^+, \WE_a\sigma\in s)\}\\
&\cup \{\neg\chi\;|\;\exists\, b\in B \,(\SA_b\chi\in s)\}\\
&\cup \{\tau\;|\;\exists\, b\in B \,(\delta_b=\tau^-, \neg\SE_b\tau\in s)\}.
\end{aligned}
\end{equation}

\begin{claim}
Set $X$ is consistent.
\end{claim}
\begin{proof-of-claim}
Suppose the opposite. Then, by axiom~\ref{ax:WE T}, statements~\eqref{eq:continuity 0} and \eqref{eq:continuity a}, there are formulae
\begin{equation}\label{eq:4-July-1}
\begin{aligned}
\hspace{8mm}\neg&\WA_{a_1}\psi_{11},\dots,\neg\WA_{a_1}\psi_{1k_1}, \WE_{a_1}\widehat{\sigma}_1\in s,\\
&\cdots\\
\neg&\WA_{a_m}\psi_{m1},\dots,\neg\WA_{a_m}\psi_{mk_m}, \WE_{a_m}\widehat{\sigma}_m \in s,
\end{aligned}
\end{equation}
and
\begin{equation}\label{eq:4-July-8}
\begin{aligned}
&\SA_{b_1}\chi_{11},\dots,\SA_{b_1}\chi_{1\ell_1}, \neg\SE_{b_1}\widehat{\tau}_1\in s,\\
&\cdots\\
&\SA_{b_n}\chi_{n1},\dots,\SA_{b_n}\chi_{n\ell_n}, \neg\SE_{b_n}\widehat{\tau}_n\in s,
\end{aligned}
\end{equation}
where
\begin{equation}
a_1,\dots,a_m,b_1,\dots,b_n \text{ are distinct agents},\label{eq:4-July-4}
\end{equation}
\begin{equation*}
\widehat{\sigma}_i=
\begin{cases}
    \sigma_i, &\text{if } \delta_{a_i}=\sigma_i^+ \text{ and } \WE_{a_i}\sigma_i\in s,\\
    \top, &\text{otherwise,}
\end{cases}
\end{equation*}
for each $i\leq m$, and
\begin{equation}\notag
\widehat{\tau}_i=
\begin{cases}
    \tau_i,&\text{if } \delta_{b_i}=\tau_i^- \text{ and } \neg\SE_{b_i}\tau_i\in s,\\
    \top, &\text{otherwise,}
\end{cases}
\end{equation}
for each $i\leq n$, such that
\begin{equation}\label{eq:4-July-3}
\bigwedge_{i\leq m}\Big(\widehat{\sigma}_{i}\wedge\bigwedge_{j\leq k_i}\!\!\!\neg\psi_{ij}\Big)\wedge
\bigwedge_{i\leq n}\Big(\widehat{\tau}_i\wedge\bigwedge_{j\leq \ell_i}\!\!\!\neg\chi_{ij}\Big) \vdash \bot.
\end{equation}
By multiple application of the countrapositive of axiom~\ref{ax: WA disjunction} and propositional reasoning, statement~\eqref{eq:4-July-1} implies that
\begin{equation}\label{eq:4-July-2}
s\vdash \neg\WA_{a_i} \Big(\bigvee_{j\leq k_i}\psi_{ij}\Big)\text{ for each $i\leq m$.}
\end{equation}
Note that, in the specific case where $k_i=0$, statement~\eqref{eq:4-July-2} follows directly from axiom~\ref{ax: WA bot}.
By Lemma~\ref{lm:axiom deduction WE WA}, statement~\eqref{eq:4-July-2} and the part $\WE_{a_i}\widehat{\sigma}_i\in s$ of statement~\eqref{eq:4-July-1} imply
\begin{equation}\label{eq:4-July-7}
s\vdash \WE_{a_i}\Big(\widehat{\sigma}_i\wedge\neg\bigvee_{j\leq k_i}\psi_{ij}\Big)\text{ for each $i\leq m$.}
\end{equation}
Meanwhile, by multiple application of Lemma~\ref{lm:deduction} and propositional reasoning, statement~\eqref{eq:4-July-3} can be reformulated to
\begin{equation}\label{eq:4-July-6}
\vdash \bigwedge_{i\leq m}\Big(\widehat{\sigma}_{i}\wedge\neg\bigvee_{j\leq k_i}\psi_{ij}\Big)\to
\bigvee_{i\leq n}\neg\Big(\widehat{\tau}_i\wedge\bigwedge_{j\leq \ell_i}\neg\chi_{ij}\Big).
\end{equation}
  By statement~\eqref{eq:4-July-4} and rule~\ref{ir: WE SE}, statement~\eqref{eq:4-July-6} implies
\begin{equation}\notag
\vdash \bigwedge_{i\leq m}\WE_{a_i}\Big(\widehat{\sigma}_{i}\wedge\neg\bigvee_{j\leq k_i}\psi_{ij}\Big)\to
\bigvee_{i\leq n}\SE_{b_i}\Big(\widehat{\tau}_i\wedge\bigwedge_{j\leq \ell_i}\neg\chi_{ij}\Big).
\end{equation}
Then, by statement~\eqref{eq:4-July-7} and the Modus Ponens rule,
\begin{equation}\label{eq:4-July-10}
s\vdash \bigvee_{i\leq n}\SE_{b_i}\Big(\widehat{\tau}_i\wedge\neg\bigvee_{j\leq\ell_i}\chi_{ij}\Big).
\end{equation}
At the same time, by multiple application of axiom~\ref{ax: SA conjunction} and propositional reasoning, statement~\eqref{eq:4-July-8} implies
\begin{equation}\label{eq:4-July-9}
s\vdash \SA_{b_i}\Big(\bigvee_{j\leq \ell_i}\chi_{ij}\Big)\text{ for each $i\leq n$}.
\end{equation}
Note that, in the specific case where $\ell_i=0$, statement~\eqref{eq:4-July-9} follows directly from axiom~\ref{ax: SA bot}.
By Lemma~\ref{lm:axiom deduction SE SA}, statement~\eqref{eq:4-July-9} and the part $\neg\SE_{b_i}\widehat{\tau}_i\in s$ of statement~\eqref{eq:4-July-8} imply
\begin{equation}\notag
s\vdash \neg\SE_{b_i}\Big(\widehat{\tau}_i\wedge\neg\bigvee_{j\leq \ell_i}\chi_{ij}\Big)\text{ for each $i\leq n$,}
\end{equation}
which contradicts statement~\eqref{eq:4-July-10}.
\end{proof-of-claim}

Let $t$ be any maximal consistent extension of set $X$. By Lemma~\ref{lm:Lindenbaum}, such $t$ must exist. Hence, $t\in S$ by Definition~\ref{df:canonical S}.
\begin{claim}\label{cl:continuity 2}
$(\delta,t)\in M_s$.
\end{claim}
\begin{proof-of-claim}
It suffices to verify that conditions \ref{dfitem:canonical mechanism 1} -- \ref{dfitem:canonical mechanism 4} of Definition~\ref{df:canonical M} are satisfied for the tuple $(\delta,t)$ for each agent $x\in\mathcal{A}$. 
Recall that sets $A$ and $B$ form a partition of the agent set $\mathcal{A}$. Hence, it suffices to consider the following two cases:

\vspace{0.5mm}\noindent{\em Case 1}: $x\in A$. Condition~\ref{dfitem:canonical mechanism 1} of Definition~\ref{df:canonical M} follows from line~1 of statement~\eqref{eq:continuity a} because $X\subseteq t$. Condition~\ref{dfitem:canonical mechanism 2} follows from line~2 of statement~\eqref{eq:continuity a}. Conditions~\ref{dfitem:canonical mechanism 3} and \ref{dfitem:canonical mechanism 4} trivially follow from $\delta_x\in D_x^s$ by statement~\eqref{eq:4-July-11}.

\vspace{0.5mm}\noindent{\em Case 2}: $x\in B$. Conditions~\ref{dfitem:canonical mechanism 1} and \ref{dfitem:canonical mechanism 2} of Definition~\ref{df:canonical M} trivially follow from $\delta_x\notin D_x^s$ by statement~\eqref{eq:4-July-12}. Condition~\ref{dfitem:canonical mechanism 3} follows from line~4 of statement~\eqref{eq:continuity a} because $X\subseteq t$. Condition~\ref{dfitem:canonical mechanism 4} follows from line~3 of statement~\eqref{eq:continuity a}.
\end{proof-of-claim}

The statement of this lemma follows from Claim~\ref{cl:continuity 2}.
\end{proof}

\subsection{Strong Completeness Theorem}\label{sec:completeness theorem}

As usual, at the core of the proof of completeness is a truth lemma proven by induction on the structural complexity of a formula. In our case, it is the Lemma~\ref{lm:truth lemma}. The completeness result, as shown in Theorem~\ref{th:completeness}, is proved with Lemma~\ref{lm:truth lemma} in the standard way. We put the formal proofs in Appendix~\ref{app:proof of truth lemma}.

\begin{lemma}\label{lm:truth lemma}
$s\Vdash \phi$ if and only if $\phi\in s$ for each state $s$ of the canonical transition system and each formula $\phi\in\Phi$.
\end{lemma}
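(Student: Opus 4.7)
The plan is induction on the structural complexity of $\phi$. The base case $\phi = p$ is immediate from Definition~\ref{df:canonical pi} and item~\ref{item:sat propositional} of Definition~\ref{df:sat}. The Boolean cases reduce to the inductive hypothesis using the standard properties of maximal consistent sets ($\neg\psi \in s$ iff $\psi \notin s$, and $\psi \vee \chi \in s$ iff $\psi \in s$ or $\chi \in s$).

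For each of the four modal cases, one direction can be read off almost directly from the corresponding clause of Definition~\ref{df:canonical M}. If $\WA_a\phi \notin s$, clause~\ref{dfitem:canonical mechanism 1} forces $\neg\phi \in t$ for every transition $(\delta,t) \in M_s$ with $\delta_a \in D_a^s$, so $s \nVdash \WA_a\phi$ by the inductive hypothesis. If $\WE_a\phi \in s$, the permitted action $\phi^+$ together with clause~\ref{dfitem:canonical mechanism 2} and the continuity lemma gives $(s,\phi^+) \stit_a \phi$. Symmetrically, if $\SE_a\phi \notin s$, the non-permitted action $\phi^-$ together with clause~\ref{dfitem:canonical mechanism 3} witnesses $s \nVdash \SE_a\phi$; and if $\SA_a\phi \in s$, clause~\ref{dfitem:canonical mechanism 4} delivers $s \Vdash \SA_a\phi$ directly. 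For the $\SE_a$ subcase I would first dispose of the degenerate possibility $\SA_a\top \in s$, in which $\Delta_a^s = D_a^s$ and the universal condition on non-permitted actions holds vacuously.

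The opposite direction in each of the four modal cases requires constructing a witness state $t$, reached from $s$ via a prescribed action of agent $a$, whose maximal consistent theory contains a designated formula (either $\phi$ or $\neg\phi$). Following the template of the continuity lemma, I would collect into a set $X$ all formulae forced into $t$ by the four clauses of Definition~\ref{df:canonical M} for the chosen profile $\delta$, adjoin the designated formula, prove that the resulting set is consistent, and invoke Lemma~\ref{lm:Lindenbaum} to obtain a maximal consistent $t \supseteq X$. Verification that $(\delta,t) \in M_s$ then proceeds exactly as in Claim~\ref{cl:continuity 2}.

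The main obstacle is the consistency argument in each of these four witness-construction subcases, and this is precisely where the interaction axioms~\ref{ax: WA WE}, \ref{ax: SE SA}, and~\ref{ax: WA SA}, together with rule~\ref{ir: WE SE} and Lemmas~\ref{lm:axiom deduction WE WA} and~\ref{lm:axiom deduction SE SA}, do the real work. Supposing the augmented set is inconsistent would, through repeated applications of Lemma~\ref{lm:deduction} and propositional reasoning, collapse to a sequent whose premise can be fed into rule~\ref{ir: WE SE} exactly as in the continuity proof; the resulting $\WE$/$\SE$ implication, combined with the appropriate interaction axiom, would contradict the standing hypothesis on $s$ (respectively, $\WA_a\phi \in s$, $\WE_a\phi \notin s$, $\SE_a\phi \in s$, or $\SA_a\phi \notin s$). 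The four subcases differ in which single interaction axiom supplies the decisive step, but all four reuse the same bookkeeping developed in the proof of the continuity lemma.
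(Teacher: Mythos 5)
Your skeleton matches the paper's: induction on structural complexity, with one direction of each modal case read off the corresponding clause of Definition~\ref{df:canonical M}, and the other direction established by building a witness state via Lemma~\ref{lm:Lindenbaum} (the paper isolates these constructions as Lemmas~\ref{lm:auxiliar WA}--\ref{lm:auxiliar SA}). The gap is in how you build the witness. You propose to fix the whole profile $\delta$ first and then ``collect the formulae forced into $t$ by the four clauses for the chosen profile.'' But only $\delta_a$ is prescribed by the case at hand; for every other agent $c$ with $\neg\SA_c\top\in s$ both a permitted and a non-permitted action are available, and no up-front choice is safe. For instance, in the $\WA$ case with $\WA_a\phi\in s$ one may also have $\neg\WA_b\phi\in s$: giving $b$ a permitted action forces $\neg\phi$ into $X$ alongside $\phi$, while giving $b$ a non-permitted action forces $\neg\sigma$ for every $\SA_b\sigma\in s$, which can clash just as badly; and with several such agents the choices interact, so even ``some choice works per agent'' does not hand you a consistent simultaneous assignment, let alone tell you which one to take. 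The paper's proof resolves this with a device your sketch lacks: it puts into $X$ only the combined formulae $\neg(\tau\wedge\sigma)$ for all pairs $\neg\WA_c\tau\in s$, $\SA_c\sigma\in s$, builds $t$ first, and only then chooses $\delta_c\in\{\top^+,\top^-\}$ using the claim that for each such $c$ at least one uniform option ($\neg\tau\in t$ for all such $\tau$, or $\neg\sigma\in t$ for all such $\sigma$) holds in $t$. This post-hoc selection is the essential missing idea.

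The second problem is the machinery you predict for the consistency arguments. Rule~\ref{ir: WE SE} and Lemmas~\ref{lm:axiom deduction WE WA}, \ref{lm:axiom deduction SE SA} are the tools of the continuity lemma, where every agent contributes a $\WE$/$\SE$ fact; they are not what drives the four witness lemmas here. In those lemmas the standing hypothesis is a $\WA$/$\WE$/$\SE$/$\SA$ fact about the single agent $a$, and feeding the supposed inconsistency into rule~\ref{ir: WE SE} yields a disjunction of $\WE$/$\SE$ statements for distinct agents that has nothing to latch onto (most visibly in the $\WA$ and $\SA$ cases). What the paper actually uses is rule~\ref{ir: WA} or \ref{ir: SA} with axioms~\ref{ax: WA disjunction}/\ref{ax: SA conjunction} (and \ref{ax: WA bot}/\ref{ax: SA bot} for the empty case), axiom~\ref{ax: WA WE} or \ref{ax: SE SA} in the $\WE$/$\SE$ cases, and, decisively, axiom~\ref{ax: WA SA} in the derived form $\neg\WA_a\phi\wedge\SA_a\top\to\neg\WA_b\phi\wedge\SA_b\phi$ (Lemma~\ref{lm:axiom deduction WA SA}), which transfers the other agents' constraints onto agent $a$ so that the contradiction is with the hypothesis about $a$ itself. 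Finally, your disposal of the degenerate case $\SA_a\top\in s$ in the $\SE$ direction is backwards: there you assume $\SE_a\phi\notin s$, and observing that the condition on non-permitted actions holds vacuously only shows $s\Vdash\SE_a\phi$, the opposite of what that case needs; you must instead refute the case syntactically, e.g.\ by deriving $\SA_a\top\to\SE_a\phi$ (obtainable from rule~\ref{ir: WE SE} with an empty antecedent list together with Lemma~\ref{lm:axiom deduction SE SA}).
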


\begin{theorem}[strong completeness]\label{th:completeness}
For each set of formulae $X\subseteq \Phi$ and each formula $\phi\in\Phi$ such that $X\nvdash\phi$, there is a state $s$ of a transition system such that $s\Vdash\chi$ for each $\chi\in X$ and $s\nVdash\phi$.    
\end{theorem}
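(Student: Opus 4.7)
The plan is to follow the standard Henkin-style argument, leveraging the canonical transition system built in Section~\ref{sec:completeness} together with the truth lemma (Lemma~\ref{lm:truth lemma}) that is explicitly permitted as a black box here. Given the hypothesis $X \nvdash \phi$, the goal is to produce a maximal consistent set extending $X \cup \{\neg\phi\}$ and use it as the sought state.

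First I would show that the set $X \cup \{\neg\phi\}$ is consistent. Suppose for contradiction that $X, \neg\phi \vdash \bot$. By the deduction lemma (Lemma~\ref{lm:deduction}), this yields $X \vdash \neg\phi \to \bot$, and propositional reasoning together with Modus Ponens then gives $X \vdash \phi$, contradicting the hypothesis $X \nvdash \phi$. Hence $X \cup \{\neg\phi\}$ is consistent.

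Next, I would invoke the Lindenbaum lemma (Lemma~\ref{lm:Lindenbaum}) to extend $X \cup \{\neg\phi\}$ to a maximal consistent set $s \subseteq \Phi$. By Definition~\ref{df:canonical S}, the set $s$ is a state of the canonical transition system. Finally, applying the truth lemma (Lemma~\ref{lm:truth lemma}), for every $\chi \in X \subseteq s$ we have $s \Vdash \chi$, and from $\neg\phi \in s$ we obtain $s \Vdash \neg\phi$, i.e.\ $s \nVdash \phi$, which is exactly what the theorem demands.

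There is no real obstacle in this step: all the hard work has been absorbed into the canonical construction, the continuity lemma, and in particular the truth lemma whose inductive cases for the four modalities $\WA$, $\WE$, $\SE$, $\SA$ are where the interplay of the axioms truly shows up. The completeness theorem itself is essentially a two-line assembly of the deduction lemma, Lindenbaum, and the truth lemma; the only thing worth being careful about is writing out the reduction from $X \nvdash \phi$ to the consistency of $X \cup \{\neg\phi\}$ cleanly, since the derivability relation $\vdash$ from assumptions is defined using only Modus Ponens and hence behaves as expected via Lemma~\ref{lm:deduction}.
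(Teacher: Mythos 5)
Your proposal is correct and matches the paper's own proof: both reduce $X\nvdash\phi$ to the consistency of $X\cup\{\neg\phi\}$, apply Lemma~\ref{lm:Lindenbaum} to obtain a maximal consistent state $s$ of the canonical transition system, and conclude via Lemma~\ref{lm:truth lemma}. Your explicit use of the deduction lemma to justify the consistency step is a detail the paper leaves implicit, but it is the same argument.
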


\section{Conclusion and Future Research}

We are the first to classify the agentive permissions in multiagent settings into permissions to ensure and permissions to admit and cross-discuss them in both weak and strong forms. To do this, we propose and formalise four forms of agentive permissions in multiagent transition systems, analyse the time complexity of the model checking algorithm, prove their semantical undefinability through each other, and give a sound and complete logical system that reveals their interplay.

Future research could be in two directions. One is to extend the deontic constraints from one-step actions to multi-step actions. 
Indeed, multi-step deontic constraints are commonly seen in application scenarios. For example, if a child is permitted to eat only one ice cream per day, then whether to eat an ice cream in the morning affects whether she is permitted to eat one in the afternoon.
This is closely related to conditional norms (\textit{e.g.} conditional obligations) discussed in the literature~\cite{van1973logic,chellas1974conditional,decew1981conditional,rulli2020conditional} but is interpreted in multiagent transition systems instead of possible-world semantics.
The other direction could be the interaction between permission and responsibility in multiagent settings.
It might have been noticed that our introductory example about factories and fish is a variant of \cite[Example 3.11]{halpern2015modification} and \cite[Example 6.2.5]{halpern2016actual}, which talk about causality and responsibility in multiagent settings. Indeed, the connection between obligation and responsibility in linguistic intuition has already been noticed by philosophers \cite{poel2011relation}. However, a formal investigation of the interaction among norm, causation, and responsibility in multiagent settings is still lacking.

\clearpage

\section*{Acknowledgements}
I would like to express my sincere gratitude to Dr. Pavel Naumov, my PhD supervisor, for his selfless help in completing this paper. The research is funded by the China Scholarship Council (CSC No.202206070014).

\bibliography{this}

\clearpage

\appendix

\begin{center}
\bf \huge Technical Appendix
\vspace{2mm}
\end{center}

\section{Model Checking}\label{app:direct model checking}

We consider the global model checking problem in finite transition systems.
Specifically, let us consider an arbitrary finite transition system $(S,\Delta,D,M,\pi)$.
By Definition~\ref{df:sat}, for each formula $\phi\in\Phi$, the calculation of the truth set $\[\phi\]$ uses a \textit{recursive} process on the structural complexity of $\phi$:
\begin{enumerate}
\item if $\phi$ is a propositional variable, then $\[\phi\]=\pi(\phi)$;
\item if $\phi=\neg\psi$, then $\[\phi\]=S\setminus\[\psi\]$;
\item if $\phi=\psi_1\vee\psi_2$, then $\[\psi\]=\[\psi_1\]\cup\[\psi_2\]$;
\item if $\phi=\WA_a\psi$, then $\[\phi\]=\[\WA_a\psi\]$ using Algorithm~\ref{alg:truth set WA};
\item if $\phi=\WE_a\psi$, then $\[\phi\]=\[\WE_a\psi\]$ using Algorithm~\ref{alg:truth set WE};
\item if $\phi=\SE_a\psi$, then $\[\phi\]=\[\SE_a\psi\]$ using Algorithm~\ref{alg:truth set SE};
\item if $\phi=\SA_a\psi$, then $\[\phi\]=\[\SA_a\psi\]$ using Algorithm~\ref{alg:truth set SA}.
\end{enumerate}

Denote by $|S|$ the number of the states in the transition system, by $|\Delta|=\sum_{s\in S}\sum_{a\in\mathcal{A}}|\Delta_a^s|$ the size of the action space, and by $|M|=\sum_{s\in S}|M_s|$ the size of the mechanism.
We analyse the computational {\bf\em time} complexity of model checking.
Note that, for a finite set $S$ of states, a subset of the states can be represented with a Boolean array of the size $O(|S|)$. Then, the set operations that determine whether a set contains an element, add an element to a set, or remove an element from a set take $O(1)$. The other set operations (union, intersection, difference and complement) and determining the subset relation take $O(|S|)$. The same is true for a finite action space.

Since the size of the truth set $\[\phi\]$ is $O(|S|)$ for each formula $\phi\in\Phi$, the computational complexity of each instance of the first three types of recursive steps mentioned above is $O(|S|)$. Next, we analyse the computational complexity of the rest four types of recursive steps.

\textbf{Algorithm~\ref{alg:truth set WA}} shows the pseudocode for calculating the truth set $\[\WA_a\psi\]$, given the transition system, the agent $a$, and the truth set $\[\psi\]$.
Note that, for each state $s\in S$, if a tuple $(\delta,t)\in M_s$ such that $t\Vdash\psi$ exists, then $(s,\delta_a)\nstit_a\neg\psi$ by Definition~\ref{df:sat}; if $\delta_a\in D_a^s$ is also true, then $s\Vdash\WA_a\psi$ by item~\ref{item:sat WA} of Definition~\ref{df:sat}.
Algorithm~\ref{alg:truth set WA} first defines an empty set $Collector$ (line~\ref{alg:line 1-1}) to collect all states satisfying $\WA_a\psi$. Then, for each state $s$ (line~\ref{alg:line 1-2}), the algorithm searches for a tuple $(\delta,t)\in M_s$ (line~\ref{alg:line 1-3}) such that $\delta_a\in D_a^s$ and $t\Vdash\psi$ (line~\ref{alg:line 1-4}). If such a tuple is found, then $s\Vdash\WA_a\psi$. Thus, the algorithm adds state $s$ into the set $Collector$ (line~\ref{alg:line 1-5}) and goes to check the next state (line~\ref{alg:line 1-6}).

Note that, the set $Collector$ defined in line~\ref{alg:line 1-1} is represented by a Boolean array of the size $O(|S|)$. Thus, the execution of line~\ref{alg:line 1-1} takes $O(|S|)$.
Also, the {\bf if} statement in line~\ref{alg:line 1-4} is executed at most $\sum_{s\in S}|M_s|=|M|$ times in total. Each execution of line~\ref{alg:line 1-4} and line~\ref{alg:line 1-5} takes $O(1)$. Hence, the time complexity of Algorithm~\ref{alg:truth set WA} is $O(|S|+|M|)$.

\begin{algorithm}[hbt]
\caption{Calculation of the truth set $\[\WA_a\psi\]$}
\label{alg:truth set WA}
\KwIn{transition system $(S,\Delta,D,M,\pi)$, agent $a$, the truth set $\[\psi\]$}
\KwOut{the truth set $\[\WA_a\psi\]$}
$Collector\leftarrow\varnothing$\label{alg:line 1-1}\;
\For{each state $s\in S$\label{alg:line 1-2}}{
    \For{each tuple $(\delta,t)\in M_s$\label{alg:line 1-3}}{
        \If{$\delta_a\in D_a^s$ and $t\in\[\psi\]$\label{alg:line 1-4}}{
        $Collector$.add($s$)\label{alg:line 1-5}\;
        \textbf{break}\label{alg:line 1-6}\;
        }
    }
}
\Return $Collector$\;
\end{algorithm}

\textbf{Algorithm~\ref{alg:truth set WE}} shows the pseudocode for calculating the truth set $\[\WE_a\psi\]$, given the transition system, the agent $a$, and the truth set $\[\psi\]$.
For each state $s\in S$, if there is a permitted action $i\in D_a^s$ such that $(s,i)\stit_a\psi$, then $s\Vdash\WE_a\psi$ by item~\ref{item:sat WE} of Definition~\ref{df:sat}.
Note that, if there exists a tuple $(\delta,t)\in M_s$ such that $t\nVdash\psi$, then $(s,\delta_a)\nstit_a\psi$ by Definition~\ref{df:sat}.
Algorithm~\ref{alg:truth set WE} first defines an empty set $Collector$ (line~\ref{alg:line 2-1}) to collect all states satisfying $\WE_a\psi$. 
Then, for each state $s$ (line~\ref{alg:line 2-2}), a set $Ensurer$ is initialised to be the set $\Delta_a^s$ (line~\ref{alg:line 2-3}), from which the actions $i\in\Delta_a^s$ such that $(s,i)\nstit_a\psi$ would be removed.
After that, the algorithm checks for each tuple $(\delta,t)\in M_s$ (line~\ref{alg:line 2-4}) if $t\nVdash\psi$ (line~\ref{alg:line 2-5}). If so, then $(s,\delta_a)\nstit_a\psi$. Thus, the algorithm removes action $\delta_a$ from the set $Ensurer$ (line~\ref{alg:line 2-6}).
When the \textbf{for} loop in lines~\ref{alg:line 2-4} - \ref{alg:line 2-6} ends, the set $Ensurer$ consists of all actions $i\in\Delta_a^s$ such that $(s,i)\stit_a\psi$.
Then, the \textbf{if} statement in line~\ref{alg:line 2-7} checks if there is a permitted action $i$ of agent $a$ in state $s$ such that $i\in Ensurer$. If so, then $s\Vdash\WE_a\psi$. Thus, the algorithm adds state $s$ into the set $Collector$ (line~\ref{alg:line 2-8}).

Similar to lines~\ref{alg:line 1-1} and \ref{alg:line 1-4} of Algorithm~\ref{alg:truth set WA}, in Algorithm~\ref{alg:truth set WE}, the execution of line~\ref{alg:line 2-1} and the {\bf if} statement in line~\ref{alg:line 2-5} takes $O(|S|)$ and $O(|M|)$, respectively.
Note that, the set $Ensurer$ defined in line~\ref{alg:line 2-3} is represented by a Boolean array of length $|\Delta_a^s|$. Then, the execution of lines~\ref{alg:line 2-3} and \ref{alg:line 2-7} takes $O(|\Delta_a^s|)$ for each state $s$. Thus, the execution of lines~\ref{alg:line 2-3} and \ref{alg:line 2-7} takes $\sum_{s\in S}O(|\Delta_a^s|)=O(|\Delta|)$ in total.
Meanwhile, line~\ref{alg:line 2-8} are executed $|S|$ times and each execution takes $O(1)$.
Hence, the time complexity of Algorithm~\ref{alg:truth set WE} is $O(|S|\!+\!|M|\!+\!|\Delta|)$.

\begin{algorithm}[htb]
\caption{Calculation of the truth set $\[\WE_a\psi\]$}
\label{alg:truth set WE}
\KwIn{transition system $(S,\Delta,D,M,\pi)$, agent $a$, the truth set $\[\psi\]$}
\KwOut{the truth set $\[\WE_a\psi\]$}
$Collector\leftarrow\varnothing$\label{alg:line 2-1}\;
\For{each state $s\in S$\label{alg:line 2-2}}{
    $Ensurer\leftarrow\Delta_a^s$\label{alg:line 2-3}\;
    \For{each tuple $(\delta,t)\in M_s$\label{alg:line 2-4}}{
        \If{$t\notin\[\psi\]$\label{alg:line 2-5}}{
        $Ensurer$.remove($\delta_a$)\label{alg:line 2-6}\;
        }
    }
    \If{$Ensurer\cap D_a^s\neq\varnothing$\label{alg:line 2-7}}{
        $Collector$.add($s$)\label{alg:line 2-8}\;
    }
}
\Return $Collector$\;
\end{algorithm}

\textbf{Algorithm~\ref{alg:truth set SE}} shows the pseudocode for calculating the truth set $\[\SE_a\psi\]$, given the transition system, the agent $a$, and the truth set $\[\psi\]$.
For each state $s\in S$, if $i\in D_a^s$ for each action $i$ such that $(s,i)\stit_a\psi$, then $s\Vdash\SE_a\psi$ by item~\ref{item:sat SE} of Definition~\ref{df:sat}.
Algorithm~\ref{alg:truth set SE} first defines an empty set $Collector$ (line~\ref{alg:line 3-1}) to collect all states satisfying $\SE_a\psi$. 
Lines~\ref{alg:line 3-2} - \ref{alg:line 3-6} of Algorithm~\ref{alg:truth set SE} are identical to those of Algorithm~\ref{alg:truth set WE}, where the set $Ensurer$ consists of all actions $i\in\Delta_a^s$ such that $(s,i)\stit_a\psi$ when the \textbf{for} loop in lines~\ref{alg:line 3-4} - \ref{alg:line 3-6} ends.
Then, the \textbf{if} statement in line~\ref{alg:line 3-7} checks if every action $i\in Ensurer$ is permitted.
If so, then $s\Vdash\SE_a\psi$. Thus, the algorithm adds state $s$ into the set $Collector$ (line~\ref{alg:line 3-8}).

Due to the similarity between Algorithm~\ref{alg:truth set SE} and Algorithm~\ref{alg:truth set WE}, the computational complexity of Algorithm~\ref{alg:truth set SE} is the same as that of Algorithm~\ref{alg:truth set WE}, which is $O(|S|\!+\!|M|\!+\!|\Delta|)$.

\begin{algorithm}[htb]
\caption{Calculation of the truth set $\[\SE_a\psi\]$}
\label{alg:truth set SE}
\KwIn{transition system $(S,\Delta,D,M,\pi)$, agent $a$, the truth set $\[\psi\]$}
\KwOut{the truth set $\[\SE_a\psi\]$}
$Collector\leftarrow\varnothing$\label{alg:line 3-1}\;
\For{each state $s\in S$\label{alg:line 3-2}}{
    $Ensurer\leftarrow\Delta_a^s$\label{alg:line 3-3}\;
    \For{each tuple $(\delta,t)\in M_s$\label{alg:line 3-4}}{
        \If{$t\notin\[\psi\]$\label{alg:line 3-5}}{
        $Ensurer$.remove($\delta_a$)\label{alg:line 3-6}\;
        }
    }
    \If{$Ensurer\subseteq D_a^s$\label{alg:line 3-7}}{
        $Collector$.add($s$)\label{alg:line 3-8}\;
    }
}
\Return $Collector$\;
\end{algorithm}

\textbf{Algorithm~\ref{alg:truth set SA}} shows the pseudocode for calculating the truth set $\[\SA_a\psi\]$, given the transition system, the agent $a$, and the truth set $\[\psi\]$.
Note that, for each state $s\in S$, if a tuple $(\delta,t)\in M_s$ such that $t\Vdash\psi$ exists, then $(s,\delta_a)\nstit_a\neg\psi$ by Definition~\ref{df:sat}; if $\delta_a\notin D_a^s$ is also true, then $s\nVdash\SA_a\psi$ by item~\ref{item:sat SA} of Definition~\ref{df:sat}.
Algorithm~\ref{alg:truth set SA} first defines a set $Sieve$ and initialises it to be the set of all states (line~\ref{alg:line 4-1}). From the set $Sieve$, each state \textit{not} satisfying $\SA_a\psi$ would be removed. 
Then, for each state $s$ (line~\ref{alg:line 4-2}), the algorithm searches for a tuple $(\delta,t)\in M_s$ (line~\ref{alg:line 4-3}) such that $\delta_a\notin D_a^s$ and $t\Vdash\psi$ (line~\ref{alg:line 4-4}). 
If such a tuple is found, then $s\nVdash\SA_a\psi$. Thus, the algorithm removes $s$ from the set $Sieve$ (line~\ref{alg:line 4-5}) and goes to check the next state (line~\ref{alg:line 4-6}).

Due to the similarity between Algorithm~\ref{alg:truth set SA} and Algorithm~\ref{alg:truth set WA}, it is easy to observe that the time complexity of Algorithm~\ref{alg:truth set SA} is $O(|S|+|M|)$.

\begin{algorithm}[htb]
\caption{Calculation of the truth set $\[\SA_a\psi\]$}
\label{alg:truth set SA}
\KwIn{transition system $(S,\Delta,D,M,\pi)$, agent $a$, the truth set $\[\psi\]$}
\KwOut{the truth set $\[\SA_a\psi\]$}
$Sieve\leftarrow S$\label{alg:line 4-1}\;
\For{each state $s\in S$\label{alg:line 4-2}}{
    \For{each tuple $(\delta,t)\in M_s$\label{alg:line 4-3}}{
        \If{$\delta_a\notin D_a^s$ and $t\in\[\psi\]$\label{alg:line 4-4}}{
        $Sieve$.remove($s$)\label{alg:line 4-5}\;
        \textbf{break}\label{alg:line 4-6}\;
        }
    }
}
\Return $Sieve$\;
\end{algorithm}

To sum up, the model-checking (time) complexity for a formula $\phi\in\Phi$ is $O\big(|\phi|\cdot(|S|\!+\!|M|\!+\!|\Delta|)\big)$, where $|\phi|$ is the size of the formula, $|S|$ is the number of states, $|M|$ is the size of the mechanism, and $|\Delta|$ is the size of the action space.

\section{Proofs of the Undefinability Results}\label{app:undefinability proofs}

\begin{figure*}[hbt]
\begin{center}
\scalebox{.47}{\includegraphics{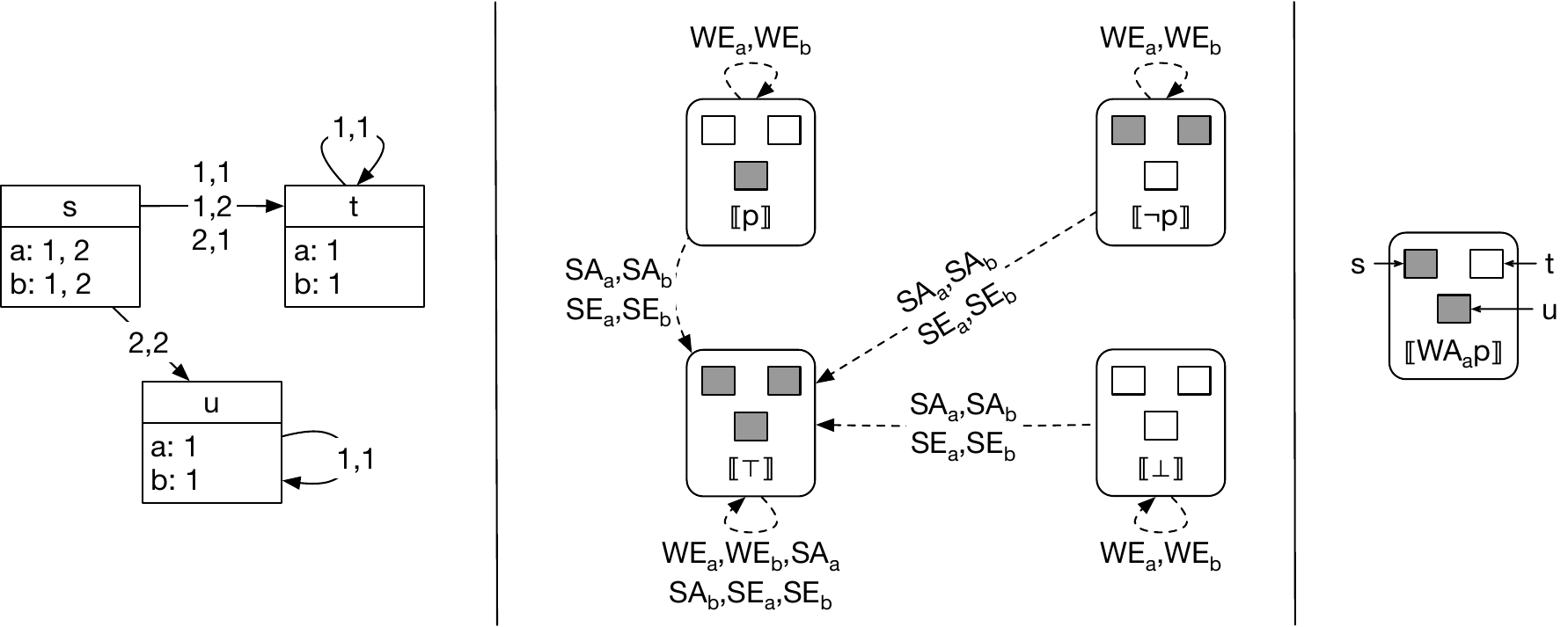}}
\caption{Toward undefinability of $\WA$ through $\WE$, $\SA$, and $\SE$.}
\label{fig:undef WA}
\end{center}
\end{figure*}

We explain the truth set algebra technique while presenting the undefinability result for modality $\WA$ in Subection~\ref{app:undefinability WA}. The proofs of the undefinability results for modalities $\WE$, $\SE$, and $\SA$ go in a similar way as $\WA$ except for using different transition systems. We sketch the proofs of the other three undefinability results in Subsection~\ref{app:undefinability WE}, Subsection~\ref{app:undefinability SE}, and Subsection~\ref{sec:undefinability SA}.

\subsection{Undefinability of $\WA$ Through $\WE$, $\SE$, and $\SA$}\label{app:undefinability WA}

Without loss of generality, in this subsection, we assume that the language has not only at least two agents but {\em exactly} two of them. We refer to these agents as $a$ and $b$. Additionally, also without loss of generality, we assume that the language contains a single propositional variable $p$.

Unlike the well-known {\em bisimulation} technique, which uses two models to prove undefinability, the truth set algebra method uses only one model. In our case, the model is a transition system depicted in the left of Figure~\ref{fig:undef WA}. This transition system has three states: $s$, $t$, and $u$ visualised in the diagram as rectangles. The set $\Delta^y_x$ of actions available to agent $x$ in state $y$ is shown inside the corresponding rectangle. For example, $\Delta^s_a=\{1,2\}$. Note that we use integers to denote actions. 
Uniformly, we use {\em positive} integers to denote the permitted actions in the set $D^y_x$ and {\em negative} integers to denote the non-permitted actions in all the transition systems that will be used to prove our undefinability results.
Specifically, in the transition systems shown in the left of Figure~\ref{fig:undef WA}, every action is permitted.


We denote action profiles by pairs $(i,j)$, where $i$ is the action of agent $a$ and $j$ is the action of agent $b$ under the profile. The mechanism of the transition system is visualised in the left part of Figure~\ref{fig:undef WA} using labelled arrows. For example, the label $(2,1)$ on the arrow from state $s$ to state $t$ denotes the fact that $((2,1),t)\in M_s$. Finally, we assume that $\pi(p)=\{u\}$.

In the left part of Figure~\ref{fig:undef WA}, we visualise the transition system that will be used in the proof. In the middle part and the right part of the same figure, we use the miniaturisation of the transition system (\textit{i.e.} diagram) to visualise a subset of the set of states.
For example, in the diagram in the right part of Figure~\ref{fig:undef WA}, we visualise the set $\{s,u\}$ of states by shading grey the two rectangles corresponding to states $s$ and $u$.
Specifically, we use the diagrams to denote some truth sets as shown at the bottom of each diagram.

Let us now consider the family of four truth sets $\mathcal{F}=\{\[p\],\[\neg p\],\[\top\],\[\bot\]\}$ as visualised in the four diagrams in the middle part of Figure~\ref{fig:undef WA}.
Intuitively, the next lemma shows that, for the above-described transition system, the family $\mathcal{F}$ is closed with respect to modalities $\WE$, $\SE$, and $\SA$.

\begin{lemma}\label{lm:undefinability WA 1}
$\[\WE_x\phi\],\[\SE_x\phi\],\[\SA_x\phi\]\in\mathcal{F}$
for any agent $x\in\{a,b\}$
and any formula $\phi\in\Phi$
such that $\[\phi\]\in\mathcal{F}$.
\end{lemma}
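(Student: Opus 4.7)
My plan is an exhaustive case analysis. Since $\mathcal{F}$ has only four members, namely $\[\top\]=\{s,t,u\}$, $\[\bot\]=\varnothing$, $\[p\]=\{u\}$, and $\[\neg p\]=\{s,t\}$, for each agent $x\in\{a,b\}$ and each formula $\phi$ with $\[\phi\]\in\mathcal{F}$ I would compute $\[\WE_x\phi\]$, $\[\SE_x\phi\]$, and $\[\SA_x\phi\]$ and verify that each of these lies in $\mathcal{F}$. Note that only the truth set of $\phi$, not its syntactic form, matters for this computation, so four values of $\[\phi\]$ really do exhaust the problem.

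The first simplification is that every action in the displayed transition system is permitted, i.e.\ $D_x^y = \Delta_x^y$ for every agent $x$ and state $y$. Under this hypothesis, the universal conditions in items~6 and 7 of Definition~\ref{df:sat} are trivially satisfied at every state, so $\[\SE_x\phi\]=\[\SA_x\phi\]=\{s,t,u\}=\[\top\]\in\mathcal{F}$ for every $\phi$. This disposes of the $\SE$ and $\SA$ halves of the lemma in a single step, and the whole task reduces to verifying closure under $\WE_x$.

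Two of the four $\WE_x$ cases are immediate as well. The formula $\WE_x\top$ holds at every state because every action ensures $\top$ vacuously, giving $\[\WE_x\top\]=\[\top\]$; and $\WE_x\bot$ holds at no state because the continuity condition in item~\ref{df:transition system M} of Definition~\ref{df:transition system} guarantees that any action has at least one successor, so no action can ensure $\bot$, giving $\[\WE_x\bot\]=\[\bot\]$. The remaining cases are $\[\WE_x p\]$ and $\[\WE_x\neg p\]$ for each of $x\in\{a,b\}$, which are what the transition system of Figure~\ref{fig:undef WA} is actually engineered to control.

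The main obstacle, and really the whole content of the proof, is the mechanical verification of these four cases by reading the mechanism $M_s$, $M_t$, $M_u$ off the figure. For each pair $(x,\phi)$ one identifies, at each of the three states, the actions of agent $x$ whose every reachable successor lies in $\pi(p)=\{u\}$ (to ensure $p$) or in $\{s,t\}$ (to ensure $\neg p$); the set of states admitting at least one such action is then $\[\WE_x\phi\]$, which must be checked to coincide with one of the four sets in $\mathcal{F}$. This is a purely tabular check and succeeds precisely because of the careful symmetry built into the transition system.
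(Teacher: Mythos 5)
Your proposal is correct and, in places, cleaner than the paper's argument, but it takes a genuinely different decomposition. The paper treats all $3\times2\times4$ modality--agent--truth-set combinations uniformly: it works one representative case in full ($\[\WE_a\phi\]=\[p\]$ when $\[\phi\]=\[p\]$, computed directly from the mechanism in Figure~\ref{fig:undef WA}) and records the outcomes of the remaining 23 cases as the labelled dashed arrows in the middle of that figure. You instead dispose of 20 of the 24 cases by general arguments using only two structural facts: since every action in this particular system is permitted ($D^y_x=\Delta^y_x$), the universal conditions in items~\ref{item:sat SE} and~\ref{item:sat SA} of Definition~\ref{df:sat} hold trivially at every state, so $\[\SE_x\phi\]=\[\SA_x\phi\]=\[\top\]$ for every $\phi$; and continuity (item~\ref{df:transition system M} of Definition~\ref{df:transition system}) together with nonemptiness of $D^y_x$ give $\[\WE_x\top\]=\[\top\]$ and $\[\WE_x\bot\]=\[\bot\]$. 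This buys a shorter and more transparent proof, at the price of exploiting the special feature that no action here is forbidden, so the shortcut would not transfer to the systems used for the $\SE$ and $\SA$ undefinability results, where the paper's uniform tabulation still applies. The one caveat is that the four remaining cases, $\[\WE_x p\]$ and $\[\WE_x\neg p\]$ for $x\in\{a,b\}$, are exactly where the content of the lemma lies, and you only assert that the tabular check succeeds rather than carrying it out; the paper's worked case settles one of them, $\[\WE_a p\]=\{u\}=\[p\]$ (at $s$ both actions of $a$ allow $t$, a $\neg p$-state, as a successor, so neither ensures $p$; similarly at $t$; while the unique action at $u$ guarantees $u$), and the analogous reading of the figure settles the other three. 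With those four computations written out explicitly, your argument is a complete and somewhat more economical proof of Lemma~\ref{lm:undefinability WA 1}.
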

\begin{proof}
We first show that if $\[\phi\]=\[p\]$, then $\[\WE_a\phi\]=\[p\]$. Indeed, $\[\phi\]=\{u\}$. 

Note that $D_a^s=\{1,2\}$. When taken by agent $a$ in state $s$, both actions $1$ and $2$ allow $t$ to be the next state, see Figure~\ref{fig:undef WA}. Then, $(s,1)\nstit_a\phi$ and $(s,2)\nstit_a\phi$ as $\[\phi\]=\{u\}$.
Hence, $s\nVdash \WE_a\phi$ by item~\ref{item:sat WE} of Definition~\ref{df:sat} and that $D_a^s=\{1,2\}$.
One can similarly know that $t\notin\[\WE_a\phi\]$.

On the contrary, the only action available to agent $a$ in state $u$, action $1$, guarantees that the next state is again $u$. Then, $(u,1)\stit_a\phi$ for the action $1\in D^u_a$ as $\[\phi\]=\{u\}$. Hence, $u\Vdash \WE_a\phi$ by item~\ref{item:sat WE} of Definition~\ref{df:sat}. Therefore, $u\in \[\WE_a\phi\]$ by Definition~\ref{df:truth set}.

In conclusion, if $\[\phi\]=\[p\]$, then $\[\WE_a\phi\]=\{u\}=\[p\]$. We visualise this observation by a dashed arrow labelled with $\WE_a$ in the middle part of Figure~\ref{fig:undef WA} from the diagram $\[p\]$ to itself.

The proofs for the other 23 cases are similar. We show the corresponding labelled dashed arrows in the middle part of Figure~\ref{fig:undef WA}. 
\end{proof}

\begin{lemma}\label{lm:undefinability WA 2}
$\[\phi\]\in \mathcal{F}$ for each formula $\phi\in\Phi$ that does not contain modality $\WA$.
\end{lemma}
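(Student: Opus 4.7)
The plan is to prove this by straightforward structural induction on the formula $\phi$, using Lemma~\ref{lm:undefinability WA 1} for the modal cases. Since $\phi$ contains no occurrence of $\WA$, the connectives we must handle in the induction step are only $\neg$, $\vee$, $\WE_x$, $\SE_x$, and $\SA_x$, together with the propositional base case.

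For the base case, $\[p\]\in\mathcal{F}$ holds directly. For the Boolean connectives, I would verify that $\mathcal{F}$ is closed under complement and union by direct inspection. Concretely, the four truth sets in $\mathcal{F}$ are $\varnothing$, $\{u\}$, $\{s,t\}$, and $\{s,t,u\}$, and one checks that the complement of each lies in $\mathcal{F}$ (the complements pair up as $\[\top\]\leftrightarrow\[\bot\]$ and $\[p\]\leftrightarrow\[\neg p\]$) and that every pairwise union lies in $\mathcal{F}$ (for instance $\{u\}\cup\{s,t\}=\{s,t,u\}$). Thus the inductive hypothesis transfers through $\[\neg\psi\]=S\setminus\[\psi\]$ and $\[\psi_1\vee\psi_2\]=\[\psi_1\]\cup\[\psi_2\]$.

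For the modal cases $\phi=\WE_x\psi$, $\phi=\SE_x\psi$, and $\phi=\SA_x\psi$, the inductive hypothesis gives $\[\psi\]\in\mathcal{F}$, and then Lemma~\ref{lm:undefinability WA 1} immediately yields $\[\phi\]\in\mathcal{F}$. Since $\phi$ does not use $\WA$, no further cases arise, completing the induction.

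There is no genuine obstacle in this argument beyond the routine bookkeeping already done in Lemma~\ref{lm:undefinability WA 1}; the only thing to be careful about is observing that $\mathcal{F}$ is a Boolean subalgebra of $2^S$, which is a finite check. Once this lemma is in place, the undefinability of $\WA$ (Theorem~\ref{th:undefinability WA}) follows because $\[\WA_a p\]$ will be shown to fall outside $\mathcal{F}$, so no $\WA$-free formula can be semantically equivalent to $\WA_a p$.
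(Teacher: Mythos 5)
Your proposal is correct and follows essentially the same route as the paper's proof: induction on structural complexity, with the base case $p$, closure of $\mathcal{F}$ under complement and union for the Boolean connectives, and an appeal to Lemma~\ref{lm:undefinability WA 1} for the $\WE$, $\SE$, and $\SA$ cases. Your explicit identification of the four truth sets and the observation that $\mathcal{F}$ is closed under the Boolean operations matches what the paper verifies via the diagrams in Figure~\ref{fig:undef WA}.
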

\begin{proof}
We prove the statement of the lemma by induction on the structural complexity of formula $\phi$. If formula $\phi$ is the propositional variable $p$, then the statement of the lemma holds because $\[p\]\in\mathcal{F}$.

Suppose that formula $\phi$ has the form $\neg \psi$. Then, $\[\phi\]=\{s,t,u\}\setminus \[\psi\]$ by item~\ref{item:sat negation} of Definition~\ref{df:sat} and Definition~\ref{df:truth set}. Observe that family $\mathcal{F}$ is closed with respect to the complement, see the middle part of Figure~\ref{fig:undef WA}. Thus, if $\[\psi\]\in \mathcal{F}$, then $\[\phi\]\in \mathcal{F}$. Therefore, by the induction hypothesis, $\[\phi\]\in \mathcal{F}$. 

Assume that formula $\phi$ has the form $\psi_1\vee \psi_2$. Then, $\[\phi\]=\[\psi_1\]\cup\[\psi_2\]$ by item~\ref{item:sat disjunction} of Definition~\ref{df:sat} and Definition~\ref{df:truth set}. Observe that family $\mathcal{F}$ is closed with respect to the union, see the middle part of Figure~\ref{fig:undef WA}. Thus, if $\[\psi_1\],\[\psi_2\]\in \mathcal{F}$, then $\[\phi\]\in \mathcal{F}$. Therefore, by the induction hypothesis, $\[\phi\]\in \mathcal{F}$.

If formula $\phi$ has one of the forms $\WE_a\psi$, $\WE_b\psi$, $\SE_a\psi$, $\SE_b\psi$, $\SA_a\psi$, or $\SA_b\psi$, then the statement of the lemma follows from the induction hypothesis by Lemma~\ref{lm:undefinability WA 1}. 
\end{proof}

\begin{lemma}\label{lm:undefinability WA 3}
$\[\WA_a p\]\notin \mathcal{F}$.
\end{lemma}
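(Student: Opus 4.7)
The plan is to compute $\[\WA_a p\]$ directly from the transition system in the left part of Figure~\ref{fig:undef WA} and then observe that the resulting set is none of the four sets in $\mathcal{F}=\{\[p\],\[\neg p\],\[\top\],\[\bot\]\}=\{\{u\},\{s,t\},\{s,t,u\},\varnothing\}$. By item~\ref{item:sat WA} of Definition~\ref{df:sat} together with $\pi(p)=\{u\}$, a state $x$ belongs to $\[\WA_a p\]$ if and only if agent $a$ has a permitted action $i\in D_a^x$ under which at least one transition from $x$ leads to state $u$. So the whole argument reduces to reading off, from the arrows of the transition system, which states have such a ``witness'' action for agent $a$.

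First I would verify that $u\in\[\WA_a p\]$: the unique action available to agent $a$ in $u$ (action $1$) is permitted and the self-loop guarantees a transition to $u$, so that action admits $p$. Second, I would show $s\in\[\WA_a p\]$ by exhibiting a permitted action of agent $a$ at $s$ together with an action of agent $b$ such that the resulting profile produces a transition from $s$ to $u$; that witnessing profile can be read off directly from the arrows of Figure~\ref{fig:undef WA}. Third, I would show $t\notin\[\WA_a p\]$ by checking that every transition out of $t$ reaches a state in $\{s,t\}=\[\neg p\]$, so no action of agent $a$ at $t$ (permitted or not) can admit of $p$. Combining the three observations gives $\[\WA_a p\]=\{s,u\}$.

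Finally, I would conclude by a simple set-theoretic comparison: $\{s,u\}$ differs from $\{u\}$ (it contains $s$), from $\{s,t\}$ (it omits $t$ but contains $u$), from $\{s,t,u\}$ (it omits $t$), and from $\varnothing$ (it is nonempty). Hence $\[\WA_a p\]\notin\mathcal{F}$, which is exactly what the lemma asserts. This also matches the diagram in the right part of Figure~\ref{fig:undef WA}, which pictures precisely $\{s,u\}$.

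There is no real obstacle here. The only delicate point is checking the transitions out of $t$ carefully enough to rule out any arrow landing in $u$; once the mechanism $M_s$, $M_t$, and $M_u$ is read off from the figure, items~\ref{item:sat WA} and~\ref{item:sat propositional} of Definition~\ref{df:sat} do all the work. Combined with Lemma~\ref{lm:undefinability WA 2}, this immediately yields Theorem~\ref{th:undefinability WA}, since no formula avoiding $\WA$ can have truth set outside $\mathcal{F}$.
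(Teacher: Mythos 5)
Your proposal is correct and follows essentially the same route as the paper: the paper's proof simply states that $\[\WA_a p\]$ is the set $\{s,u\}$ depicted at the right of Figure~\ref{fig:undef WA} and that this is verified by a direct semantic computation in the style of Lemma~\ref{lm:undefinability WA 1}, which is exactly the state-by-state check you carry out before observing $\{s,u\}\notin\mathcal{F}$.
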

\begin{proof}
The truth set $\[\WA_a p\]$ is depicted at the right of Figure~\ref{fig:undef WA}. This could be verified using an argument similar to the one used in the proof of Lemma~\ref{lm:undefinability WA 1}.    
\end{proof}

Theorem~\ref{th:undefinability WA} follows from Definition~\ref{df:semantically equivalent} and the two previous lemmas.

\noindent\textbf{Theorem~\ref{th:undefinability WA}} (undefinability of $\WA$)
\textit{The formula $\WA_a p$ is not semantically equivalent to any formula in language $\Phi$ that does not use modality $\WA$. }

\begin{figure*}[hbt]
\begin{center}
\scalebox{.47}{\includegraphics{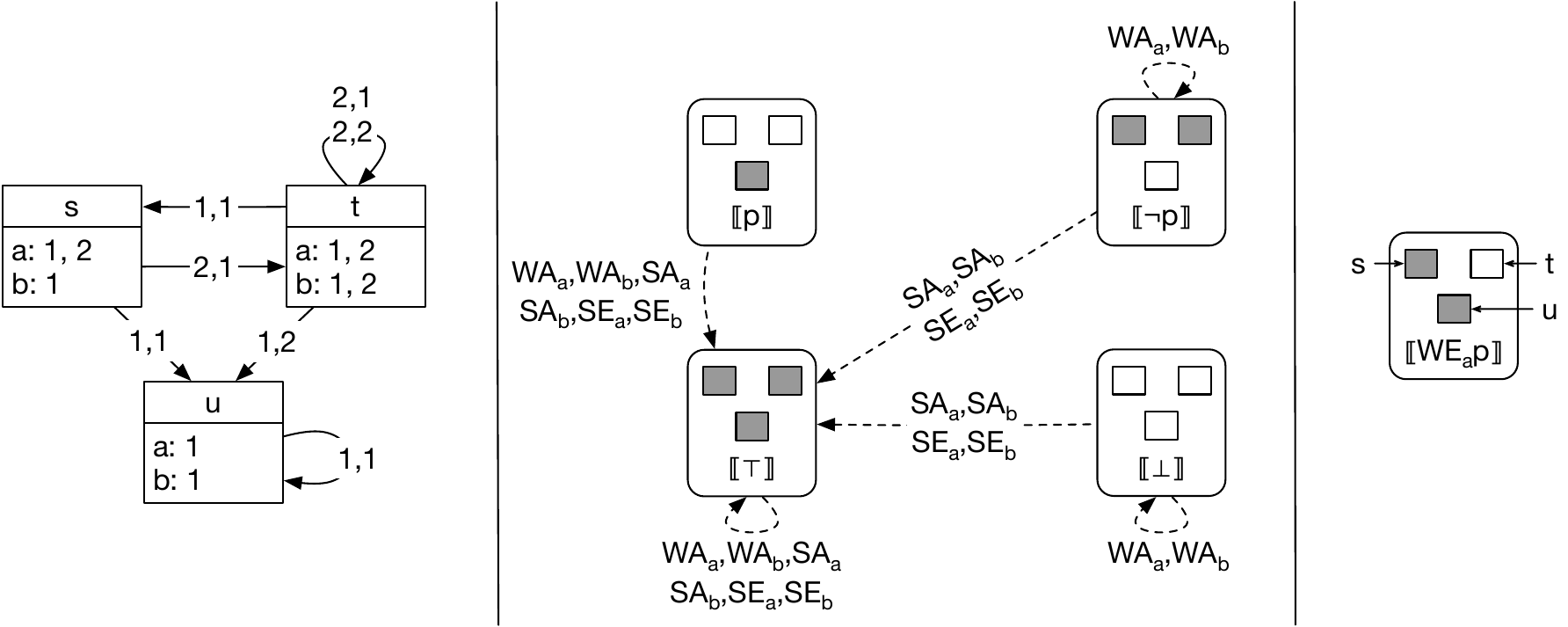}}
\caption{Toward undefinability of $\WE$ through $\WA$, $\SA$, and $\SE$.}
\label{fig:undef WE}
\end{center}
\end{figure*}

\begin{figure*}[ht]
\begin{center}
\scalebox{.47}{\includegraphics{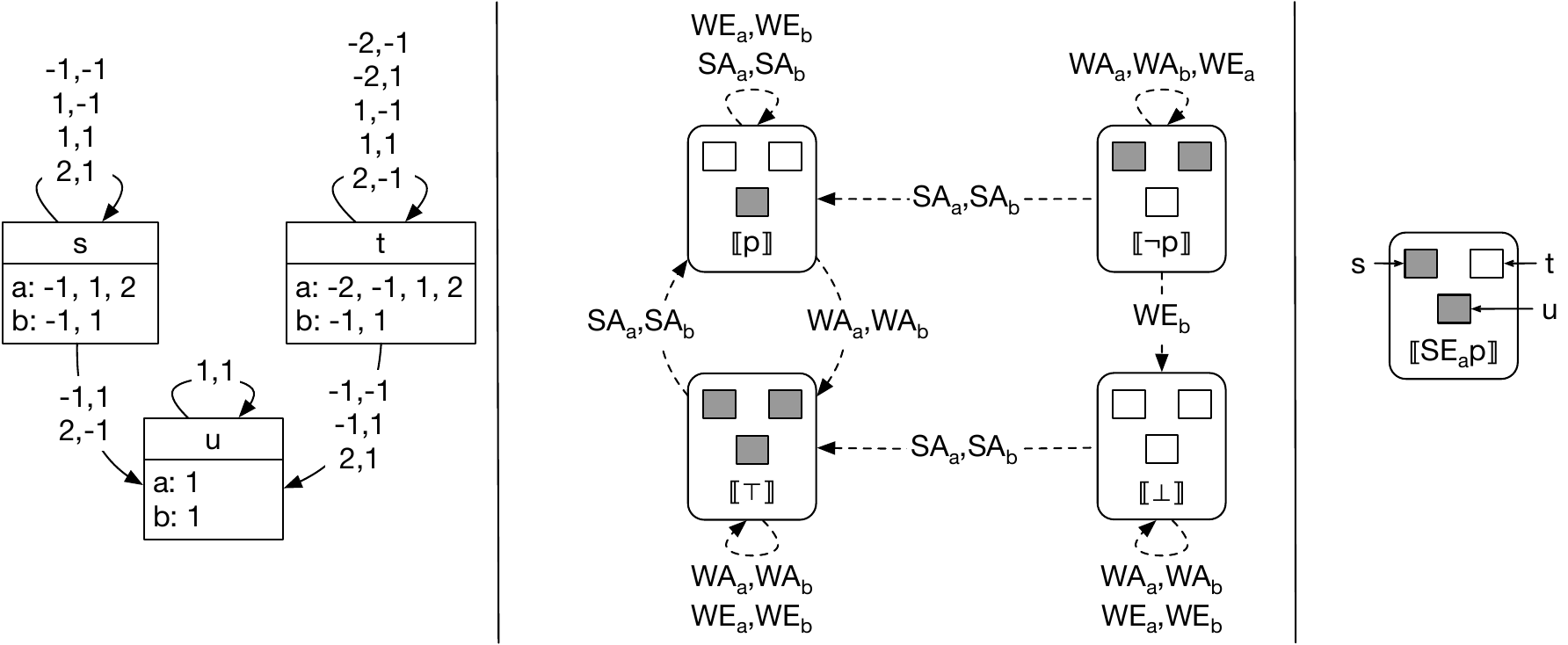}}
\caption{Toward undefinability of $\SE$ through $\WA$, $\WE$, and $\SA$.}
\label{fig:undef SE}
\end{center}
\end{figure*}

\begin{figure*}[ht]
\begin{center}
\scalebox{.47}{\includegraphics{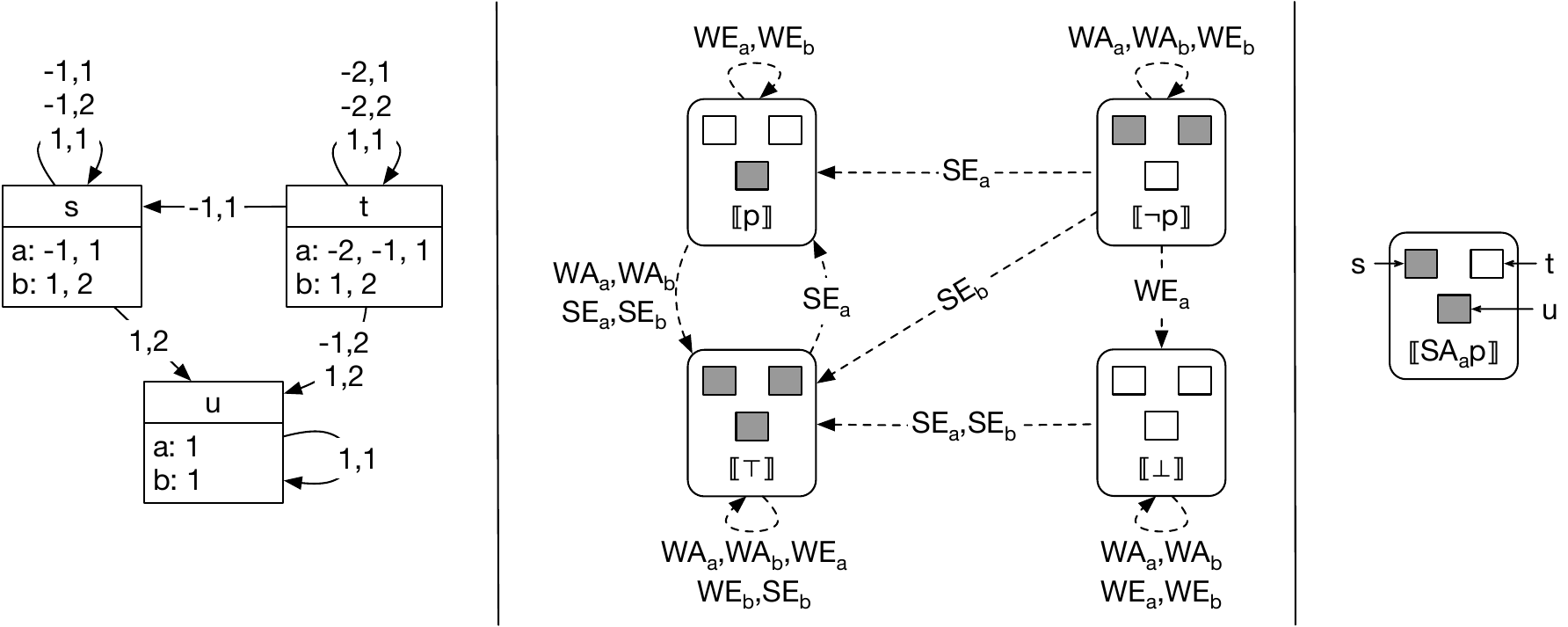}}
\caption{Toward undefinability of $\SA$ through $\WA$, $\WE$, and $\SE$.}
\label{fig:undef SA}
\end{center}
\end{figure*}

\subsection{Undefinability of $\WE$ Through $\WA$, $\SE$, and $\SA$}\label{app:undefinability WE}

In this case, we use the transition system depicted in the left of Figure~\ref{fig:undef WE}. 
We depict the description of the transition system in the same way as in Figure~\ref{fig:undef WA}. Note that, the permitted actions are still denoted by positive integers.
The family of truth sets to consider here is $\mathcal{F}=\{\[p\],\[\neg p\],\[\top\],\[\bot\]\}$, as visualised in the middle of Figure~\ref{fig:undef WE}.

The next three lemmas can be proved in a similar way as Lemma~\ref{lm:undefinability WA 1}, Lemma~\ref{lm:undefinability WA 2}, and Lemma~\ref{lm:undefinability WA 3} except for using Figure~\ref{fig:undef WE} instead of Figure~\ref{fig:undef SA}.
\begin{lemma}\label{lm:undefinability WE 1}
$\[\WA_x\phi\],\[\SE_x\phi\],\[\SA_x\phi\]\in\mathcal{F}$
for any agent $x\in\{a,b\}$
and any formula $\phi\in\Phi$
such that $\[\phi\]\in\mathcal{F}$.
\end{lemma}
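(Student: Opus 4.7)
The plan is to mirror the proof of Lemma~\ref{lm:undefinability WA 1} case by case, using the transition system in the left of Figure~\ref{fig:undef WE} in place of the one in Figure~\ref{fig:undef WA}. The statement asserts closure of the four-element family $\mathcal{F}=\{\[p\],\[\neg p\],\[\top\],\[\bot\]\}$ under 6 modality-agent pairs ($\WA_a,\WA_b,\SE_a,\SE_b,\SA_a,\SA_b$), so I have $6\times 4=24$ candidate truth sets $\[\odot_x \phi\]$ to compute, and for each I must identify which element of $\mathcal{F}$ it equals.

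First, for each state $s$ of the transition system and each candidate input truth set $\[\phi\]\in\mathcal{F}$, I would unfold the relevant clause of Definition~\ref{df:sat}. Concretely: for $\WA_x\phi$ (item~\ref{item:sat WA}), I would check whether some $i\in D_x^s$ has at least one successor of $(s,i)$ in $\[\phi\]$; for $\SE_x\phi$ (item~\ref{item:sat SE}), I would enumerate all actions $i$ with $(s,i)\stit_x\phi$ and verify each lies in $D_x^s$; for $\SA_x\phi$ (item~\ref{item:sat SA}), I would enumerate all $i$ with $(s,i)\nstit_x\neg\phi$ and verify each lies in $D_x^s$. The resulting state-by-state truth value then pins down $\[\odot_x\phi\]$ as a subset of $\{s,t,u\}$, and I would confirm it matches one of the four members of $\mathcal{F}$. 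These 24 membership assertions are exactly what the dashed labelled arrows in the middle of Figure~\ref{fig:undef WE} encode, so the diagram serves as a compact certificate of the lemma.

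Many of the cases are disposed of uniformly rather than individually: the trivial truth sets $\[\top\]$ and $\[\bot\]$ yield $\SE_x\top=\top$, $\SA_x\bot=\bot$ via continuity, and $\WA_x\bot=\bot$ by axiom~\ref{ax: WA bot}, independently of the transition system. This reduces the genuinely geometric work to the four input truth sets $\[p\]$ and $\[\neg p\]$ crossed with the three modalities and two agents, which is where the specific action structure and deontic labelling of Figure~\ref{fig:undef WE} must be consulted. The main obstacle is simply the volume of bookkeeping: one has to be careful to correctly distinguish permitted from non-permitted actions (positive versus negative integer labels) and to enumerate successors under every relevant action profile, especially for the $\SE$ and $\SA$ clauses where the universal quantification can be tricky when multiple actions realise the same $\stit$/$\nstit$ relation.

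Once Lemma~\ref{lm:undefinability WE 1} is in hand, the rest of the argument follows the Lemma~\ref{lm:undefinability WA 2}/\ref{lm:undefinability WA 3} template verbatim: an induction on structural complexity (using closure of $\mathcal{F}$ under complement and union, read off from the middle of Figure~\ref{fig:undef WE}) shows that every $\WE$-free formula has truth set in $\mathcal{F}$, and a direct computation from Definition~\ref{df:sat} exhibits $\[\WE_a p\]$ as a subset of $\{s,t,u\}$ that does not belong to $\mathcal{F}$. The undefinability theorem for $\WE$ then follows by Definition~\ref{df:semantically equivalent}.
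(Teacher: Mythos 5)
Your overall strategy is exactly the paper's: the paper proves this lemma only by remarking that it goes through "in a similar way as Lemma~\ref{lm:undefinability WA 1} except for using Figure~\ref{fig:undef WE}", i.e.\ a case-by-case computation of the $24$ truth sets $\[\odot_x\phi\]$ from Definition~\ref{df:sat}, with the dashed arrows in the middle of the figure as the record of the outcomes; your proposal reproduces that, and the sample case you spell out for $\WA_x$, $\SE_x$, $\SA_x$ is the right kind of computation.

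One caveat on your ``uniform'' shortcuts, two of which are stated incorrectly. First, $\[\SA_x\phi\]$ for $\[\phi\]=\[\bot\]$ is $\[\top\]$, not $\[\bot\]$: by item~\ref{item:sat SA} of Definition~\ref{df:sat}, no action satisfies $(s,i)\nstit_x\neg\bot$ (every action trivially ensures $\neg\bot$), so the universal condition is vacuous and $\SA_x\phi$ holds at \emph{every} state --- this is precisely axiom~\ref{ax: SA bot}. Second, $\SE_x\phi$ with $\[\phi\]=\[\top\]$ is \emph{not} valid ``independently of the transition system'': since every action ensures $\top$, $s\Vdash\SE_x\phi$ iff \emph{all} of agent $x$'s actions at $s$ are permitted (compare axiom~\ref{ax: SA SE T}, which is an implication, not a theorem asserting $\SE_a\top$), so this truth set depends on the deontic labelling and must be read off the positive/negative action labels of Figure~\ref{fig:undef WE} rather than dismissed a priori. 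Neither slip endangers the lemma itself --- both $\[\top\]$ and $\[\bot\]$ lie in $\mathcal{F}$, and in the model of Figure~\ref{fig:undef WE} the relevant sets do land in $\mathcal{F}$ --- but as written the corresponding arrows in your closure table would be mislabelled and the justification offered for those cases is unsound, so these cases should be verified against the model like the others (or argued correctly via vacuity and the permitted-action structure).
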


\begin{lemma}\label{lm:undefinability WE 2}
$\[\phi\]\in \mathcal{F}$ for each formula $\phi\in\Phi$ that does not contain modality $\WE$.
\end{lemma}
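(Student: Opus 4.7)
The plan is to mimic the proof of Lemma~\ref{lm:undefinability WA 2} essentially verbatim, since the structure of the argument depends only on the closure properties of the fixed family $\mathcal{F}=\{\[p\],\[\neg p\],\[\top\],\[\bot\]\}$ under the relevant operations. I would proceed by induction on the structural complexity of the formula $\phi$, where the inductive hypothesis is: for every proper subformula $\psi$ of $\phi$ that does not contain modality $\WE$, we have $\[\psi\]\in\mathcal{F}$.

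For the base case, $\phi$ is the propositional variable $p$, and $\[p\]\in\mathcal{F}$ by construction of the family. For the Boolean cases, if $\phi=\neg\psi$, then $\[\phi\]=\{s,t,u\}\setminus\[\psi\]$ by item~\ref{item:sat negation} of Definition~\ref{df:sat} together with Definition~\ref{df:truth set}; inspecting the middle part of Figure~\ref{fig:undef WE} shows that $\mathcal{F}$ is closed under complement (the pair $\[p\]$, $\[\neg p\]$ are complements, and so are $\[\top\]$, $\[\bot\]$), so $\[\phi\]\in\mathcal{F}$ by the induction hypothesis. If $\phi=\psi_1\vee\psi_2$, then $\[\phi\]=\[\psi_1\]\cup\[\psi_2\]$ by item~\ref{item:sat disjunction} of Definition~\ref{df:sat}, and the same figure verifies that $\mathcal{F}$ is closed under finite unions, so again $\[\phi\]\in\mathcal{F}$.

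For the modal cases, since $\phi$ does not contain $\WE$, the only possible outermost modalities are $\WA_x$, $\SE_x$, or $\SA_x$ for $x\in\{a,b\}$. In each such case, we apply the induction hypothesis to the subformula $\psi$ to conclude $\[\psi\]\in\mathcal{F}$, and then invoke Lemma~\ref{lm:undefinability WE 1} to conclude $\[\phi\]\in\mathcal{F}$.

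There is no real obstacle here: the work is carried entirely by Lemma~\ref{lm:undefinability WE 1}, which has already absorbed the case analysis specific to the transition system of Figure~\ref{fig:undef WE}. Together with an analogue of Lemma~\ref{lm:undefinability WA 3} asserting $\[\WE_a p\]\notin\mathcal{F}$ (verified directly from the diagram), this induction will yield the undefinability of $\WE$ via Definition~\ref{df:semantically equivalent}.
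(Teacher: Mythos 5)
Your proposal is correct and matches the paper's argument: the paper proves this lemma exactly by repeating the induction from Lemma~\ref{lm:undefinability WA 2} (base case $p$, closure of $\mathcal{F}$ under complement and union, and the closure Lemma~\ref{lm:undefinability WE 1} for the remaining modalities), only with the transition system of Figure~\ref{fig:undef WE} in place of that of Figure~\ref{fig:undef WA}. No gaps.
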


\begin{lemma}\label{lm:undefinability WE 3}
$\[\WE_a p\]\notin \mathcal{F}$.
\end{lemma}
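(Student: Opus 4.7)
The plan is to verify $\[\WE_a p\]\notin\mathcal{F}$ directly, in the spirit of Lemma~\ref{lm:undefinability WA 3}. The family $\mathcal{F}=\{\[p\],\[\neg p\],\[\top\],\[\bot\]\}$ consists of exactly four distinguishable subsets of the three-state set visualised in the middle of Figure~\ref{fig:undef WE}. Hence it suffices to compute $\[\WE_a p\]$ in the transition system on the left of the figure and observe that the resulting subset matches none of the four diagrams; the claim of the lemma is then immediate from Definition~\ref{df:semantically equivalent}.

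To compute $\[\WE_a p\]$, I would step through each state $y$ and apply item~\ref{item:sat WE} of Definition~\ref{df:sat}: $y\in\[\WE_a p\]$ if and only if there is a permitted action $i\in D_a^y$ (marked by a positive integer in the diagram) such that $(y,i)\stit_a p$, i.e., such that every tuple $((\delta_a,\delta_b),v)\in M_y$ with $\delta_a=i$ lands in a state $v\in\[p\]$. Since $\[p\]$ is already identified in the middle column of Figure~\ref{fig:undef WE}, this amounts to inspecting the labelled arrows leaving $y$ and, for each permitted action $i$ of agent $a$, checking whether every such arrow whose first coordinate equals $i$ points into $\[p\]$.

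After carrying out this case analysis on each state, the resulting set $\[\WE_a p\]$ is the one depicted on the right of Figure~\ref{fig:undef WE}. A direct comparison with the four diagrams in the middle column then confirms that $\[\WE_a p\]$ coincides with none of $\[p\],\[\neg p\],\[\top\],\[\bot\]$, giving $\[\WE_a p\]\notin\mathcal{F}$.

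The main obstacle is bookkeeping rather than substance: one has to quantify universally over agent $b$'s action when evaluating the ``ensure'' operator $\stit_a$ and, at the same time, restrict the existential quantifier over agent $a$'s actions to the permitted ones in $D_a^y$. Care is needed to distinguish permitted (positive-integer) from non-permitted (negative-integer) actions, but once the correct witnesses and counterwitnesses are read off the diagram, the lemma follows by inspection, in complete parallel with Lemma~\ref{lm:undefinability WA 3}.
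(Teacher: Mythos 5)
Your proposal is correct and follows essentially the same route as the paper: the paper likewise proves Lemma~\ref{lm:undefinability WE 3} by directly evaluating $\[\WE_a p\]$ in the transition system of Figure~\ref{fig:undef WE} (mirroring the computation in Lemma~\ref{lm:undefinability WA 1}/\ref{lm:undefinability WA 3}) and observing that the resulting set, shown on the right of the figure, is none of the four members of $\mathcal{F}$.
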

The next theorem follows from Definition~\ref{df:semantically equivalent}, Lemma~\ref{lm:undefinability WE 2}, and Lemma~\ref{lm:undefinability WE 3}.

\begin{theorem}[undefinability of $\WE$]\label{th:undefinability WE}
The formula $\WE_a p$ is not semantically equivalent to any formula in language $\Phi$ that does not use modality $\WE$.  
\end{theorem}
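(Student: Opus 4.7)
The plan is to mirror the proof of Theorem~\ref{th:undefinability WA} using the truth set algebra technique: exhibit a single two-agent transition system (to be drawn as Figure~\ref{fig:undef WE}) together with a small family of truth sets $\mathcal{F}=\{\[p\],\[\neg p\],\[\top\],\[\bot\]\}$ with the property that $\mathcal{F}$ is closed under complement, union, and the three modalities $\WA$, $\SE$, $\SA$ (for both agents), but $\[\WE_a p\]\notin\mathcal{F}$. Definition~\ref{df:semantically equivalent} together with this separation then immediately implies that no $\WE$-free formula can be equivalent to $\WE_a p$.

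Concretely, I would first design the transition system, proceeding by the same state/action template used in Figure~\ref{fig:undef WA}: three states including a $p$-state and two $\neg p$-states, with two agents $a,b$ each choosing from small action sets (positive labels for permitted, negative for non-permitted), and carefully chosen action profiles $(i,j)$ determining the successor state. The guiding idea is that for $\WA_x$, $\SE_x$, $\SA_x$ to preserve $\mathcal{F}$, the quantifier pattern \emph{``some/every action admits''} and \emph{``every action that ensures''} must be indistinguishable across the two $\neg p$-states and behave uniformly on the $p$-state; yet for $\WE_a$ the quantifier pattern \emph{``some permitted action ensures''} must separate two states that $\mathcal{F}$ cannot distinguish. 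This is achievable by giving agent $a$ at least one state where every permitted action still admits a $p$-successor but none ensures a $p$-successor (so $\WE_a p$ fails there), while elsewhere arranging the options so that $\WA_a$, $\SE_a$, $\SA_a$ collapse to a set already in $\mathcal{F}$.

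After fixing the diagram, the proof proceeds through three lemmas exactly as in the $\WA$ case. Lemma~\ref{lm:undefinability WE 1} verifies, by case analysis over the four members of $\mathcal{F}$ and the two agents, that $\[\WA_x\phi\],\[\SE_x\phi\],\[\SA_x\phi\]\in\mathcal{F}$ whenever $\[\phi\]\in\mathcal{F}$; each case is a direct unfolding of the corresponding clause of Definition~\ref{df:sat} against the diagram. Lemma~\ref{lm:undefinability WE 2} is a routine structural induction on $\phi$: the base case $p$ is immediate, the Boolean cases use closure of $\mathcal{F}$ under complement and union (read off the diagram in the middle panel of Figure~\ref{fig:undef WE}), and the modal cases apply Lemma~\ref{lm:undefinability WE 1}. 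Lemma~\ref{lm:undefinability WE 3} is a one-line computation of $\[\WE_a p\]$ on the chosen system, showing it is not one of the four sets in $\mathcal{F}$.

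The main obstacle is the design of the transition system: I must simultaneously make three existential/universal patterns ($\WA$, $\SE$, $\SA$) coarse enough to stay inside $\mathcal{F}$ while keeping $\WE$ fine enough to escape it. The key asymmetry to exploit is that $\WE_a\phi$ is the only modality that both quantifies existentially over permitted actions and looks at \emph{all} successors of a chosen action; this can be broken by placing a state in which every permitted action of agent $a$ has at least one $\neg p$-successor (killing $\WE_a p$) while all non-permitted actions avoid $p$-successors (so $\SA_a$, $\SE_a$ still align with $\[p\]$-style sets) and some permitted action retains a $p$-successor (so $\WA_a p$ still behaves in an $\mathcal{F}$-preserving way). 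Once such a diagram is in place, the three lemmas become mechanical.
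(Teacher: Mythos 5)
Your strategy is exactly the paper's: the paper proves Theorem~\ref{th:undefinability WE} by the same truth set algebra template as Theorem~\ref{th:undefinability WA}, with a single two-agent transition system (Figure~\ref{fig:undef WE}), the family $\mathcal{F}=\{\[p\],\[\neg p\],\[\top\],\[\bot\]\}$, closure of $\mathcal{F}$ under $\WA_x,\SE_x,\SA_x$ (Lemma~\ref{lm:undefinability WE 1}), the structural induction (Lemma~\ref{lm:undefinability WE 2}), and the escape $\[\WE_a p\]\notin\mathcal{F}$ (Lemma~\ref{lm:undefinability WE 3}). The gap is that you never actually exhibit the witnessing transition system: for this kind of theorem the concrete model \emph{is} the proof, and the rest is mechanical. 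Moreover, the desiderata you list are not sufficient to guarantee closure. You only constrain the behaviour of agent $a$'s actions with respect to $p$-successors; closure must also be verified for the argument $\[\neg p\]$ and for agent $b$, and that is precisely where careless designs break. For instance, if the state at which $\WE_a p$ is made true sends \emph{all} its transitions to the $p$-state, then $\WA_a\neg p$ fails there but holds at the other $\neg p$-state, so $\[\WA_a\neg p\]$ is a singleton outside $\mathcal{F}$ and Lemma~\ref{lm:undefinability WE 1} collapses.

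To close the gap you must fix a system and do the check. One that works: states $s,t,u$ with $\pi(p)=\{u\}$ and \emph{all} actions permitted; in $s$ both agents have actions $\{1,2\}$ and $M_s=\{((1,1),t),\,(2,2),t),\,((1,2),u),\,((2,1),u)\}$ (written as $((i,j),\cdot)$ with $i$ for $a$, $j$ for $b$); in $t$ agent $a$ has $\{1,2\}$, agent $b$ has $\{1\}$, and $M_t=\{((1,1),u),\,((2,1),t)\}$; in $u$ both agents have the single action $1$ and $M_u=\{((1,1),u)\}$. Since every action is permitted, $\[\SE_x\phi\]=\[\SA_x\phi\]=\[\top\]$ for every $\phi$ and both agents $x$; a direct check gives $\[\WA_x p\]=\[\WA_x\top\]=\[\top\]$, $\[\WA_x\neg p\]=\[\neg p\]$, and $\[\WA_x\bot\]=\[\bot\]$, so $\mathcal{F}$ is closed under all six $\WE$-free modalities as well as under complement and union; yet $\[\WE_a p\]=\{t,u\}\notin\mathcal{F}$, because in $s$ each action of $a$ has both a $t$- and a $u$-successor, while in $t$ action $1$ of $a$ ensures $u$. (Note this differs slightly from your sketch: making all actions permitted trivialises $\SE$ and $\SA$, whereas your proposal to route non-permitted actions away from $p$ would also need the extra check that $\SE_x$ and $\SA_x$ applied to $\[\neg p\]$, $\[\top\]$, $\[\bot\]$ stay in $\mathcal{F}$.) With such a system fixed, your three lemmas go through exactly as in Appendix~\ref{app:undefinability WA}, matching the paper.
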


\subsection{Undefinability of $\SE$ Through $\WA$, $\WE$, and $\SA$}\label{app:undefinability SE}

In this case, we use the transition system depicted in the left of Figure~\ref{fig:undef SE}. 
Note that, in this transition system, not all actions are permitted. In particular, the permitted actions are denoted by positive integers and the non-permitted actions are denoted by negative integers.
It is worth mentioning that, by the continuity condition in item~\ref{df:transition system M} of Definition~\ref{df:transition system}, a next state always exists even if the agents take non-permitted actions.
The family of truth sets to consider here is $\mathcal{F}=\{\[p\],\[\neg p\],\[\top\],\[\bot\]\}$, as visualised in the middle of Figure~\ref{fig:undef SE}.

The next three lemmas can be proved in a similar way as Lemma~\ref{lm:undefinability WA 1}, Lemma~\ref{lm:undefinability WA 2}, and Lemma~\ref{lm:undefinability WA 3} except for using Figure~\ref{fig:undef SE} instead of Figure~\ref{fig:undef WA}.
\begin{lemma}\label{lm:undefinability SE 1}
$\[\WA_x\phi\],\[\WE_x\phi\],\[\SA_x\phi\]\in\mathcal{F}$
for any agent $x\in\{a,b\}$
and any formula $\phi\in\Phi$
such that $\[\phi\]\in\mathcal{F}$.
\end{lemma}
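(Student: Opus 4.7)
The plan is to follow the template established in the proof of Lemma~\ref{lm:undefinability WA 1}, performing a case analysis over the four truth sets in $\mathcal{F}$ and the six modal operators $\WA_a, \WA_b, \WE_a, \WE_b, \SA_a, \SA_b$. For each combination, I would compute the resulting truth set using items 4, 5, and 7 of Definition~\ref{df:sat} and verify that the outcome lies in $\mathcal{F}$. The computations are meant to be encoded by the labelled dashed arrows between the four diagrams in the middle part of Figure~\ref{fig:undef SE}, so the task reduces to reading off the arrows and providing the argument for each.

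First, I would handle $\WA_a$ by fixing a formula $\phi$ with $\[\phi\]$ equal to one of $\[p\], \[\neg p\], \[\top\], \[\bot\]$, and, for each state of the transition system in Figure~\ref{fig:undef SE}, checking whether some permitted action $i \in D_a^s$ admits a tuple $(\delta,t)\in M_s$ with $\delta_a=i$ and $t\Vdash \phi$. Since $D_a^s$ in Figure~\ref{fig:undef SE} may now contain only positive actions while negative actions exist but are non-permitted, one must be careful to restrict the witness search to permitted actions. I would then repeat this for $\WA_b$. Next, I would handle $\WE_a$ and $\WE_b$ using item 5: for each state, the question is whether some permitted action of the agent forces every reachable next state to satisfy $\phi$. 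Finally, for $\SA_a$ and $\SA_b$ I would use item 7: for each state, the question becomes whether every action $i$ (permitted or not) such that some $(\delta,t)\in M_s$ with $\delta_a=i$ lands in $\[\phi\]$ is already in $D_a^s$.

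The key observation throughout is that the specific transition system in Figure~\ref{fig:undef SE} has been engineered so that, state by state, each of these six operators sends each of the four members of $\mathcal{F}$ back into $\mathcal{F}$. In particular, boundary cases $\[\top\]$ and $\[\bot\]$ are easy: by the continuity condition (item 4 of Definition~\ref{df:transition system}) and the nonemptiness of $D_a^s$ (item 3), the images under $\WA$ and $\WE$ of $\[\top\]$ equal $\[\top\]$ and of $\[\bot\]$ equal $\[\bot\]$, and dually the images under $\SA$ of $\[\top\]$ equals $\[\top\]$ while $\[\bot\]$ is handled by checking that no action admits $\bot$. The substantive cases are $\[p\]$ and $\[\neg p\]$, whose images must be computed by direct inspection of the diagram.

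The main obstacle is not conceptual but bookkeeping: there are twenty-four state-wise computations to verify. No single one should be hard given the definitions, but I expect that at least one of the $\SA_x$ computations, where non-permitted actions exist and could contribute to admitting $\phi$, will be the most delicate, since the failure of $\SA_x \phi$ at a state requires producing a non-permitted action whose successors do not force $\neg \phi$. Once those are checked, the statement follows directly, exactly as in the proof of Lemma~\ref{lm:undefinability WA 1}.
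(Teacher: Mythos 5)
Your overall strategy coincides with the paper's: the paper proves this lemma exactly as Lemma~\ref{lm:undefinability WA 1}, by a state-by-state verification of the $24$ combinations of modality, agent, and member of $\mathcal{F}$, read off the diagram in Figure~\ref{fig:undef SE}; your plan is that same case analysis, with the correct semantic clauses and the correct warning that permitted and non-permitted actions must be distinguished in this model.

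However, one of the claims you dispatch as an ``easy boundary case'' is wrong, and it is wrong precisely at the point where this transition system differs from the one used for $\WA$. You assert that, dually to the $\WA$ and $\WE$ cases, the image of $\[\top\]$ under $\SA_x$ is $\[\top\]$, citing continuity and the nonemptiness of $D^s_x$. But $s\Vdash\SA_x\top$ demands that \emph{every} action of $x$ at $s$ that admits of $\top$ be permitted, and by continuity every action admits of $\top$; hence $\SA_x\top$ holds at $s$ if and only if $x$ has no non-permitted action at $s$. Continuity and $D^s_x\neq\varnothing$ say nothing about that. In the system of Figure~\ref{fig:undef SE} non-permitted actions are present by design (the paper notes this explicitly, and it is what makes the $\SE$ case nontrivial), so $\[\SA_x\top\]$ is \emph{not} $\[\top\]$ for at least one agent; it is some other member of $\mathcal{F}$, determined by which states carry non-permitted actions of $x$. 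The lemma itself is unaffected, since the model is engineered so that this truth set still lies in $\mathcal{F}$, but the case cannot be settled by the duality heuristic: it requires the same direct inspection of the diagram that you reserve for $\[p\]$ and $\[\neg p\]$. With that correction, your bookkeeping plan is exactly the paper's proof.
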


\begin{lemma}\label{lm:undefinability SE 2}
$\[\phi\]\in \mathcal{F}$ for each formula $\phi\in\Phi$ that does not contain modality $\SE$.
\end{lemma}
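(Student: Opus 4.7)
The plan is to proceed by induction on the structural complexity of $\phi$, faithfully mirroring the template of Lemma~\ref{lm:undefinability WA 2}. Since the only changes are that the excluded modality is now $\SE$ rather than $\WA$ and the ambient transition system is the one in Figure~\ref{fig:undef SE}, the inductive skeleton transfers essentially verbatim; what remains is to verify the base case, the boolean closure properties of $\mathcal{F}$, and to invoke Lemma~\ref{lm:undefinability SE 1} for the modal cases.

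First I would handle the base case $\phi = p$, which holds immediately because $\[p\] \in \mathcal{F}$ by the definition of $\mathcal{F}$ visualised in the middle panel of Figure~\ref{fig:undef SE}. Next, for $\phi = \neg\psi$ and $\phi = \psi_1 \vee \psi_2$, the inductive step reduces to the fact that the four-element family $\mathcal{F} = \{\[\bot\], \[p\], \[\neg p\], \[\top\]\}$ is closed under complementation in the state set (complement pairs $\[p\] \leftrightarrow \[\neg p\]$ and $\[\top\] \leftrightarrow \[\bot\]$) and under pairwise union; both properties can be read directly off the middle part of Figure~\ref{fig:undef SE}, since these four sets form a Boolean subalgebra of the powerset of the state set. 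Combined with items~\ref{item:sat negation} and \ref{item:sat disjunction} of Definition~\ref{df:sat} and the induction hypothesis, this yields $\[\phi\] \in \mathcal{F}$ in both boolean cases.

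Finally, for each remaining modal case $\phi \in \{\WA_x\psi,\, \WE_x\psi,\, \SA_x\psi\}$ with $x\in\{a,b\}$ (the case $\SE_x\psi$ is excluded by hypothesis), I would apply the induction hypothesis to obtain $\[\psi\] \in \mathcal{F}$ and then invoke Lemma~\ref{lm:undefinability SE 1} to conclude that $\[\phi\] \in \mathcal{F}$ as well.

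The only non-routine work is already packaged inside Lemma~\ref{lm:undefinability SE 1}, whose verification proceeds by case analysis on the transitions and deontic labelling of Figure~\ref{fig:undef SE} in the manner of Lemma~\ref{lm:undefinability WA 1}. Granted that lemma, the present statement is a purely mechanical induction and should present no real obstacle; the slight conceptual care required is only to confirm that $\mathcal{F}$ actually is a Boolean subalgebra for this particular transition system, which follows simply because $\[p\]$ and $\[\neg p\]$ partition the state set.
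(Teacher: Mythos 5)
Your proposal is correct and follows essentially the same route as the paper: an induction on the structural complexity of $\phi$ whose Boolean cases rest on the closure of $\mathcal{F}=\{\[p\],\[\neg p\],\[\top\],\[\bot\]\}$ under complement and union, and whose modal cases are delegated to Lemma~\ref{lm:undefinability SE 1}, exactly as in the paper's template proof of Lemma~\ref{lm:undefinability WA 2} adapted to the transition system of Figure~\ref{fig:undef SE}. No gaps to report.
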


\begin{lemma}\label{lm:undefinability SE 3}
$\[\SE_a p\]\notin \mathcal{F}$.
\end{lemma}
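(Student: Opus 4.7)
The plan is to mimic exactly the strategy used for Lemma~\ref{lm:undefinability WA 3}: compute the truth set $\[\SE_a p\]$ directly from item~\ref{item:sat SE} of Definition~\ref{df:sat}, and then simply read off that the resulting subset of $\{s,t,u\}$ is not one of the four sets depicted in the middle of Figure~\ref{fig:undef SE}. Because the diagram at the right of Figure~\ref{fig:undef SE} already illustrates the answer, the proof amounts to verifying that illustration.

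Concretely, I would proceed state by state. For each state $y\in\{s,t,u\}$ and each action $i\in\Delta_a^y$, I first determine whether $(y,i)\stit_a p$ by inspecting every tuple $(\delta,r)\in M_y$ with $\delta_a=i$ and checking whether $r\in\pi(p)$; this is a finite, mechanical enumeration using the labelled arrows in Figure~\ref{fig:undef SE}. Having identified the set $E_a^y=\{i\in\Delta_a^y\mid(y,i)\stit_a p\}$ of ``$p$-ensuring'' actions, item~\ref{item:sat SE} of Definition~\ref{df:sat} tells me that $y\Vdash\SE_a p$ iff $E_a^y\subseteq D_a^y$. Because permitted actions are drawn in Figure~\ref{fig:undef SE} as positive integers and non-permitted ones as negative integers, this check reduces to verifying the sign of each element of $E_a^y$.

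The key design point of the transition system in Figure~\ref{fig:undef SE} is that it must contain a state where some \emph{non-permitted} action of agent $a$ ensures $p$ (so that $\SE_a p$ fails there) alongside another state where every $p$-ensuring action of agent $a$ is permitted (so that $\SE_a p$ holds there), and the combination of the two must yield a subset of $\{s,t,u\}$ distinct from each of $\varnothing,\{u\},\{s,t\},\{s,t,u\}$ (the four elements of $\mathcal{F}$ as visualised in the middle of Figure~\ref{fig:undef SE}). Once this arithmetic is carried out, $\[\SE_a p\]\notin\mathcal{F}$ is immediate by comparison with the four diagrams in $\mathcal{F}$.

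The main obstacle is not the argument itself, which is essentially a bookkeeping exercise analogous to the one in Lemma~\ref{lm:undefinability WA 3}, but the underlying choice of transition system: the mechanism and deontic constraints in Figure~\ref{fig:undef SE} must be arranged so that (i) Lemma~\ref{lm:undefinability SE 1} still holds, i.e.\ the three other modalities are closed on $\mathcal{F}$, and (ii) $\SE_a$ alone breaks out of $\mathcal{F}$. Once one trusts the construction of Figure~\ref{fig:undef SE}, the proof of this lemma is just the direct semantic evaluation described above.
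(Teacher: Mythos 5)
Your proposal is correct and follows essentially the same route as the paper: the paper's proof of this lemma (by reference to the proof of Lemma~\ref{lm:undefinability WA 3}) is exactly a direct, state-by-state semantic evaluation of $\[\SE_a p\]$ in the transition system of Figure~\ref{fig:undef SE} via item~\ref{item:sat SE} of Definition~\ref{df:sat}, followed by the observation that the resulting set differs from the four truth sets in $\mathcal{F}$. The only thing you leave implicit --- the concrete mechanism and deontic constraints of that transition system (in particular a state where a non-permitted action of $a$ ensures $p$) --- is likewise delegated to the figure in the paper, so there is no substantive difference.
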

The next theorem follows from Definition~\ref{df:semantically equivalent}, Lemma~\ref{lm:undefinability SE 2}, and Lemma~\ref{lm:undefinability SE 3}.

\begin{theorem}[undefinability of $\SE$]\label{th:undefinability SE}
The formula $\SE_a p$ is not semantically equivalent to any formula in language $\Phi$ that does not use modality $\SE$.  
\end{theorem}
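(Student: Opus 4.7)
The plan is to mirror the truth-set algebra argument used for Theorem~\ref{th:undefinability WA}, but with the transition system redesigned so that the distinguishing feature of $\SE$---namely, that it quantifies over \emph{non-permitted} actions that ensure $\phi$---can actually be detected. In the proof of Theorem~\ref{th:undefinability WA} every action happened to be permitted, which is fine for exhibiting a truth set that $\WA$ separates from the closure family; for $\SE$ we must instead have at least one state containing a non-permitted action, so that $\SE_a p$ can fail at one state and hold at another, producing a truth set outside $\mathcal{F}=\{\[p\],\[\neg p\],\[\top\],\[\bot\]\}$.

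First, I would build a two-agent transition system over three states with propositional variable $p$ true at exactly one state, assigning permitted (positive) and non-permitted (negative) actions so that: (i) in the $p$-state, every action of agent $a$ that ensures $p$ is permitted, making $\SE_a p$ true there; (ii) in some $\neg p$-state, there exists a non-permitted action of agent $a$ whose unique successor satisfies $p$, making $\SE_a p$ false there; and (iii) in the remaining state, $\SE_a p$ takes the value that keeps $\[\SE_a p\]$ outside of $\{\[p\],\[\neg p\],\[\top\],\[\bot\]\}$. The diagram in Figure~\ref{fig:undef SE} encodes exactly such a system.

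Second, I would establish the closure lemma (Lemma~\ref{lm:undefinability SE 1}): for every agent $x\in\{a,b\}$, every modality $\odot\in\{\WA,\WE,\SA\}$, and every $\phi$ with $\[\phi\]\in\mathcal{F}$, one has $\[\odot_x\phi\]\in\mathcal{F}$. This is a finite case check (four truth sets, two agents, three modalities) using items 4, 5, and 7 of Definition~\ref{df:sat}, done by inspecting each state of the chosen transition system. The inductive lemma (Lemma~\ref{lm:undefinability SE 2}) then follows routinely by induction on formula complexity: the base case $p$ is immediate, the Boolean cases use that $\mathcal{F}$ is closed under complement and union (visible from the diagram), and the three modal cases use Lemma~\ref{lm:undefinability SE 1}. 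Finally, Lemma~\ref{lm:undefinability SE 3} is a direct computation of $\[\SE_a p\]$ using item~6 of Definition~\ref{df:sat}, and the theorem follows from Definition~\ref{df:semantically equivalent}: if $\SE_a p$ were equivalent to some $\SE$-free $\psi$, then $\[\SE_a p\]=\[\psi\]\in\mathcal{F}$ by Lemma~\ref{lm:undefinability SE 2}, contradicting Lemma~\ref{lm:undefinability SE 3}.

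The main obstacle is the design of the transition system itself: the permitted/non-permitted structure, the transition relation, and the valuation of $p$ must simultaneously keep $\mathcal{F}$ closed under $\WA$, $\WE$, and $\SA$ (so that the inductive argument goes through) while forcing $\[\SE_a p\]$ to lie outside $\mathcal{F}$. Once Figure~\ref{fig:undef SE} is fixed, every other step reduces to an enumeration, but finding the right figure in the first place---in particular balancing the permitted actions so that $\WA$ and $\SA$ do not accidentally detect the same separation that $\SE$ does---is the conceptually delicate part.
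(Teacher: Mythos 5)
Your proposal follows essentially the same route as the paper: its proof of Theorem~\ref{th:undefinability SE} likewise fixes the two-agent transition system of Figure~\ref{fig:undef SE} (with non-permitted actions, denoted by negative integers), takes $\mathcal{F}=\{\[p\],\[\neg p\],\[\top\],\[\bot\]\}$, establishes closure of $\mathcal{F}$ under $\WA$, $\WE$, and $\SA$ (Lemma~\ref{lm:undefinability SE 1}), the induction on structural complexity (Lemma~\ref{lm:undefinability SE 2}), and $\[\SE_a p\]\notin\mathcal{F}$ (Lemma~\ref{lm:undefinability SE 3}), concluding via Definition~\ref{df:semantically equivalent} exactly as you outline. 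The one piece you leave implicit---the explicit model satisfying your conditions (i)--(iii) while keeping $\mathcal{F}$ closed under the other three modalities---is precisely what the paper supplies by exhibiting Figure~\ref{fig:undef SE}, so your argument coincides with the paper's.
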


\subsection{Undefinability of $\SA$ Through $\SE$, $\WA$, and $\WE$}
\label{sec:undefinability SA}

In this case, we use the transition system depicted in the left part of Figure~\ref{fig:undef SA}. 
We consider the family of truth sets $\mathcal{F}=\{\[p\],\[\neg p\],\[\top\],\[\bot\]\}$, which are visualised in the middle of Figure~\ref{fig:undef SA}.

The next three lemmas can be proved in a similar way as Lemma~\ref{lm:undefinability WA 1}, Lemma~\ref{lm:undefinability WA 2}, and Lemma~\ref{lm:undefinability WA 3} except for using Figure~\ref{fig:undef SA} instead of Figure~\ref{fig:undef WA}.

\begin{lemma}\label{lm:undefinability SA 1}
$\[\WA_x\phi\],\[\WE_x\phi\],\[\SE_x\phi\]\in\mathcal{F}$
for any agent $x\in\{a,b\}$
and any formula $\phi\in\Phi$
such that $\[\phi\]\in\mathcal{F}$.
\end{lemma}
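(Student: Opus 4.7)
The plan is to mirror the proof of Lemma~\ref{lm:undefinability WA 1} exactly. I would verify, for each of the four truth sets $T \in \mathcal{F}$, each of the three modalities $\odot \in \{\WA, \WE, \SE\}$, and each agent $x \in \{a,b\}$, that $\[\odot_x\phi\] \in \mathcal{F}$ whenever $\[\phi\] = T$. This gives $4 \cdot 3 \cdot 2 = 24$ cases. The content of the lemma is precisely the statement that the four diagrams in the middle part of Figure~\ref{fig:undef SA} are mapped into themselves by each of these modalities, as indicated by the dashed labelled arrows drawn in that figure.

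For each case, I would apply the relevant item of Definition~\ref{df:sat}: item~\ref{item:sat WA} for $\WA_x$, item~\ref{item:sat WE} for $\WE_x$, and item~\ref{item:sat SE} for $\SE_x$. For each state $y$ of the transition system shown in the left of Figure~\ref{fig:undef SA}, the computation walks through the actions available to agent $x$ at $y$, distinguishes permitted actions from non-permitted ones, and inspects the target states of the arrows labelled by action profiles with $\delta_x$ equal to the action under consideration. Membership of $y$ in $\[\odot_x\phi\]$ is then read off directly from the definition, and one checks that the resulting subset of states is indeed one of $\[p\], \[\neg p\], \[\top\], \[\bot\]$.

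The main obstacle will be bookkeeping rather than conceptual depth. Since the transition system in Figure~\ref{fig:undef SA} is constructed so that $\mathcal{F}$ is closed under $\WA$, $\WE$, and $\SE$ but, crucially, \emph{not} under $\SA$, careful attention must be paid at each state to the partition of $\Delta_x^y$ into permitted actions (positive integers) and non-permitted actions (negative integers); confusing the two would break closure for some of the modalities. A useful organisational device is to tabulate, once for each state $y$ and each agent $x$, both this partition and the set $\{t \mid (\delta,t) \in M_y,\ \delta_x = i\}$ of possible next states under each action $i$. Each of the 24 verifications then reduces to a direct lookup in this table, and the closure claim can be read off by comparing the resulting truth set against the four diagrams in the middle of Figure~\ref{fig:undef SA}, exactly as done for the analogous labelled arrows in the proof of Lemma~\ref{lm:undefinability WA 1}.
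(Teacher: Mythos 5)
Your proposal matches the paper's own treatment: the paper proves this lemma exactly as you describe, by repeating the case-by-case verification of Lemma~\ref{lm:undefinability WA 1} (one representative case worked out via Definition~\ref{df:sat}, the remaining cases checked analogously and recorded as the dashed labelled arrows) with the transition system of Figure~\ref{fig:undef SA} in place of that of Figure~\ref{fig:undef WA}. Your added caution about tracking the permitted/non-permitted partition (positive versus negative integers) is precisely the only point where this model differs from the one used for $\WA$, so the proposal is correct and takes essentially the same route.
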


\begin{lemma}\label{lm:undefinability SA 2}
$\[\phi\]\in \mathcal{F}$ for each formula $\phi\in\Phi$ that does not contain modality $\SA$.
\end{lemma}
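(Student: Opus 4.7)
The plan is to mimic the structural-induction argument used for Lemma~\ref{lm:undefinability WA 2}, Lemma~\ref{lm:undefinability WE 2}, and Lemma~\ref{lm:undefinability SE 2}, now relying on the closure of $\mathcal{F}$ under the three surviving modalities established in Lemma~\ref{lm:undefinability SA 1}. The induction is on the structural complexity of $\phi$.

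For the base case, if $\phi$ is the propositional variable $p$, then $\[p\]\in\mathcal{F}$ by the very definition of $\mathcal{F}$. For the Boolean cases, I would inspect the four diagrams visualised in the middle of Figure~\ref{fig:undef SA} to check that $\mathcal{F}=\{\[p\],\[\neg p\],\[\top\],\[\bot\]\}$ is closed under complement (the complement of $\[p\]$ is $\[\neg p\]$, the complement of $\[\top\]$ is $\[\bot\]$, and vice versa) and under union (each pair of truth sets in $\mathcal{F}$ has its union again in $\mathcal{F}$). Then for $\phi=\neg\psi$ one uses item~\ref{item:sat negation} of Definition~\ref{df:sat} together with the induction hypothesis $\[\psi\]\in\mathcal{F}$, and for $\phi=\psi_1\vee\psi_2$ one uses item~\ref{item:sat disjunction} of Definition~\ref{df:sat} together with $\[\psi_1\],\[\psi_2\]\in\mathcal{F}$ from the induction hypothesis.

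For the modal cases, since $\phi$ does not contain $\SA$, the only modal forms to treat are $\WA_x\psi$, $\WE_x\psi$, and $\SE_x\psi$ for $x\in\{a,b\}$. In each of these, the induction hypothesis gives $\[\psi\]\in\mathcal{F}$, and then Lemma~\ref{lm:undefinability SA 1} directly yields $\[\phi\]\in\mathcal{F}$.

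The only potential obstacle is the bookkeeping of the Boolean-closure check and the verification of Lemma~\ref{lm:undefinability SA 1}: this lemma is the real substance and has to be confirmed by inspecting the dashed arrows in the middle part of Figure~\ref{fig:undef SA}, but once it is in hand the present lemma is a routine induction identical in shape to the previous three.
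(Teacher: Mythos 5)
Your proposal matches the paper's argument: the paper proves this lemma exactly as Lemma~\ref{lm:undefinability WA 2}, by induction on structural complexity, using the closure of $\mathcal{F}$ under complement and union (read off the diagrams in Figure~\ref{fig:undef SA}) for the Boolean cases and invoking Lemma~\ref{lm:undefinability SA 1} for the modal cases $\WA_x\psi$, $\WE_x\psi$, $\SE_x\psi$. The induction is routine once Lemma~\ref{lm:undefinability SA 1} is in hand, as you note, so your proof is correct and essentially identical to the paper's.
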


\begin{lemma}\label{lm:undefinability SA 3}
$\[\SA_a p\]\notin \mathcal{F}$.
\end{lemma}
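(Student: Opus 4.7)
The plan is to compute the truth set $\[\SA_a p\]$ directly from item~\ref{item:sat SA} of Definition~\ref{df:sat}, applied to the transition system shown in the left part of Figure~\ref{fig:undef SA}, and then verify by inspection against the four diagrams in the middle of the same figure that the resulting set coincides with none of $\[p\]$, $\[\neg p\]$, $\[\top\]$, $\[\bot\]$.

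Concretely, for each state $x$ of the transition system I would proceed in two steps. First, for every action $i \in \Delta_a^x$ available to agent $a$, I would determine whether $(x,i)\nstit_a\neg p$, that is, whether action $i$ \emph{admits} of $p$. By the definition of $\stit_a$, this reduces to checking whether there is some action profile $\delta$ with $\delta_a = i$ and some state $y$ with $(\delta,y)\in M_x$ and $y\in\pi(p)$; in terms of the diagram, one just reads off the labelled arrows leaving $x$ and asks whether any of them lands on a $p$-state while agent $a$ is playing $i$. Second, having collected the set of actions of agent $a$ that admit of $p$ in state $x$, I would check whether this set is contained in $D_a^x$ (the actions labelled by positive integers). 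The state $x$ belongs to $\[\SA_a p\]$ exactly when this inclusion holds.

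Once $\[\SA_a p\]$ has been computed state-by-state, the proof is completed by comparing it to each of the four truth sets $\[p\]$, $\[\neg p\]$, $\[\top\]$, $\[\bot\]$ depicted in the middle of Figure~\ref{fig:undef SA} and observing that it coincides with none of them. The argument is entirely parallel to that of Lemma~\ref{lm:undefinability WA 3}; there is no real conceptual obstacle. The only care required is the bookkeeping: one must be careful to consider \emph{all} action profiles extending a given action $i$ of agent $a$, since $(x,i)\nstit_a\neg p$ is an existential statement over the choices of the other agent, and conflating ``admits'' with ``ensures'' would give the wrong set. Provided the transition system in Figure~\ref{fig:undef SA} has been designed, as is standard for the truth set algebra method, precisely so that $\[\SA_a p\]$ falls outside $\mathcal{F}$ while Lemma~\ref{lm:undefinability SA 1} keeps $\mathcal{F}$ closed under the remaining three modalities, the verification is a direct diagram chase.
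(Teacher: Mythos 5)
Your plan coincides with the paper's own argument: the paper proves this lemma exactly by computing $\[\SA_a p\]$ in the transition system of Figure~\ref{fig:undef SA} via item~\ref{item:sat SA} of Definition~\ref{df:sat} (the result being the diagram on the right of that figure) and observing it lies outside $\mathcal{F}$, just as in Lemma~\ref{lm:undefinability WA 3}. Your state-by-state diagram chase, including the caveat that admitting is existential over the other agent's actions, is precisely that verification, so the proposal is correct and essentially identical to the paper's proof.
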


The next result follows from Definition~\ref{df:semantically equivalent}, Lemma~\ref{lm:undefinability SA 2}, and Lemma~\ref{lm:undefinability SA 3}.
\begin{theorem}[undefinability of $\SA$]\label{th:undefinability SA}
The formula $\SA_a p$ is not semantically equivalent to any formula in language $\Phi$ that does not use modality $\SA$.  
\end{theorem}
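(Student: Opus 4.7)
The plan is to mirror the truth set algebra argument used for Theorem~\ref{th:undefinability WA}, but with a transition system tailored to expose $\SA$. I would fix a two-agent, single-variable language (agents $a,b$, variable $p$) and exhibit a specific transition system---the one depicted in the left of Figure~\ref{fig:undef SA}---chosen so that $\[\SA_a p\]$ is a set of states that cannot be built up from $\{\[p\],\[\neg p\],\[\top\],\[\bot\]\}$ using complementation, union, and the three remaining modalities $\WA,\WE,\SE$. The target family is $\mathcal{F}=\{\[p\],\[\neg p\],\[\top\],\[\bot\]\}$, and the two pillars of the argument will be a closure lemma saying $\mathcal{F}$ is preserved under $\WA_x,\WE_x,\SE_x$ for $x\in\{a,b\}$, and a separation lemma saying $\[\SA_a p\]\notin\mathcal{F}$.

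For the closure lemma (Lemma~\ref{lm:undefinability SA 1}), I would go case by case through each of the $4\times 3\times 2=24$ pairs (truth set in $\mathcal{F}$, modality in $\{\WA,\WE,\SE\}$, agent in $\{a,b\}$) and compute the resulting truth set directly from items~\ref{item:sat WA}, \ref{item:sat WE}, and \ref{item:sat SE} of Definition~\ref{df:sat}, using the action sets, permitted sets, and mechanism of the chosen transition system. The result of each computation should land back in $\mathcal{F}$; I would record the findings as the labelled dashed arrows in the middle diagram of Figure~\ref{fig:undef SA}, exactly in the style of the proof of Lemma~\ref{lm:undefinability WA 1}. The structural induction in Lemma~\ref{lm:undefinability SA 2} is then routine: the base case is $\[p\]\in\mathcal{F}$, the negation and disjunction cases use that $\mathcal{F}$ is visibly closed under complement and union in the diagram, and the modal cases invoke Lemma~\ref{lm:undefinability SA 1}.

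For the separation lemma (Lemma~\ref{lm:undefinability SA 3}), I would explicitly compute $\[\SA_a p\]$ from item~\ref{item:sat SA} of Definition~\ref{df:sat}, checking state by state whether every action $i$ of agent $a$ with $(s,i)\nstit_a\neg p$ is permitted, and verifying that the resulting set of states coincides with the shaded region shown in the right of Figure~\ref{fig:undef SA}, which by inspection is none of $\[p\],\[\neg p\],\[\top\],\[\bot\]$. The theorem then drops out: if $\SA_a p$ were semantically equivalent to some $\SA$-free formula $\psi$, Definition~\ref{df:semantically equivalent} would force $\[\SA_a p\]=\[\psi\]$ in our transition system, but Lemma~\ref{lm:undefinability SA 2} gives $\[\psi\]\in\mathcal{F}$ while Lemma~\ref{lm:undefinability SA 3} gives $\[\SA_a p\]\notin\mathcal{F}$, a contradiction.

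The only genuinely non-routine step is the design of the transition system itself: it must be delicate enough that $\[\SA_a p\]$ escapes $\mathcal{F}$ yet symmetric enough that the other three modalities always send elements of $\mathcal{F}$ back into $\mathcal{F}$. In particular, since $\SA$ is the only modality that quantifies universally over non-permitted actions admitting $\phi$, the system must include some non-permitted actions (denoted by negative integers, as in the $\SE$ case) whose targets intersect $\[p\]$ nontrivially, while permitted actions and the $\stit$-behaviour relevant to $\WA,\WE,\SE$ remain coarse enough to keep those truth sets inside $\mathcal{F}$. Assuming the transition system in Figure~\ref{fig:undef SA} has this property (and is analogous to the one used in Section~\ref{app:undefinability SE}), the remainder of the proof is bookkeeping in the style already established for the other three undefinability results.
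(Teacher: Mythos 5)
Your proposal takes essentially the same route as the paper's own proof: the truth set algebra technique on the transition system of Figure~\ref{fig:undef SA}, with the family $\mathcal{F}=\{\[p\],\[\neg p\],\[\top\],\[\bot\]\}$, a closure lemma for $\WA$, $\WE$, $\SE$ (Lemma~\ref{lm:undefinability SA 1}), the structural induction of Lemma~\ref{lm:undefinability SA 2}, and the separation fact $\[\SA_a p\]\notin\mathcal{F}$ (Lemma~\ref{lm:undefinability SA 3}). Like the paper, you defer the one genuinely creative ingredient---the concrete model with non-permitted actions witnessing $\SA$---to the figure, so the argument matches the paper's proof at the same level of detail and is correct.
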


\begin{figure*}[hbt]
\begin{center}
\scalebox{.47}{\includegraphics{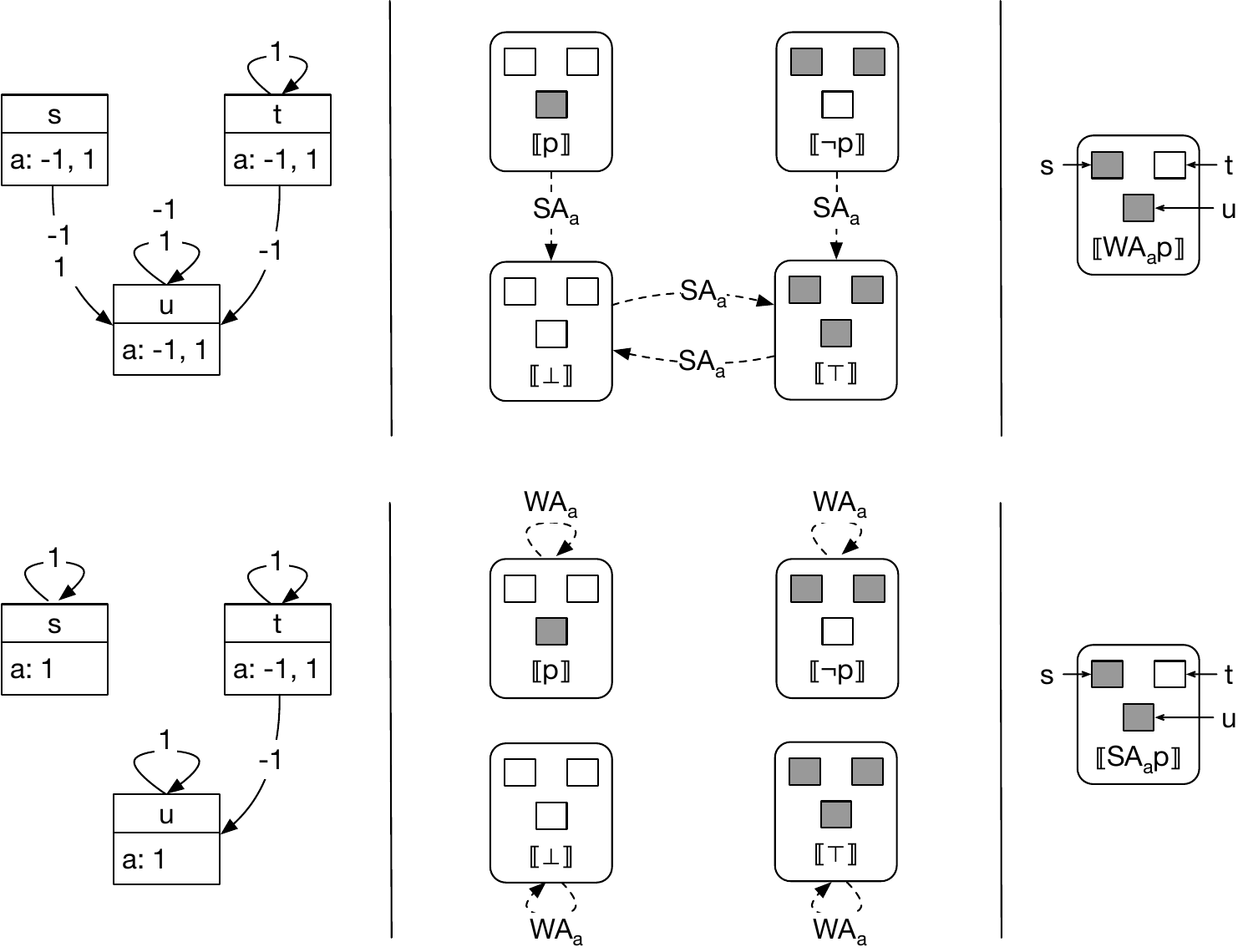}}
\caption{Toward mutual undefinability of $\WA$ (top) and $\SA$ (bottom) in single-agent deterministic settings.}
\label{fig:undef single}
\end{center}
\end{figure*}

\subsection{Undefinability in Single-Agent Deterministic Settings}\label{app:undefinability single}
To show the mutual undefinability of modalities $\WA$ and $\SA$ in single-agent deterministic settings, we use the two transition systems in the left of Figure~\ref{fig:undef single}. The proofs are omitted because they are similar to the proof of Theorem~\ref{th:undefinability WA} in Subsection~\ref{app:undefinability WA}.


\section{Proofs Toward Section~\ref{sec:axiom}}

\subsection{$\WE$ is ``Weak'', $\SE$ is ``Strong''}\label{app:WE weak ir} \label{app:SE strong ir}

\begin{theorem}\label{th:WE weak}
The rule
$
\dfrac{\phi\to\psi}{\WE_a\phi\to\WE_a\psi}
$    
is admissible.
\end{theorem}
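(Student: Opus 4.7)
The plan is to derive the admissibility of the monotonicity rule for $\WE$ by combining axiom~\ref{ax: WA WE}, axiom~\ref{ax: WA bot}, and the monotonicity rule \ref{ir: WA} for $\WA$. The key observation is that axiom~\ref{ax: WA WE} already links $\WE$ with $\WA$: it asserts $\WE_a\phi \wedge \neg\WE_a\psi \to \WA_a(\phi \wedge \neg\psi)$. Assuming the premise $\phi \to \psi$, the conjunction $\phi \wedge \neg\psi$ is propositionally equivalent to $\bot$, so $\WA_a(\phi \wedge \neg\psi)$ should reduce, through the monotonicity of $\WA$, to $\WA_a\bot$, which is refuted by axiom~\ref{ax: WA bot}. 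Contraposing axiom~\ref{ax: WA WE} then yields $\WE_a\phi \to \WE_a\psi$.

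Concretely, I would first take the assumed theorem $\vdash \phi \to \psi$ and use propositional reasoning to obtain $\vdash \phi \wedge \neg\psi \to \bot$. Applying rule~\ref{ir: WA} gives $\vdash \WA_a(\phi \wedge \neg\psi) \to \WA_a\bot$. Combining this with axiom~\ref{ax: WA bot}, namely $\vdash \neg\WA_a\bot$, produces $\vdash \neg\WA_a(\phi \wedge \neg\psi)$ by propositional reasoning. Finally, taking the contrapositive of the instance of axiom~\ref{ax: WA WE} gives $\vdash \neg\WA_a(\phi \wedge \neg\psi) \to \neg(\WE_a\phi \wedge \neg\WE_a\psi)$, and chaining yields $\vdash \WE_a\phi \to \WE_a\psi$.

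There is essentially no technical obstacle here: the argument is a short syntactic derivation whose only subtlety is recognising that axiom~\ref{ax: WA WE} plays the role of a ``bridge'' between the $\WE$ and $\WA$ modalities, letting us transfer the already-available monotonicity of $\WA$ (rule~\ref{ir: WA}) over to $\WE$. The choice of $\bot$ as the target of the reduction, supported by axiom~\ref{ax: WA bot}, is what makes the chain close. The same template (bridge axiom plus rule~\ref{ir: SA} and axiom~\ref{ax: SA bot}) could be invoked if one later wished to show the dual anti-monotonicity rule for $\SE$, which explains why this single short derivation suffices to justify calling $\WE$ a weak permission in the sense of statement~\eqref{eq:weak permission ir}.
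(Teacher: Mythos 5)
Your derivation is correct and coincides step for step with the paper's own proof: both start from $\vdash\phi\wedge\neg\psi\to\bot$, apply rule~\ref{ir: WA} to get $\vdash\WA_a(\phi\wedge\neg\psi)\to\WA_a\bot$, discharge this with axiom~\ref{ax: WA bot}, and then contrapose axiom~\ref{ax: WA WE} to conclude $\vdash\WE_a\phi\to\WE_a\psi$. No gaps; your closing remark about the dual argument for $\SE$ matches the paper's Theorem~\ref{th:SE strong} as well.
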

\begin{proof}
Suppose that 
$\vdash \phi\to\psi$.
Then, 
$\vdash\phi\wedge\neg\psi\to\bot$
by the laws of propositional reasoning.
Thus, by inference rule~\ref{ir: WA},
$\vdash\WA_a(\phi\wedge\neg\psi)\to\WA_a\bot$.
Then, by the law of contraposition,
$\vdash\neg\WA_a\bot\to\neg\WA_a(\phi\wedge\neg\psi)$.
Hence, by axiom~\ref{ax: WA bot} and the Modus Ponens rule, 
$\vdash\neg\WA_a(\phi\wedge\neg\psi)$.
Thus,
$\vdash\neg\WE_a\phi\vee\WE_a\psi$
by the contrapositive of axiom~\ref{ax: WA WE}.
Therefore, $\vdash\WE_a\phi\to\WE_a\psi$ by propositional reasoning.
\end{proof}

\vspace{1mm}
Theorem~\ref{th:WE weak} shows that $\WE$ is a form of \textbf{weak} permission.

\begin{theorem}\label{th:SE strong}
The rule
$
\dfrac{\phi\to\psi}{\SE_a\psi\to\SE_a\phi}
$    
is admissible.
\end{theorem}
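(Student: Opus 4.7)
The proof plan is to mirror the structure of Theorem~\ref{th:WE weak}, but with all modalities replaced by their ``strong'' counterparts and using the dual axioms. The key observation is that the proof of Theorem~\ref{th:WE weak} relied on four ingredients: (i) propositional reasoning to get $\phi\wedge\neg\psi\to\bot$; (ii) inference rule~\ref{ir: WA} (monotonicity of $\WA$); (iii) axiom~\ref{ax: WA bot} ($\neg\WA_a\bot$); and (iv) axiom~\ref{ax: WA WE} to ``cross over'' from a $\WA$-statement to a disjunction of $\WE$-statements. For the strong case we have a perfectly parallel toolkit: inference rule~\ref{ir: SA} (anti-monotonicity of $\SA$), axiom~\ref{ax: SA bot} ($\SA_a\bot$), and axiom~\ref{ax: SE SA} to cross over from an $\SA$-statement to the desired disjunction of $\SE$-statements.

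Concretely, I would proceed as follows. First, assume $\vdash\phi\to\psi$. Then by propositional reasoning, $\vdash\phi\wedge\neg\psi\to\bot$. Apply inference rule~\ref{ir: SA} to this implication, noting that the anti-monotonicity direction gives $\vdash\SA_a\bot\to\SA_a(\phi\wedge\neg\psi)$. Combining with axiom~\ref{ax: SA bot} via Modus Ponens yields $\vdash\SA_a(\phi\wedge\neg\psi)$. Now take the contrapositive of axiom~\ref{ax: SE SA}, which is the implication $\SA_a(\phi\wedge\neg\psi)\to\SE_a\phi\vee\neg\SE_a\psi$, and apply Modus Ponens once more to obtain $\vdash\SE_a\phi\vee\neg\SE_a\psi$. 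Rewriting this disjunction as an implication gives $\vdash\SE_a\psi\to\SE_a\phi$, as required.

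There is essentially no hard step: because the axiom system was designed with this weak/strong duality in mind, each ingredient of the $\WE$-proof has an exact counterpart here. The only subtlety worth double-checking is the direction of the inference rule~\ref{ir: SA}: unlike rule~\ref{ir: WA}, it reverses the implication in the conclusion, which is precisely what makes the final implication come out as $\SE_a\psi\to\SE_a\phi$ rather than $\SE_a\phi\to\SE_a\psi$. Once this alignment is confirmed, the derivation is a routine four-line calculation, and the theorem establishes that $\SE$ satisfies the anti-monotonicity rule~\eqref{eq:strong permission ir}, confirming that it is a form of \textbf{strong} permission.
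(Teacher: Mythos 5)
Your proposal is correct and coincides with the paper's own proof: both derive $\vdash\SA_a(\phi\wedge\neg\psi)$ from rule~\ref{ir: SA} and axiom~\ref{ax: SA bot}, then apply the contrapositive of axiom~\ref{ax: SE SA} and propositional reasoning to obtain $\vdash\SE_a\psi\to\SE_a\phi$. Nothing further is needed.
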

\begin{proof}
Suppose that 
$\vdash \phi\to\psi$.
Then, 
$\vdash\phi\wedge\neg\psi\to\bot$
by the laws of propositional reasoning.
Thus, by inference rule~\ref{ir: SA},
$\vdash\SA_a\bot\to\SA_a(\phi\wedge\neg\psi)$.
Hence, by axiom~\ref{ax: SA bot} and the Modus Ponens rule, 
$\vdash\SA_a(\phi\wedge\neg\psi)$.
Thus,
$\vdash\SE_a\phi\vee\neg\SE_a\psi$
by the contrapositive of axiom~\ref{ax: SE SA}.
Therefore, by propositional reasoning, $\vdash\SE_a\psi\to\SE_a\phi$.
\end{proof}

Theorem~\ref{th:SE strong} shows that $\SE$ is a form of \textbf{strong} permission.

\subsection{Proof of Lemma~\ref{lm:deduction}}\label{app:proof lm deduction}

\textbf{Lemma~\ref{lm:deduction} (deduction)} \textit{If $X,\phi\vdash \psi$, then $X\vdash\phi\to\psi$.}
\begin{proof}
Suppose that sequence $\psi_1,\dots,\psi_n$ is a proof of $\psi$ from set $X\cup\{\phi\}$ and the theorems of our logical system that uses the Modus Ponens rule {\em only}. In other words, for each $k\le n$, either
\begin{enumerate}
    \item $\vdash\psi_k$, or
    \item $\psi_k\in X$, or
    \item $\psi_k$ is equal to $\phi$, or
    \item there are $i,j<k$ such that formula $\psi_j$ has the form of $\psi_i\to\psi_k$.
\end{enumerate}
It suffices to show that $X\vdash\phi\to\psi_k$ for each $k\le n$. We prove this by induction on $k$ by considering the four cases above separately.

\vspace{1mm}
\noindent{\em Case 1}: $\vdash\psi_k$. Note that $\psi_k\to(\phi\to\psi_k)$ is a propositional tautology, and thus, is an axiom of our logical system. Hence, $\vdash\phi\to\psi_k$ by the Modus Ponens rule. Therefore, $X\vdash\phi\to\psi_k$. 

\vspace{1mm}
\noindent{\em Case 2}: $\psi_k\in X$. Then, $X\vdash\psi_k$.

\vspace{1mm}
\noindent{\em Case 3}: Formula $\psi_k$ is equal to $\phi$. Thus, $\phi\to\psi_k$ is a propositional tautology. Hence, $X\vdash\phi\to\psi_k$. 

\vspace{1mm}
\noindent{\em Case 4}: Formula $\psi_j$ is equal to $\psi_i\to\psi_k$ for some $i,j<k$. In this case, by the induction hypothesis, $X\vdash\phi\to\psi_i$ and $X\vdash\phi\to(\psi_i\to\psi_k)$. Note that formula
\begin{equation}\notag
(\phi\to\psi_i)\to\big((\phi\to(\psi_i\to\psi_k))\to(\phi\to\psi_k)\big)
\end{equation}
is a propositional tautology. Therefore, $X\vdash \phi\to\psi_k$ by applying the Modus Ponens rule twice.
\end{proof}

\subsection{Proofs of Lemma~\ref{lm:axiom deduction WE WA} and Lemma~\ref{lm:axiom deduction SE SA}} \label{app:proof of axioms deduction lemma}

\paragraph{Lemma~\ref{lm:axiom deduction WE WA}. $\vdash\WE_a\phi\wedge\neg\WA_a\psi\to\WE_a(\phi\wedge\neg\psi)$.}
\begin{proof}
The propositional tautology
$\phi\wedge\neg(\phi\wedge\neg\psi)\to\psi$, by rule~\ref{ir: WA}, implies
\begin{equation}\notag
\vdash \WA_a(\phi\wedge\neg(\phi\wedge\neg\psi))\to\WA_a\psi.
\end{equation}
Then, by the instance
\begin{equation}\notag
\WE_a\phi\wedge\neg\WE_a(\phi\wedge\neg\psi)\to\WA_a(\phi\wedge\neg(\phi\wedge\neg\psi))
\end{equation}
of axiom~\ref{ax: WA WE} and propositional reasoning
\begin{equation}\notag
\vdash\WE_a\phi\wedge\neg\WE_a(\phi\wedge\neg\psi)\to\WA_a\psi.
\end{equation}
Therefore, $\vdash\WE_a\phi\wedge\neg\WA_a\psi\to\WE_a(\phi\wedge\neg\psi)$ again by propositional reasoning.
\end{proof}

\paragraph{Lemma~\ref{lm:axiom deduction SE SA}. $\vdash\neg\SE_a\phi\wedge\SA_a\psi\to\neg\SE_a(\phi\wedge\neg\psi)$.}

\begin{proof}
The propositional tautology
$\phi\wedge\neg(\phi\wedge\neg\psi)\to\psi$, by rule~\ref{ir: SA}, implies
\begin{equation}\notag
\vdash \SA_a\psi\to\SA_a(\phi\wedge\neg(\phi\wedge\neg\psi)).
\end{equation}
By contraposition, the above statement implies
\begin{equation}\notag
\vdash \neg\SA_a(\phi\wedge\neg(\phi\wedge\neg\psi))\to\neg\SA_a\psi.
\end{equation}
Then, by the instance
\begin{equation}\notag
\neg\SE_a\phi\wedge\SE_a(\phi\wedge\neg\psi)\to\neg\SA_a(\phi\wedge\neg(\phi\wedge\neg\psi))
\end{equation}
of axiom~\ref{ax: SE SA} and propositional reasoning,
\begin{equation}\notag
\vdash\neg\SE_a\phi\wedge\SE_a(\phi\wedge\neg\psi)\to\neg\SA_a\psi.
\end{equation}
Therefore, $\vdash\neg\SE_a\phi\wedge\SA_a\psi\to\neg\SE_a(\phi\wedge\neg\psi)$ again by propositional reasoning.
\end{proof}

\section{Proof Toward Subsection~\ref{sec:completeness theorem}}
\label{app:proof of truth lemma}

To improve the readability, in Subsection~\ref{app:canonical model lemmas}, we first show four auxiliary lemmas used in the induction steps of the proof for Lemma~\ref{lm:truth lemma}. Then, in Subsection~\ref{app:core prove of truth lemma}, we prove Lemma~\ref{lm:truth lemma}.
After that, we prove Theorem~\ref{th:completeness} in Subsection~\ref{app:completeness theorem}.

\subsection{Auxiliary Lemmas}\label{app:canonical model lemmas}

Let us first see a theorem of the logic system that will be used in the proofs of the four auxiliary lemmas.

\begin{lemma}\label{lm:axiom deduction WA SA}
$\vdash\neg\WA_a\phi\wedge\SA_a\top\to\neg\WA_b\phi\wedge\SA_b\phi$ .
\end{lemma}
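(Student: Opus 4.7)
The plan is to derive this lemma directly from Axiom~\ref{ax: WA SA} by instantiating $\psi := \top$ and then simplifying the conjunct $\phi\wedge\top$ down to $\phi$ using the monotonicity and anti-monotonicity inference rules. Since $\phi\wedge\top$ is propositionally equivalent to $\phi$, I expect the entire argument to be a short chain of standard manipulations, with no genuine obstacle.

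More concretely, the first step is to substitute $\psi := \top$ into axiom~\ref{ax: WA SA}, yielding
\begin{equation}\notag
\vdash\neg\WA_a\phi\wedge\SA_a\top\to\neg\WA_b(\phi\wedge\top)\wedge\SA_b(\phi\wedge\top).
\end{equation}
Next, starting from the propositional tautology $\vdash \phi\to(\phi\wedge\top)$, I would apply rule~\ref{ir: WA} to obtain $\vdash\WA_b\phi\to\WA_b(\phi\wedge\top)$, which by contraposition gives $\vdash\neg\WA_b(\phi\wedge\top)\to\neg\WA_b\phi$. Applying rule~\ref{ir: SA} to the same tautology yields $\vdash\SA_b(\phi\wedge\top)\to\SA_b\phi$.

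Finally, combining these two implications with the axiom instance via propositional reasoning collapses $\phi\wedge\top$ to $\phi$ on the right-hand side, producing exactly $\vdash\neg\WA_a\phi\wedge\SA_a\top\to\neg\WA_b\phi\wedge\SA_b\phi$. The only thing to be careful about is keeping the direction of the monotonicity/anti-monotonicity rules straight, so that the contraposition on the $\WA_b$ side is applied correctly; other than that, the derivation is routine.
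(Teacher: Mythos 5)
Your proposal is correct and matches the paper's own proof essentially step for step: both instantiate axiom~\ref{ax: WA SA} with $\psi:=\top$, derive $\vdash\neg\WA_b(\phi\wedge\top)\to\neg\WA_b\phi$ from the tautology $\phi\to\phi\wedge\top$ via rule~\ref{ir: WA} plus contraposition, derive $\vdash\SA_b(\phi\wedge\top)\to\SA_b\phi$ via rule~\ref{ir: SA}, and finish by propositional reasoning. No gaps; the rule directions are applied exactly as in the paper.
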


\begin{proof}
The propositional tautology $\phi\to\phi\wedge\top$, by rule~\ref{ir: WA} and propositional reasoning, implies
\begin{equation}\notag
\vdash\neg\WA_b(\phi\wedge\top)\to\neg\WA_b\phi,
\end{equation}
and by rule~\ref{ir: SA}, implies
\begin{equation}\notag
\vdash\SA_b(\phi\wedge\top)\to\SA_b\phi.
\end{equation}
Then, by propositional reasoning,
\begin{equation}\notag
\vdash\neg\WA_b(\phi\wedge\top)\wedge\SA_b(\phi\wedge\top)\to\neg\WA_b\phi\wedge\SA_b\phi.
\end{equation}
Thus, by the instance
\begin{equation}\notag
\neg\WA_a\phi\wedge\SA_a\top\to\neg\WA_b(\phi\wedge\top)\wedge\SA_b(\phi\wedge\top),
\end{equation}
of axiom~\ref{ax: WA SA} and propositional reasoning,
\begin{equation}\notag
\vdash\neg\WA_a\phi\wedge\SA_a\top\to\neg\WA_b\phi\wedge\SA_b\phi.
\end{equation}
\end{proof}

Now, we prove four auxiliary properties of the canonical model as four separated lemmas. 

\begin{lemma}\label{lm:auxiliar WA}
For any state $s\in S$ and any formula $\WA_a\phi\in s$, there is a tuple $(\delta,t)\in M_s$ such that $\delta_a\in D^s_a$ and $\phi\in t$.
\end{lemma}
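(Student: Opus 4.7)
The plan is to prove this lemma by adapting the construction in the continuity lemma: choose an action profile $\delta$ carefully so that $\delta_a$ is permitted and the consistency argument of that lemma can be extended to the set $X \cup \{\phi\}$, where $X$ is built from $\delta$ as in statement~\eqref{eq:continuity a}.

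First, I would set $\delta_a = \top^+$, which lies in $D_a^s$ by Definition~\ref{df:canonical action space & D}. For each $b \neq a$, I would set $\delta_b = \top^+$ when $\WA_b\phi \in s$ and $\delta_b = \top^-$ otherwise. To see that $\top^-$ is actually available in $\Delta_b^s$ in the second case, I would use axiom~\ref{ax: WA SA} with $a$ and $b$ swapped and $\psi := \phi$, which (after the propositional simplification $\phi\wedge\phi \equiv \phi$ together with rule~\ref{ir: WA}) yields $\neg\WA_b\phi \wedge \SA_b\phi \to \neg\WA_a\phi$. Combined with $\WA_a\phi \in s$, this forces $\neg\SA_b\phi \in s$, and then rule~\ref{ir: SA} applied to the tautology $\phi \to \top$ gives $\vdash \SA_b\top \to \SA_b\phi$, whose contrapositive yields $\neg\SA_b\top \in s$; by Definition~\ref{df:canonical action space & D} we thus have $\top^- \in \Delta_b^s$.

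With $\delta$ fixed, I would form $Y = X \cup \{\phi\}$, where $X$ is the set associated with $\delta$ as in the continuity lemma. The heart of the proof is establishing that $Y$ is consistent; once this is done, Lindenbaum's Lemma~\ref{lm:Lindenbaum} produces a maximal consistent $t \supseteq Y$ containing $\phi$, and membership $(\delta, t) \in M_s$ follows exactly as in Claim~\ref{cl:continuity 2}.

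For the consistency of $Y$, I would argue by contradiction. If $Y \vdash \bot$, then after consolidating each agent's contributions by iterated applications of axioms~\ref{ax: WA disjunction} and \ref{ax: SA conjunction}, one obtains a tautology of the form $\phi \to \bigvee_{c \in A'} \alpha_c \vee \bigvee_{c \in B'} \beta_c$, where $\neg\WA_c\alpha_c \in s$ for each $c \in A'$ and $\SA_c\beta_c \in s$ for each $c \in B'$. Applying rule~\ref{ir: WA} to this tautology, together with $\WA_a\phi \in s$ and iterated axiom~\ref{ax: WA disjunction}, yields $\WA_a\gamma \in s$ for some disjunct $\gamma$. A case analysis on whether $\gamma = \alpha_{c}$ for some $c \in A'$ (handled via axiom~\ref{ax: WA SA} transporting $\neg\WA_c\alpha_c$ back to agent $a$), or $\gamma = \beta_c$ for some $c \in B'$ (handled via the contrapositive of axiom~\ref{ax: WA SA} using $\SA_c\beta_c$ and the choice $\neg\WA_c\phi \in s$ secured by construction), should uniformly deliver $\neg\WA_a\gamma \in s$, contradicting the derived $\WA_a\gamma \in s$. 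The hardest step will be this final case analysis, particularly in subcases where $\neg\SA_c\top \in s$ so that Lemma~\ref{lm:axiom deduction WA SA} by itself is too weak and the full axiom~\ref{ax: WA SA} with a tailored instance of its $\psi$-parameter is required.
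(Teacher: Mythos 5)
There is a genuine gap, and it is exactly at the step you call the heart of the proof: the consistency of $Y$. The problem is your up-front choice of profile, ``$\delta_b=\top^+$ iff $\WA_b\phi\in s$'', combined with the fact that giving $b$ a permitted action forces you (via condition~\ref{dfitem:canonical mechanism 1} of Definition~\ref{df:canonical M}) to put \emph{every} $\neg\psi$ with $\neg\WA_b\psi\in s$ into $Y$. That set need not be consistent. Concretely, take $\phi=p$ and a maximal consistent $s\supseteq\{\WA_a p,\;\WA_b p,\;\neg\WA_a(p\wedge q),\;\neg\WA_b(p\wedge\neg q)\}$. Such $s$ exists by soundness, since these four formulae are jointly satisfiable: give each of $a,b$ one permitted and one non-permitted action, let the permitted--permitted profile lead to a $\neg p$ state, the profile where only $a$ acts permittedly lead to a $p\wedge\neg q$ state, and the profile where only $b$ acts permittedly lead to a $p\wedge q$ state. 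Because $\WA_b p\in s$, your rule puts $b$ into the ``permitted'' group, so $Y\supseteq\{p,\neg(p\wedge q),\neg(p\wedge\neg q)\}$, which is propositionally inconsistent. The same example shows why your sketched refutation cannot be repaired: after splitting with axiom~\ref{ax: WA disjunction} you are left with the disjunct $\WA_a(p\wedge\neg q)$, which you cannot refute --- it is in fact true at $s$ --- because transporting $\neg\WA_b(p\wedge\neg q)$ from $b$ back to $a$ via axiom~\ref{ax: WA SA} (or Lemma~\ref{lm:axiom deduction WA SA}) requires some $\SA_b\sigma\in s$, and an agent with $\WA_b\phi\in s$ need not supply any such formula (here $\neg\SA_b\top\in s$). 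Semantically this reflects that the witness profile for $\WA_a\phi$ may force \emph{other} agents to act non-permittedly even though they too satisfy $\WA_b\phi$, so their action cannot be fixed to a permitted one in advance.

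This is precisely what the paper's proof is engineered to avoid: it partitions the other agents by whether $\SA_b\top\in s$ rather than by whether $\WA_b\phi\in s$; for agents $c$ with $\neg\SA_c\top\in s$ it places in $X$ only the weaker joint formulae $\neg(\tau\wedge\sigma)$ with $\neg\WA_c\tau,\SA_c\sigma\in s$; and it decides whether $c$ acts permittedly only \emph{after} the maximal consistent extension $t$ is built, using the dichotomy that for each such $c$ either all the $\neg\tau$'s or all the $\neg\sigma$'s end up in $t$. Your preliminary observation that $\WA_a\phi\in s$ and $\neg\WA_b\phi\in s$ jointly force $\neg\SA_b\top\in s$ (so $\top^-$ is available to $b$) is correct, but it does not rescue the construction: the failure occurs for the agents you assign permitted actions, and for them the required set is simply not consistent in general.
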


\begin{proof}
Consider the sets of agents
\begin{align}
    B=& \{b\in\mathcal{A}\;|\;b\neq a, \SA_b\top\in s\},\label{eq:28-june-e}\\
    C=& \{c\in\mathcal{A}\;|\;c\neq a, \neg\SA_c\top\in s\}\label{eq:28-june-f}.
\end{align}
and the set of formulae
\begin{equation}\label{28-june-l} \hspace{-2mm}
\begin{aligned}
X = & \{\phi\}\cup\{\neg\psi\;|\;\neg\WA_a\psi\in s\}\\
& \cup \{\neg\chi\;|\;\exists\, b\in B(\neg\WA_b\chi\in s)\}\\
& \cup \{\neg(\tau\wedge\sigma)\;|\;\exists\, c\in C(\neg\WA_c \tau\in s, \SA_c \sigma\in s)\}.
\end{aligned}
\end{equation}
\begin{claim}
Set $X$ is consistent.    
\end{claim}
\begin{proof-of-claim}
Suppose the opposite, then there are formulae
\begin{align}
\neg&\WA_a\psi_1,\dots,\neg\WA_a\psi_k\in s,\label{eq:28-june-b} \\   
\neg&\WA_{b_1}\chi_1,\dots,\neg\WA_{b_\ell}\chi_{\ell}\in s,\label{eq:28-june-c}   \\ 
\neg&\WA_{c_1}\tau_1,\dots,\neg\WA_{c_m}\tau_{m}\in s,\label{eq:28-june-g}\\    
&\SA_{c_1}\sigma_1,\dots,\SA_{c_m}\sigma_{m}\in s,\label{eq:28-june-h}
\end{align}
where
\begin{align}
&b_1,\dots,b_\ell\in B,\label{eq:28-june-d}\\
&c_1,\dots,c_m\in C,
\end{align}
such that
\begin{equation}\label{eq:28-june-k}
\phi\wedge \bigwedge_{i\le k}\neg\psi_i
\wedge \bigwedge_{i\le \ell}\neg\chi_i\wedge
\bigwedge_{i\le m}\neg(\tau_i\wedge\sigma_i)\vdash \bot.
\end{equation}
By Lemma~\ref{lm:axiom deduction WA SA} and propositional reasoning, statements~\eqref{eq:28-june-c}, \eqref{eq:28-june-d}, and \eqref{eq:28-june-e} imply that
\begin{equation}\label{eq:28-june-i}
s\vdash \neg\WA_{a}\chi_i \text{ for each $i\le \ell$}.    
\end{equation}
At the same time, by axiom~\ref{ax: WA SA} and propositional reasoning, statements~\eqref{eq:28-june-g} and \eqref{eq:28-june-h} imply that
\begin{equation}\label{eq:28-june-j}
s\vdash \neg\WA_{a}(\tau_i\wedge\sigma_i) \text{ for each $i\le m$}.    
\end{equation}
On the other hand, by Lemma~\ref{lm:deduction} and propositional reasoning, statement~\eqref{eq:28-june-k} implies that
\begin{equation}\notag
\vdash \phi\to \bigvee_{i\le k}\psi_i
\vee \bigvee_{i\le \ell}\chi_i\vee
\bigvee_{i\le m}(\tau_i\wedge\sigma_i).
\end{equation}
By rule \ref{ir: WA}, the above statement implies that
\begin{equation}\notag
\vdash \WA_a\phi\to \WA_a\Big(\bigvee_{i\le k}\psi_i
\vee \bigvee_{i\le \ell}\chi_i\vee
\bigvee_{i\le m}(\tau_i\wedge\sigma_i)\Big).
\end{equation}
Thus, by the assumption $\WA_a\phi\in s$ of the lemma and the Modus Ponens rule, 
\begin{equation}\label{eq:28-june-a}
s\vdash \WA_a\Big(\bigvee_{i\le k}\psi_i
\vee \bigvee_{i\le \ell}\chi_i\vee
\bigvee_{i\le m}(\tau_i\wedge\sigma_i)\Big).
\end{equation}
Note that in the special case when $k=\ell=m=0$, statement~\eqref{eq:28-june-a} has the form $s\vdash\WA_a\bot$. This implies inconsistency of set $s$ due to axiom~\ref{ax: WA bot}. Thus, without loss of generality, we can assume that at least one of the integers $k$, $\ell$, and $m$ is positive. Hence, by multiple applications of axiom~\ref{ax: WA disjunction} and the laws of propositional reasoning, statement~\eqref{eq:28-june-a} implies that
\begin{equation}\notag
s\vdash \bigvee_{i\le k}\WA_a\psi_i
\vee \bigvee_{i\le \ell}\WA_a\chi_i\vee
\bigvee_{i\le m}\WA_a(\tau_i\wedge\sigma_i).
\end{equation}
This statement contradicts statements~\eqref{eq:28-june-b}, \eqref{eq:28-june-i}, and \eqref{eq:28-june-j} because set $s$ is consistent.
\end{proof-of-claim}

Let $t$ be any maximal consistent extension of set $X$. Such $t$ exists by Lemma~\ref{lm:Lindenbaum}. Then, $t\in S$ by Definition~\ref{df:canonical S}.
\begin{claim}\label{cl:29-june-a}
For each agent $c\in C$, at least one of the following statements is true:
\begin{enumerate}
    \item $\neg\tau\in t$ for each formula $\neg\WA_c\tau\in s$, {\bf or}
    \item $\neg\sigma\in t$ for each formula $\SA_c\sigma\in s$.
\end{enumerate}
\end{claim}
\begin{proof-of-claim}
Suppose the opposite. Then, there are formulae  $\neg\WA_c\tau\in s$ and $\SA_c\sigma\in s$ such that  $\neg\tau\notin t$ and  $\neg\sigma\notin t$. Thus, $\tau\in t$ and  $\sigma\in t$ because $t$ is a maximal consistent set.
Hence, $t\vdash \tau\wedge\sigma$ by propositional reasoning. Then, $\neg ( \tau\wedge\sigma)\notin t$ because set $t$ is consistent. Therefore, $\neg ( \tau\wedge\sigma)\notin X$ because $X\subseteq t$, which contradicts line~3 of statement~\eqref{28-june-l}.
\end{proof-of-claim}

Note that the sets $\{a\}$, $B$, and $C$ form a partition of the set $\mathcal{A}$ of all agents due to statements~\eqref{eq:28-june-e} and \eqref{eq:28-june-f}. Consider any action profile $\delta$ that satisfies the following conditions:
\begin{enumerate}
    \item $\delta_x=\top^+$ for each $x\in \{a\}\cup B$;
    \item $\delta_c\in\{\top^+,\top^-\}$ for each $c\in C$ such that 
    \begin{enumerate}
        \item if $\delta_c=\top^+$, then $\neg\tau\in t$ for each $\neg\WA_c\tau\in s$;
        \item if $\delta_c=\top^-$, then $\neg\sigma\in t$ for each $\SA_c\sigma\in s$.
    \end{enumerate}
\end{enumerate}
The existence of at least one such action profile $\delta$ follows from Definition~\ref{df:canonical action space & D} and Claim~\ref{cl:29-june-a}.

\begin{claim}
$(\delta,t)\in M_s$.
\end{claim}
\begin{proof-of-claim}
It suffices to verify that conditions 1-4 of Definition~\ref{df:canonical M} are satisfied for the tuple $(\delta,t)$ for each agent $x\in\mathcal{A}$. Recall that the sets $\{a\},B,C$ form a partition of the set $\mathcal{A}$ due to statements~\eqref{eq:28-june-e} and \eqref{eq:28-june-f}. Thus, it suffices to consider the following three cases:

\vspace{0.5mm}\noindent{\em Case 1}: $x\in \{a\}\cup B$. 
Then, $\delta_x=\top^+$ by the choice of profile $\delta$.
Condition 1 of Definition~\ref{df:canonical M} follows from lines~1 and 2 of statement~\eqref{28-june-l} and  $X\subseteq t$.  Conditions 2 and 3 are satisfied because $\top\in t$. Condition 4 is satisfied because $\delta_x=\top^+\in D^s_x$. 

\vspace{0.5mm}\noindent{\em Case 2}: $x\in C$ and $\delta_x=\top^+$. Condition 1 follows from item 2(a) of the choice of profile $\delta$. Conditions 2, 3, and 4 are similar to the previous case.

\vspace{0.5mm}\noindent{\em Case 3}: $x\in C$ and $\delta_x=\top^-$. Condition 1 is satisfied because $\delta_x=\top^-\notin D^s_x$. Conditions 2 and 3 are satisfied because $\top\in t$. Condition 4 follows from item 2(b) of the choice of profile $\delta$. 
\end{proof-of-claim}

To finish the proof of the lemma, note that $\delta_a=\top^+\in D^s_a$ by the choice of profile $\delta$ and Definition~\ref{df:canonical action space & D}. Also, observe that $\phi\in X\subseteq t$ by line~1 of statement~\eqref{28-june-l} and the formation of set $t$.
\end{proof}

\begin{lemma}\label{lm:auxiliar WE}
For any state $s\in S$, any formula $\neg\WE_a\phi\in s$, and any action $i\in D_a^s$, there is a tuple $(\delta,t)\in M_s$ such that $\delta_a=i$ and $\neg\phi\in t$.
\end{lemma}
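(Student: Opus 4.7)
The plan is to mimic the strategy of the continuity lemma and Lemma~\ref{lm:auxiliar WA}: construct a maximal consistent set $t$ containing $\neg\phi$, together with an action profile $\delta$ having $\delta_a = i$, such that $(\delta, t) \in M_s$. Since $i \in D_a^s$, Definition~\ref{df:canonical action space & D} gives $i = \sigma_0^+$ for some $\sigma_0 \in \Phi$. Partition the remaining agents into $B = \{b \neq a : \SA_b\top \in s\}$ and $C = \{c \neq a : \SA_c\top \notin s\}$. I would fix $\delta_b = \top^+$ for every $b \in B$, deferring the choice of $\delta_c \in \{\top^+, \top^-\}$ for $c \in C$ until $t$ has been built, so that either condition~1 or condition~4 of Definition~\ref{df:canonical M} can be satisfied for each $c$.

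Next I would define the candidate set
\begin{align*}
X = \{\neg\phi\} &\cup \{\sigma_0 : \WE_a\sigma_0 \in s\} \cup \{\neg\psi : \neg\WA_a\psi \in s\} \\
&\cup \{\neg\chi : \exists\, b \in B,\ \neg\WA_b\chi \in s\} \\
&\cup \{\neg(\tau \wedge \sigma) : \exists\, c \in C,\ \neg\WA_c\tau \in s,\ \SA_c\sigma \in s\}.
\end{align*}
The conjunctive clauses $\neg(\tau \wedge \sigma)$ are precisely what gives enough slack, via the Claim pattern of Lemma~\ref{lm:auxiliar WA}, to select $\delta_c$ for each $c \in C$ so that at least one of conditions~1 or 4 of Definition~\ref{df:canonical M} is satisfied by the constructed state.

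The main obstacle is proving that $X$ is consistent. Suppose otherwise; then finite subsets of $X$ combined through Lemma~\ref{lm:deduction} yield a propositional theorem of the form
\begin{equation*}
\vdash \widehat{\sigma}_0 \wedge \bigwedge_{j} \neg\psi_j \wedge \bigwedge_{i} \neg\chi_i \wedge \bigwedge_{i} \neg(\tau_i \wedge \sigma_i) \to \phi,
\end{equation*}
where $\widehat{\sigma}_0$ is $\sigma_0$ if $\WE_a\sigma_0 \in s$ and $\top$ otherwise. Monotonicity of $\WE$ (Theorem~\ref{th:WE weak}) then gives $\vdash \WE_a(\widehat{\sigma}_0 \wedge \cdots) \to \WE_a\phi$. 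To close the gap, I would translate the external constraints into agent-$a$ constraints in $s$: for $b \in B$, Lemma~\ref{lm:axiom deduction WA SA} applied to $\neg\WA_b\chi_i,\SA_b\top \in s$ yields $s \vdash \neg\WA_a\chi_i$; for $c \in C$, axiom~\ref{ax: WA SA} applied to $\neg\WA_c\tau_i,\SA_c\sigma_i \in s$ yields $s \vdash \neg\WA_a(\tau_i \wedge \sigma_i)$. Starting from $\WE_a\widehat{\sigma}_0 \in s$ (by hypothesis or by axiom~\ref{ax:WE T} in the $\top$ case) and iterating Lemma~\ref{lm:axiom deduction WE WA} with all of these derived $\neg\WA_a(\cdot)$ formulas produces $s \vdash \WE_a\phi$, contradicting $\neg\WE_a\phi \in s$.

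With consistency in hand, Lemma~\ref{lm:Lindenbaum} extends $X$ to a maximal consistent set $t \in S$. A claim analogous to Claim~\ref{cl:29-june-a} of Lemma~\ref{lm:auxiliar WA}, driven by the $\neg(\tau \wedge \sigma)$ clauses, lets me select $\delta_c \in \{\top^+, \top^-\}$ for each $c \in C$. A case split over the partition $\{a\} \cup B \cup C$ then verifies the four conditions of Definition~\ref{df:canonical M} for $(\delta, t)$. By construction $\delta_a = i$ and $\neg\phi \in X \subseteq t$, completing the proof.
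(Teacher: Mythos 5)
Your proposal is correct, and its overall architecture is exactly the paper's: the same partition of $\mathcal{A}\setminus\{a\}$ into $B$ and $C$, the same candidate set $X$ (with $\widehat{\sigma}_0$ handling the case $\WE_a\sigma_0\notin s$ via axiom~\ref{ax:WE T}), the same transfer of other agents' constraints to agent $a$ via Lemma~\ref{lm:axiom deduction WA SA} and axiom~\ref{ax: WA SA}, then Lindenbaum, a claim in the style of Claim~\ref{cl:29-june-a} to pick $\delta_c$ for $c\in C$, and the case-by-case verification of Definition~\ref{df:canonical M}; your conditional inclusion of $\sigma_0$ in $X$ is precisely what makes condition~2 hold for $\delta_a=\sigma_0^+$. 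The one place you genuinely diverge is inside the consistency claim: the paper uses axiom~\ref{ax: WA WE} to get $s\vdash\WA_a(\widehat{\epsilon}\wedge\neg\phi)$, then rule~\ref{ir: WA} together with repeated applications of the contrapositive of axiom~\ref{ax: WA disjunction} to collide with the accumulated $\neg\WA_a$ facts; you instead fold those same $\neg\WA_a$ facts into $\WE_a\widehat{\sigma}_0$ by iterating Lemma~\ref{lm:axiom deduction WE WA}, and then apply $\WE$-monotonicity (Theorem~\ref{th:WE weak}) to the implication extracted by Lemma~\ref{lm:deduction} to conclude $s\vdash\WE_a\phi$, contradicting $\neg\WE_a\phi\in s$ directly. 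Both derivations are sound and of comparable length: the paper's stays on the $\WA$-side and so matches the uniform pattern used across all four auxiliary lemmas, while yours contradicts the lemma's hypothesis head-on and trades the A5-contrapositive bookkeeping for repeated use of Lemma~\ref{lm:axiom deduction WE WA} plus the admissible monotonicity rule for $\WE$.
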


\begin{proof}
By the assumption $i\in D_a^s$ of the lemma and Definition~\ref{df:canonical action space & D},
\begin{equation}\label{eq:7-July-11-part1}
   i=\epsilon^+ 
\end{equation}
for some $\epsilon\in\Phi$. Let
\begin{equation}\label{eq:7-July-11-part2}
\widehat{\epsilon}=
\begin{cases}
    \epsilon, &\text{if } \WE_a\epsilon\in s;\\
    \top, &\text{otherwise.}
\end{cases}
\end{equation}
Note that, by axiom~\ref{ax:WE T}, statement~\eqref{eq:7-July-11-part2} implies that
\begin{equation}\label{eq:7-July-14}
\WE_a\widehat{\epsilon}\in s.
\end{equation}

\noindent Consider the sets of agents
\begin{align}
    B=& \{b\in\mathcal{A}\;|\;b\neq a, \SA_b\top\in s\},\label{eq:7-July-2}\\
    C=& \{c\in\mathcal{A}\;|\;c\neq a, \neg\SA_c\top\in s\}\label{eq:9-July-1},
\end{align}
and the set of formulae
\begin{equation}\label{eq:9-July-2} \hspace{-2mm}
\begin{aligned}
X =
&\{\neg\phi, \widehat{\epsilon}\;\}\cup\{\neg\psi\;|\;\neg\WA_a\psi\in s\}\\
&\cup \{\neg\chi\;|\;\exists\, b\in B(\neg\WA_b\chi\in s)\}\\
&\cup \{\neg(\tau\wedge\sigma)\;|\;\exists\, c\in C(\neg\WA_c \tau\in s, \SA_c \sigma\in s)\}.
\end{aligned}
\end{equation}
\begin{claim}
Set $X$ is consistent.    
\end{claim}
\begin{proof-of-claim}
Suppose the opposite, then there are formulae
\begin{align}
\neg&\WA_a\psi_1,\dots,\neg\WA_a\psi_k\in s, \label{eq:7-July-6}\\   
\neg&\WA_{b_1}\chi_1,\dots,\neg\WA_{b_\ell}\chi_{\ell}\in s, \label{eq:7-July-1}\\ 
\neg&\WA_{c_1}\tau_1,\dots,\neg\WA_{c_m}\tau_{m}\in s,\label{eq:7-July-4}\\    
&\SA_{c_1}\sigma_1,\dots,\SA_{c_m}\sigma_{m}\in s, \label{eq:7-July-5}
\end{align}
where
\begin{align}
    &b_1,\dots,b_\ell\in B,\label{eq:7-July-3}\\
    &c_1,\dots,c_m\in C, 
\end{align}
such that
\begin{equation}\label{eq:7-July-9}
\neg\phi\wedge\widehat{\epsilon}\wedge
\bigwedge_{i\leq k}\neg\psi_i\wedge
\bigwedge_{i\leq\ell}\neg\chi_i\wedge
\bigwedge_{i\leq m}\neg(\tau_i\wedge\sigma_i)\vdash \bot.
\end{equation}
Note that, by axiom~\ref{ax: WA WE}, statement~\eqref{eq:7-July-14} and the assumption $\neg\WE_a\phi\in s$ of the lemma imply that
\begin{equation}\label{eq:7-July-12}
s\vdash \WA_a(\,\widehat{\epsilon}\,\wedge\neg\phi).
\end{equation}
By Lemma~\ref{lm:axiom deduction WA SA} and propositional reasoning, statements~\eqref{eq:7-July-1}, \eqref{eq:7-July-3}, and \eqref{eq:7-July-2} imply that
\begin{equation}\label{eq:7-July-7}
s\vdash \neg\WA_{a}\chi_i \text{ for each }i\le \ell.    
\end{equation}
At the same time, by axiom~\ref{ax: WA SA} and propositional reasoning, statements~\eqref{eq:7-July-4} and \eqref{eq:7-July-5} imply that
\begin{equation}\label{eq:7-July-8}
s\vdash \neg\WA_{a}(\tau_i\wedge\sigma_i) \text{ for each }i\le m.    
\end{equation}
By applying the contrapositive of axiom~\ref{ax: WA disjunction} multiple times, statements~\eqref{eq:7-July-6}, \eqref{eq:7-July-7} and \eqref{eq:7-July-8} imply that
\begin{equation}\label{eq:7-July-13}
s\vdash \neg\WA_a\Big(\bigvee_{i\le k}\psi_i
\vee \bigvee_{i\le \ell}\chi_i\vee
\bigvee_{i\le m}(\tau_i\wedge\sigma_i)\Big).
\end{equation}
In the special case where $k=\ell=m=0$, statement~\eqref{eq:7-July-13} follows directly from axiom~\ref{ax: WA bot}.

On the other hand, by Lemma~\ref{lm:deduction} and propositional reasoning, statement~\eqref{eq:7-July-9} implies that
\begin{equation}\notag
\vdash (\,\widehat{\epsilon}\,\wedge\neg\phi)\to
\Big(\bigvee_{i\le k}\psi_i
\vee \bigvee_{i\le \ell}\chi_i\vee
\bigvee_{i\le m}(\tau_i\wedge\sigma_i)\Big).
\end{equation}
By rule~\ref{ir: WA}, the above statement implies that
\begin{equation}\notag
\vdash \WA_a(\,\widehat{\epsilon}\,\wedge\neg\phi)\to
\WA_a\Big(\bigvee_{i\le k}\psi_i
\vee \bigvee_{i\le \ell}\chi_i\vee
\bigvee_{i\le m}(\tau_i\wedge\sigma_i)\Big).
\end{equation}
Together with statement~\eqref{eq:7-July-12} and the Modus Ponens rule, the above statement implies that
\begin{equation}\notag
s\vdash \WA_a\Big(\bigvee_{i\le k}\psi_i
\vee \bigvee_{i\le \ell}\chi_i\vee
\bigvee_{i\le m}(\tau_i\wedge\sigma_i)\Big),
\end{equation}
which contradicts statement~\eqref{eq:7-July-13} because $s$ is consistent.
\end{proof-of-claim}

Let $t$ be any maximal consistent extension of set $X$. Such $t$ exists by Lemma~\ref{lm:Lindenbaum}. Then, $t\in S$ by Definition~\ref{df:canonical S}.

\begin{claim}\label{cl:9-July-1}
For each agent $c\in C$, at least one of the following statements is true:
\begin{enumerate}
    \item $\neg\tau\in t$ for each formula $\neg\WA_c\tau\in s$, {\bf or}
    \item $\neg\sigma\in t$ for each formula $\SA_c\sigma\in s$.
\end{enumerate}
\end{claim}
\begin{proof-of-claim}
Suppose the opposite. Then, there are formulae  $\neg\WA_c\tau\in s$ and $\SA_c\sigma\in s$ such that  $\neg\tau\notin t$ and  $\neg\sigma\notin t$. Thus, $\tau\in t$ and  $\sigma\in t$ because $t$ is a maximal consistent set.
Hence, $t\vdash \tau\wedge\sigma$ by propositional reasoning. Then, $\neg ( \tau\wedge\sigma)\notin t$ because set $t$ is consistent. Therefore, $\neg ( \tau\wedge\sigma)\notin X$ because $X\subseteq t$, which contradicts line~3 of statement~\eqref{eq:9-July-2}.
\end{proof-of-claim}

Note that the sets $\{a\}$, $B$, and $C$ form a partition of the set $\mathcal{A}$ of all agents due to statements~\eqref{eq:7-July-2} and \eqref{eq:9-July-1}. Consider any action profile $\delta$ that satisfies the following conditions:
\begin{enumerate}
    \item $\delta_a=\epsilon^+$;
    \item $\delta_b=\top^+$ for each $b\in B$;
    \item $\delta_c\in\{\top^+,\top^-\}$ for each $c\in C$ such that 
    \begin{enumerate}
        \item if $\delta_c=\top^+$, then $\neg\tau\in t$ for each $\neg\WA_c\tau\in s$;
        \item if $\delta_c=\top^-$, then $\neg\sigma\in t$ for each $\SA_c\sigma\in s$.
    \end{enumerate}
\end{enumerate}
The existence of at least one such action profile $\delta$ follows from Definition~\ref{df:canonical action space & D} and Claim~\ref{cl:9-July-1}.

\begin{claim}
$(\delta,t)\in M_s$.
\end{claim}
\begin{proof-of-claim}
It suffices to verify that conditions 1-4 of Definition~\ref{df:canonical M} are satisfied for the tuple $(\delta,t)$ for each agent $x\in\mathcal{A}$. Recall that the sets $\{a\},B,C$ form a partition of the set $\mathcal{A}$  due to statements~\eqref{eq:7-July-2} and \eqref{eq:9-July-1}. Thus, by the choice of profile $\delta$, it suffices to consider the following four cases:

\vspace{0.5mm}\noindent{\em Case 1}: $x=a$. Then, $\delta_x=\epsilon^+$ by the choice of profile $\delta$. Condition~1 of Definition~\ref{df:canonical M} follows from line~1 of statement~\eqref{eq:9-July-2} and $X\subseteq t$. Condition~2 is satisfied because of statement~\eqref{eq:7-July-11-part2}, line~1 of statement~\eqref{eq:9-July-2}, and $X\subseteq t$. Conditions~3 and 4 are satisfied because $\delta_x=\epsilon^+\in D_x^s$.

\vspace{0.5mm}\noindent{\em Case 2}: $x\in B$. 
Then, $\delta_x=\top^+$ by the choice of $\delta$.
Condition 1 of Definition~\ref{df:canonical M} is satisfied by line~2 of statement~\eqref{eq:9-July-2} and $X\subseteq t$.  Conditions 2 and 3 are satisfied because $\top\in t$. Condition 4 is satisfied because $\delta_x=\top^+\in D^s_x$. 

\vspace{0.5mm}\noindent{\em Case 3}: $x\in C$ and $\delta_x=\top^+$. Condition 1 follows from item 3(a) of the choice of profile $\delta$. Conditions 2, 3, and 4 are similar to the previous case.

\vspace{0.5mm}\noindent{\em Case 4}: $x\in C$ and $\delta_x=\top^-$. Conditions 1 and 2 are satisfied because $\delta_x=\top^-\notin D^s_x$. Condition 3 is satisfied because $\top\in t$. Condition 4 follows from item 3(b) of the choice of profile $\delta$. 
\end{proof-of-claim}

To finish the proof of this lemma, note that $\delta_a=\epsilon^+=i$ by the choice of profile $\delta$ and statement~\eqref{eq:7-July-11-part1}. At the same time, $\neg\phi\in X\subseteq t$ by line~1 of statement~\eqref{eq:9-July-2} and the formation of set $t$.
\end{proof}

\begin{lemma}\label{lm:auxiliar SE}
For any state $s\in S$, any formula $\SE_a\phi\in s$, and any action $i\in \Delta_a^s\setminus D_a^s$, there is a tuple $(\delta,t)\in M_s$ such that $\delta_a=i$ and $\neg\phi\in t$.
\end{lemma}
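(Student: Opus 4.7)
The strategy is to mirror the argument of Lemma~\ref{lm:auxiliar WE}, dualising the ``weak/ensure'' axioms and rules to their ``strong/admit'' counterparts so as to handle a non-permitted action of agent $a$.

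First, since $i \in \Delta_a^s \setminus D_a^s$, Definition~\ref{df:canonical action space & D} forces $\SA_a\top \notin s$ and $i = \epsilon^-$ for some formula $\epsilon \in \Phi$. Maximal consistency of $s$ then yields $\neg\SA_a\top \in s$, and the contrapositive of axiom~\ref{ax: SA SE T} gives $\neg\SE_a\top \in s$. Setting $\widehat\epsilon = \epsilon$ when $\neg\SE_a\epsilon \in s$ and $\widehat\epsilon = \top$ otherwise ensures $\neg\SE_a\widehat\epsilon \in s$ in both cases. Combined with the hypothesis $\SE_a\phi \in s$, axiom~\ref{ax: SE SA} then delivers $s \vdash \neg\SA_a(\widehat\epsilon \wedge \neg\phi)$, the key ``obstructing'' formula.

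Next, partition $\mathcal{A} \setminus \{a\}$ into $B = \{b\;|\;\SA_b\top \in s\}$ and $C = \{c\;|\;\neg\SA_c\top \in s\}$ exactly as in Lemma~\ref{lm:auxiliar WE}, and define
\begin{equation}\notag
\begin{aligned}
X = &\{\neg\phi,\widehat\epsilon\} \cup \{\neg\psi\;|\;\SA_a\psi \in s\} \\
&\cup \{\neg\chi\;|\;\exists\,b \in B(\neg\WA_b\chi \in s)\} \\
&\cup \{\neg(\tau \wedge \sigma)\;|\;\exists\,c \in C(\neg\WA_c\tau \in s, \SA_c\sigma \in s)\}.
\end{aligned}
\end{equation}
The main obstacle is verifying consistency of $X$. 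If $X$ were inconsistent, finitely many representatives $\psi_i,\chi_j,\tau_k,\sigma_k$ would satisfy a refutation that, via Lemma~\ref{lm:deduction} and propositional reasoning, becomes $\vdash (\widehat\epsilon \wedge \neg\phi) \to \bigvee_i \psi_i \vee \bigvee_j \chi_j \vee \bigvee_k (\tau_k \wedge \sigma_k)$. Applying rule~\ref{ir: SA} and contraposition turns this into $s \vdash \neg\SA_a\bigl(\bigvee_i \psi_i \vee \bigvee_j \chi_j \vee \bigvee_k (\tau_k \wedge \sigma_k)\bigr)$. On the other hand, axiom~\ref{ax: WA SA} applied to each $\neg\WA_{b_j}\chi_j, \SA_{b_j}\top \in s$ (exchanging the roles of $a$ and $b_j$) produces $\SA_a\chi_j \in s$, and the same axiom applied to each $\neg\WA_{c_k}\tau_k, \SA_{c_k}\sigma_k \in s$ produces $\SA_a(\tau_k \wedge \sigma_k) \in s$. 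Together with the given $\SA_a\psi_i \in s$, iterated use of axiom~\ref{ax: SA conjunction} (with axiom~\ref{ax: SA bot} covering the empty-disjunction degenerate case) yields $s \vdash \SA_a\bigl(\bigvee_i \psi_i \vee \bigvee_j \chi_j \vee \bigvee_k (\tau_k \wedge \sigma_k)\bigr)$, contradicting the previous line.

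Finally, extend $X$ to a maximal consistent set $t$ by Lemma~\ref{lm:Lindenbaum} and define $\delta_a = \epsilon^- = i$, $\delta_b = \top^+$ for $b \in B$, and, for each $c \in C$, choose $\delta_c \in \{\top^+,\top^-\}$ using an analogue of Claim~\ref{cl:9-July-1} (otherwise some offending $\tau \wedge \sigma$ would lie in $t$, contradicting $\neg(\tau \wedge \sigma) \in X \subseteq t$). A routine case analysis then verifies items~\ref{dfitem:canonical mechanism 1}--\ref{dfitem:canonical mechanism 4} of Definition~\ref{df:canonical M}: for agent $a$, items~\ref{dfitem:canonical mechanism 1} and \ref{dfitem:canonical mechanism 2} hold vacuously because $\delta_a \notin D_a^s$ and $\delta_a$ is not of the form $\phi^+$; item~\ref{dfitem:canonical mechanism 3} follows from $\widehat\epsilon \in t$ together with the defining case split for $\widehat\epsilon$; and item~\ref{dfitem:canonical mechanism 4} follows from $\{\neg\psi\;|\;\SA_a\psi \in s\} \subseteq X \subseteq t$. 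The verifications for $b \in B$ and $c \in C$ are as in Lemma~\ref{lm:auxiliar WE}. Since $\delta_a = i$ and $\neg\phi \in t$, the tuple $(\delta,t)$ is the desired witness.
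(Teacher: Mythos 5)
Your proposal is correct and follows essentially the same route as the paper's own proof: the same set $X$, the same partition $\{a\},B,C$, the same use of axiom~\ref{ax: SE SA} to obtain $s\vdash\neg\SA_a(\widehat\epsilon\wedge\neg\phi)$, the same consistency argument via axioms~\ref{ax: SA conjunction}, \ref{ax: SA bot}, \ref{ax: WA SA} and rule~\ref{ir: SA}, and the same profile construction and case analysis against Definition~\ref{df:canonical M}. The only cosmetic deviations are that you derive the contradiction at $\neg\SA_a$ of the big disjunction rather than at $\SA_a(\widehat\epsilon\wedge\neg\phi)$, and you invoke axiom~\ref{ax: WA SA} directly for the $B$-agents where the paper uses its packaged Lemma~\ref{lm:axiom deduction WA SA} (which just adds the one needed monotonicity step from $\SA_a(\chi\wedge\top)$ to $\SA_a\chi$).
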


\begin{proof}
By the assumption $i\in\Delta_a^s\setminus D_a^s$ of the lemma and Definition~\ref{df:canonical action space & D},
\begin{equation}\label{eq:9-July-18}
i=\epsilon^-
\end{equation}
for some $\epsilon\in\Phi$. Let
\begin{equation}\label{eq:9-July-3}
\widehat{\epsilon}=
\begin{cases}
    \epsilon, &\text{if } \neg\SE_a\epsilon\in s;\\
    \top, &\text{otherwise.}
\end{cases}
\end{equation}
Note that the assumptions $\delta_a=i\in\Delta_a^s\setminus D_a^s$ of the lemma imply that $\SA_a\top\notin s$ by Definition~\ref{df:canonical action space & D}. Since $s$ is a maximal consistent set, $\neg\SA_a\top\in s$. Then, $\neg\SE_a\top\in s$ by the contrapositive of axiom~\ref{ax: SA SE T}. Thus, by statement~\eqref{eq:9-July-3},
\begin{equation}\label{eq:9-July-14}
\neg\SE_a\widehat{\epsilon}\in s.
\end{equation}

\noindent Consider the sets of agents
\begin{align}
    B=& \{b\in\mathcal{A}\;|\;b\neq a, \SA_b\top\in s\},\label{eq:9-July-4}\\
    C=& \{c\in\mathcal{A}\;|\;c\neq a, \neg\SA_c\top\in s\},\label{eq:9-July-5}
\end{align}
and the set of formulae
\begin{equation}\label{eq:9-July-6}
\hspace{-2mm}
\begin{aligned}
X =
&\{\neg\phi, \widehat{\epsilon}\;\}\cup\{\neg\psi\;|\;\SA_a\psi\in s\}\\
&\cup \{\neg\chi\;|\;\exists\, b\in B(\neg\WA_b\chi\in s)\}\\
&\cup \{\neg(\tau\wedge\sigma)\;|\;\exists\, c\in C(\neg\WA_c \tau\in s, \SA_c \sigma\in s)\}.
\end{aligned}  
\end{equation}
\begin{claim}
Set $X$ is consistent.    
\end{claim}
\begin{proof-of-claim}
Suppose the opposite, then there are formulae
\begin{align}
&\SA_a\psi_1,\dots,\SA_a\psi_k\in s, \label{eq:9-July-7}\\   
\neg&\WA_{b_1}\chi_1,\dots,\neg\WA_{b_\ell}\chi_{\ell}\in s, \label{eq:9-July-8}\\ 
\neg&\WA_{c_1}\tau_1,\dots,\neg\WA_{c_m}\tau_{m}\in s,\label{eq:9-July-9}\\    
&\SA_{c_1}\sigma_1,\dots,\SA_{c_m}\sigma_{m}\in s, \label{eq:9-July-10}
\end{align}
where
\begin{align}
    &b_1,\dots,b_\ell\in B,\label{eq:9-July-11}\\
    &c_1,\dots,c_m\in C, 
\end{align}
such that
\begin{equation}\label{eq:9-July-12}
\neg\phi\wedge\widehat{\epsilon}\wedge
\bigwedge_{i\leq k}\neg\psi_i\wedge
\bigwedge_{i\leq\ell}\neg\chi_i\wedge
\bigwedge_{i\leq m}\neg(\tau_i\wedge\sigma_i)\vdash \bot.
\end{equation}
Note that, by axiom~\ref{ax: SE SA}, statement~\eqref{eq:9-July-14} and the assumption $\SE_a\phi\in s$ of the lemma imply that
\begin{equation}\label{eq:9-July-13}
s\vdash \neg\SA_a(\,\widehat{\epsilon}\,\wedge\neg\phi).
\end{equation}
By Lemma~\ref{lm:axiom deduction WA SA} and propositional reasoning, statements~\eqref{eq:9-July-8}, \eqref{eq:9-July-11}, and \eqref{eq:9-July-4} imply that
\begin{equation}\label{eq:9-July-15}
s\vdash \SA_{a}\chi_i \text{ for each }i\le \ell.    
\end{equation}
At the same time, by axiom~\ref{ax: WA SA} and propositional reasoning, statements~\eqref{eq:9-July-9} and \eqref{eq:9-July-10} imply that
\begin{equation}\label{eq:9-July-16}
s\vdash \SA_{a}(\tau_i\wedge\sigma_i) \text{ for each }i\le m.    
\end{equation}
By applying axiom~\ref{ax: SA conjunction} multiple times, statements~\eqref{eq:9-July-7}, \eqref{eq:9-July-15} and \eqref{eq:9-July-16} imply that
\begin{equation}\label{eq:9-July-17}
s\vdash \SA_a\Big(\bigvee_{i\le k}\psi_i
\vee \bigvee_{i\le \ell}\chi_i\vee
\bigvee_{i\le m}(\tau_i\wedge\sigma_i)\Big).
\end{equation}
In the special case where $k=\ell=m=0$, statement~\eqref{eq:9-July-17} follows directly from axiom~\ref{ax: SA bot}.

On the other hand, by Lemma~\ref{lm:deduction} and propositional reasoning, statement~\eqref{eq:9-July-12} implies that
\begin{equation}\notag
\vdash (\,\widehat{\epsilon}\,\wedge\neg\phi)\to
\Big(\bigvee_{i\le k}\psi_i
\vee \bigvee_{i\le \ell}\chi_i\vee
\bigvee_{i\le m}(\tau_i\wedge\sigma_i)\Big).
\end{equation}
By rule~\ref{ir: SA}, the above statement implies
\begin{equation}\notag
\vdash
\SA_a\Big(\bigvee_{i\le k}\psi_i
\vee \bigvee_{i\le \ell}\chi_i\vee
\bigvee_{i\le m}(\tau_i\wedge\sigma_i)\Big) \to \SA_a(\,\widehat{\epsilon}\,\wedge\neg\phi).
\end{equation}
Together with statement~\eqref{eq:9-July-17} and the Modus Ponens rule, the above statement implies that
\begin{equation}\notag
s\vdash\SA_a(\,\widehat{\epsilon}\,\wedge\neg\phi),
\end{equation}
which contradicts statement~\eqref{eq:9-July-13} because $s$ is consistent.
\end{proof-of-claim}

Let $t$ be any maximal consistent extension of set $X$. Such $t$ exists by Lemma~\ref{lm:Lindenbaum}. Then, $t\in S$ by Definition~\ref{df:canonical S}.

\begin{claim}\label{cl:9-July-2}
For each agent $c\in C$, at least one of the following statements is true:
\begin{enumerate}
    \item $\neg\tau\in t$ for each formula $\neg\WA_c\tau\in s$, {\bf or}
    \item $\neg\sigma\in t$ for each formula $\SA_c\sigma\in s$.
\end{enumerate}
\end{claim}
\begin{proof-of-claim}
Suppose the opposite. Then, there are formulae  $\neg\WA_c\tau\in s$ and $\SA_c\sigma\in s$ such that  $\neg\tau\notin t$ and  $\neg\sigma\notin t$. Thus, $\tau\in t$ and  $\sigma\in t$ because $t$ is a maximal consistent set.
Hence, $t\vdash \tau\wedge\sigma$ by propositional reasoning. Then, $\neg ( \tau\wedge\sigma)\notin t$ because set $t$ is consistent. Therefore, $\neg ( \tau\wedge\sigma)\notin X$ because $X\subseteq t$, which contradicts line~3 of statement~\eqref{eq:9-July-6}.
\end{proof-of-claim}

Note that the sets $\{a\}$, $B$, and $C$ form a partition of the set $\mathcal{A}$ of all agents due to statements~\eqref{eq:9-July-4} and \eqref{eq:9-July-5}. Consider any action profile $\delta$ that satisfies the following conditions:
\begin{enumerate}
    \item $\delta_a=\epsilon^-$;
    \item $\delta_b=\top^+$ for each $b\in B$;
    \item $\delta_c\in\{\top^+,\top^-\}$ for each $c\in C$ such that 
    \begin{enumerate}
        \item if $\delta_c=\top^+$, then $\neg\tau\in t$ for each $\neg\WA_c\tau\in s$;
        \item if $\delta_c=\top^-$, then $\neg\sigma\in t$ for each $\SA_c\sigma\in s$.
    \end{enumerate}
\end{enumerate}
The existence of at least one such action profile $\delta$ follows from Definition~\ref{df:canonical action space & D} and Claim~\ref{cl:9-July-2}.

\begin{claim}
$(\delta,t)\in M_s$.
\end{claim}
\begin{proof-of-claim}
It suffices to verify that conditions 1-4 of Definition~\ref{df:canonical M} are satisfied for the tuple $(\delta,t)$ for each agent $x\in\mathcal{A}$. Recall that the sets $\{a\},B,C$ form a partition of the set $\mathcal{A}$ due to statements~\eqref{eq:9-July-4} and \eqref{eq:9-July-5}. Thus, by the choice of profile $\delta$, it suffices to consider the following four cases:

\vspace{0.5mm}\noindent{\em Case 1}: $x=a$. Then, $\delta_x=\epsilon^-$ by the choice of $\delta$. Conditions~1 and 2 of Definition~\ref{df:canonical M} follows from $\delta_x=\epsilon^-\notin D_x^s$. Condition~3 is satisfied because of statement~\eqref{eq:9-July-3}, line~1 of statement~\eqref{eq:9-July-6}, and $X\subseteq t$. Condition~4 follows from line~1 of statement~\eqref{eq:9-July-6} and $X\subseteq t$.

\vspace{0.5mm}\noindent{\em Case 2}: $x\in B$. 
Then, $\delta_x=\top^+$ by the choice of $\delta$.
Condition~1 of Definition~\ref{df:canonical M} follows from line~2 of statement~\eqref{eq:9-July-6} and $X\subseteq t$.  Conditions 2 and 3 are satisfied because $\top\in t$. Condition 4 is satisfied because $\delta_x=\top^+\notin\Delta_x^s\setminus D^s_x$. 

\vspace{0.5mm}\noindent{\em Case 3}: $x\in C$ and $\delta_x=\top^+$. Condition 1 follows from item 3(a) of the choice of profile $\delta$. Conditions 2, 3, and 4 are similar to the previous case.

\vspace{0.5mm}\noindent{\em Case 4}: $x\in C$ and $\delta_x=\top^-$. Conditions 1 and 2 are satisfied because $\delta_x=\top^-\notin D^s_x$. Condition 3 is satisfied because $\top\in t$. Condition 4 follows from item 3(b) of the choice of profile $\delta$. 
\end{proof-of-claim}

To finish the proof of this lemma, note that $\delta_a=\epsilon^-=i$ by the choice of profile $\delta$ and statement~\eqref{eq:9-July-18}. Also, note that $\neg\phi\in X\subseteq t$ by line~1 of statement~\eqref{eq:9-July-6} and the formation of set $t$.
\end{proof}

\begin{lemma}\label{lm:auxiliar SA}
\!For any state $s\!\in\! S$ and any formula $\neg\SA_a\phi\in s$, there is a tuple $(\delta,t)\in M_s$ such that $\delta_a\in\Delta_a^s\setminus D_a^s$ and $\phi\in t$.
\end{lemma}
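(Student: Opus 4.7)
The plan is to mirror Lemma~\ref{lm:auxiliar WA} very closely; the duality between the ``permitted / admits'' statement $\WA_a\phi\in s$ and the ``some non-permitted / admits'' statement $\neg\SA_a\phi\in s$ suggests that the $\SA$-side axioms will play the role that the $\WA$-side axioms played before. First I would observe that $\neg\SA_a\phi\in s$ forces $\SA_a\top\notin s$: applying rule~\ref{ir: SA} to $\phi\to\top$ yields $\vdash\SA_a\top\to\SA_a\phi$, which together with consistency of $s$ excludes $\SA_a\top\in s$. By Definition~\ref{df:canonical action space & D} the non-permitted action $\top^-$ is then available, and I would take $\delta_a=\top^-$.

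I would partition the remaining agents as $B=\{b\neq a:\SA_b\top\in s\}$ and $C=\{c\neq a:\neg\SA_c\top\in s\}$, and define
\begin{align*}
X=\{\phi\}&\cup\{\neg\chi\;|\;\SA_a\chi\in s\}\\
&\cup\{\neg\xi\;|\;\exists\,b\in B\,(\neg\WA_b\xi\in s)\}\\
&\cup\{\neg(\tau\wedge\sigma)\;|\;\exists\,c\in C\,(\neg\WA_c\tau\in s,\SA_c\sigma\in s)\}.
\end{align*}
The first line is tailored to the new condition~\ref{dfitem:canonical mechanism 4} that we need for agent $a$, while the last two lines are inherited verbatim from the previous lemmas. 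The core step is consistency of $X$. I would argue by contradiction: finitely many witnesses produce, via Lemma~\ref{lm:deduction} and rule~\ref{ir: SA}, a theorem $\vdash\SA_a\bigl(\bigvee_i\chi_i\vee\bigvee_j\xi_j\vee\bigvee_k(\tau_k\wedge\sigma_k)\bigr)\to\SA_a\phi$. On the other side, each $\SA_a\chi_i$ sits in $s$ directly; Lemma~\ref{lm:axiom deduction WA SA} converts each $\neg\WA_{b_j}\xi_j\wedge\SA_{b_j}\top\in s$ into $s\vdash\SA_a\xi_j$; and axiom~\ref{ax: WA SA} (with the roles of $a$ and $b$ swapped) converts each pair $\neg\WA_{c_k}\tau_k,\SA_{c_k}\sigma_k\in s$ into $s\vdash\SA_a(\tau_k\wedge\sigma_k)$. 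Iterated application of axiom~\ref{ax: SA conjunction} then collects these into $s\vdash\SA_a\bigl(\bigvee_i\chi_i\vee\bigvee_j\xi_j\vee\bigvee_k(\tau_k\wedge\sigma_k)\bigr)$, so that $s\vdash\SA_a\phi$ contradicts $\neg\SA_a\phi\in s$. The degenerate case where all three lists are empty is absorbed by axiom~\ref{ax: SA bot}.

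With consistency in hand, Lemma~\ref{lm:Lindenbaum} provides a maximal consistent extension $t\supseteq X$ with $t\in S$. A claim identical in form to those in the preceding two lemmas shows that for each $c\in C$ one can pick $\delta_c\in\{\top^+,\top^-\}$ compatibly with $t$; for $b\in B$ set $\delta_b=\top^+$. The verification that $(\delta,t)\in M_s$ then splits, as before, by which part of the partition an agent belongs to, with the case of $a$ being the only novelty: conditions~\ref{dfitem:canonical mechanism 1} and \ref{dfitem:canonical mechanism 2} of Definition~\ref{df:canonical M} are vacuous since $\delta_a=\top^-\notin D_a^s$ and is not of the form $\psi^+$, condition~\ref{dfitem:canonical mechanism 3} is trivial because $\top\in t$, and condition~\ref{dfitem:canonical mechanism 4} is exactly what the first line of $X$ delivers via $X\subseteq t$. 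The main obstacle I anticipate is the consistency proof, specifically lining up the $\SA_a$-versions of the witnesses using Lemma~\ref{lm:axiom deduction WA SA} and axiom~\ref{ax: WA SA} so that axiom~\ref{ax: SA conjunction} can collapse them into the single formula needed to derive the contradiction with $\neg\SA_a\phi\in s$.
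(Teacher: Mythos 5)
Your proposal is correct and follows essentially the same route as the paper's proof: the same partition into $B$ and $C$, the same set $X$, the same consistency argument via Lemma~\ref{lm:axiom deduction WA SA}, axiom~\ref{ax: WA SA}, axiom~\ref{ax: SA conjunction}, rule~\ref{ir: SA} and Lemma~\ref{lm:deduction} (with axiom~\ref{ax: SA bot} for the degenerate case), and the same choice of profile $\delta_a=\top^-$, $\delta_b=\top^+$, $\delta_c\in\{\top^+,\top^-\}$ verified against Definition~\ref{df:canonical M}. Your explicit check that $\neg\SA_a\phi\in s$ forces $\SA_a\top\notin s$, so that $\top^-\in\Delta_a^s$, is a welcome detail the paper leaves implicit.
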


\begin{proof}
Consider the sets of agents
\begin{align}
    B=& \{b\in\mathcal{A}\;|\;b\neq a, \SA_b\top\in s\},\label{eq:10-July-1}\\
    C=& \{c\in\mathcal{A}\;|\;c\neq a, \neg\SA_c\top\in s\},\label{eq:10-July-2}
\end{align}
and the set of formulae
\begin{equation}\label{eq:10-July-3} \hspace{-2mm}
\begin{aligned}
X = & \{\phi\}\cup\{\neg\psi\;|\;\SA_a\psi\in s\}\\
& \cup \{\neg\chi\;|\;\exists\, b\in B(\neg\WA_b\chi\in s)\}\\
& \cup \{\neg(\tau\wedge\sigma)\;|\;\exists\, c\in C(\neg\WA_c \tau\in s, \SA_c \sigma\in s)\}.
\end{aligned}
\end{equation}
\begin{claim}
Set $X$ is consistent.    
\end{claim}
\begin{proof-of-claim}
Suppose the opposite, then there are formulae
\begin{align}
&\SA_a\psi_1,\dots,\SA_a\psi_k\in s,\label{eq:10-July-4} \\   
\neg&\WA_{b_1}\chi_1,\dots,\neg\WA_{b_\ell}\chi_{\ell}\in s,\label{eq:10-July-5}   \\ 
\neg&\WA_{c_1}\tau_1,\dots,\neg\WA_{c_m}\tau_{m}\in s,\label{eq:10-July-6}\\    
&\SA_{c_1}\sigma_1,\dots,\SA_{c_m}\sigma_{m}\in s,\label{eq:10-July-7}
\end{align}
where
\begin{align}
&b_1,\dots,b_\ell\in B,\label{eq:10-July-8}\\
&c_1,\dots,c_m\in C,
\end{align}
such that
\begin{equation}\label{eq:10-July-9}
\phi\wedge \bigwedge_{i\le k}\neg\psi_i
\wedge \bigwedge_{i\le \ell}\neg\chi_i\wedge
\bigwedge_{i\le m}\neg(\tau_i\wedge\sigma_i)\vdash\bot.
\end{equation}
By Lemma~\ref{lm:axiom deduction WA SA} and propositional reasoning, statements~\eqref{eq:10-July-5}, \eqref{eq:10-July-8}, and \eqref{eq:10-July-1} imply that,
\begin{equation}\label{eq:10-July-10}
s\vdash \SA_{a}\chi_i \text{ for each $i\le \ell$}.    
\end{equation}
At the same time, by axiom~\ref{ax: WA SA} and propositional reasoning, statements~\eqref{eq:10-July-6} and \eqref{eq:10-July-7} imply that,
\begin{equation}\label{eq:10-July-11}
s\vdash \SA_{a}(\tau_i\wedge\sigma_i) \text{ for each $i\le m$}.    
\end{equation}
By applying axiom~\ref{ax: SA conjunction} multiple times, statements~\eqref{eq:10-July-4}, \eqref{eq:10-July-10} and \eqref{eq:10-July-11} imply that
\begin{equation}\label{eq:10-July-12}
s\vdash \SA_a\Big(\bigvee_{i\le k}\psi_i
\vee \bigvee_{i\le \ell}\chi_i\vee
\bigvee_{i\le m}(\tau_i\wedge\sigma_i)\Big).
\end{equation}
In the special case where $k=\ell=m=0$, statement~\eqref{eq:10-July-12} follows directly from axiom~\ref{ax: SA bot}.

On the other hand, by Lemma~\ref{lm:deduction} and propositional reasoning, statement~\eqref{eq:10-July-9} implies that
\begin{equation}\notag
\vdash \phi\to \bigvee_{i\le k}\psi_i
\vee \bigvee_{i\le \ell}\chi_i\vee
\bigvee_{i\le m}(\tau_i\wedge\sigma_i).
\end{equation}
By rule \ref{ir: SA}, this implies that
\begin{equation}\notag
\vdash \SA_a\Big(\bigvee_{i\le k}\psi_i
\vee \bigvee_{i\le \ell}\chi_i\vee
\bigvee_{i\le m}(\tau_i\wedge\sigma_i)\Big)\to\SA_a\phi.
\end{equation}
Together with statement~\eqref{eq:10-July-12} and the Modus Ponens rule, the above statement implies that
\begin{equation}\notag
s\vdash \SA_a\phi,
\end{equation}
which contradicts the assumption $\neg\SA_a\phi\in s$ of the lemma because set $s$ is consistent.
\end{proof-of-claim}

Let $t$ be any maximal consistent extension of set $X$. Such $t$ exists by Lemma~\ref{lm:Lindenbaum}. Then, $t\in S$ by Definition~\ref{df:canonical S}.
\begin{claim}\label{cl:10-July-1}
For each agent $c\in C$ {at least one} of the following statements is true:
\begin{enumerate}
    \item $\neg\tau\in t$ for each formula $\neg\WA_c\tau\in s$, {\bf or}
    \item $\neg\sigma\in t$ for each formula $\SA_c\sigma\in s$.
\end{enumerate}
\end{claim}
\begin{proof-of-claim}
Suppose the opposite. Then, there are formulae  $\neg\WA_c\tau\in s$ and $\SA_c\sigma\in s$ such that  $\neg\tau\notin t$ and  $\neg\sigma\notin t$. Thus, $\tau\in t$ and  $\sigma\in t$ because $t$ is a maximal consistent set.
Hence, $t\vdash \tau\wedge\sigma$ by propositional reasoning. Then, $\neg ( \tau\wedge\sigma)\notin t$ because set $t$ is consistent. Therefore, $\neg ( \tau\wedge\sigma)\notin X$ because $X\subseteq t$, which contradicts line~3 of statement~\eqref{eq:10-July-3}.
\end{proof-of-claim}

Note that the sets $\{a\}$, $B$, and $C$ form a partition of the set $\mathcal{A}$ of all agents due to statements~\eqref{eq:10-July-1} and \eqref{eq:10-July-2}. Consider any action profile $\delta$ that satisfies the following conditions:
\begin{enumerate}
    \item $\delta_a=\top^-$
    \item $\delta_b=\top^+$ for each $b\in B$;
    \item $\delta_c\in\{\top^+,\top^-\}$ for each $c\in C$ such that 
    \begin{enumerate}
        \item if $\delta_c=\top^+$, then $\neg\tau\in t$ for each $\neg\WA_c\tau\in s$;
        \item if $\delta_c=\top^-$, then $\neg\sigma\in t$ for each $\SA_c\sigma\in s$.
    \end{enumerate}
\end{enumerate}
The existence of at least one such action profile $\delta$ follows from Definition~\ref{df:canonical action space & D} and Claim~\ref{cl:10-July-1}.

\begin{claim}
$(\delta,t)\in M_s$.
\end{claim}
\begin{proof-of-claim}
It suffices to verify that conditions 1-4 of Definition~\ref{df:canonical M} are satisfied for the tuple $(\delta,t)$ for each agent $x\in\mathcal{A}$. Recall that the sets $\{a\},B,C$ form a partition of the set $\mathcal{A}$ due to statements~\eqref{eq:10-July-1} and \eqref{eq:10-July-2}. Thus, by the choice of profile $\delta$, it suffices to consider the following four cases:

\vspace{0.5mm}\noindent{\em Case 1}: $x=a$. Then, $\delta_x=\top^-$ by the choice of profile~$\delta$.
Conditions~1 and 2 of Definition~\ref{df:canonical M} are satisfied because $\delta_x=\top^-\notin D_x^s$.  Condition~3 is satisfied because $\top\in t$. Condition~4 follows from line~1 of statement~\eqref{eq:10-July-3} and $X\subseteq t$. 

\vspace{0.5mm}\noindent{\em Case 2}: $x\in B$. 
Then, $\delta_x=\top^+$ by the choice of $\delta$.
Condition~1 of Definition~\ref{df:canonical M} follows from line~2 of statement~\eqref{eq:10-July-3} and $X\subseteq t$.  Conditions 2 and 3 are satisfied because $\top\in t$. Condition 4 is satisfied because $\delta_x=\top^+\notin\Delta_x^s\setminus D^s_x$. 

\vspace{0.5mm}\noindent{\em Case 3}: $x\in C$ and $\delta_x=\top^+$. Condition 1 follows from item 3(a) of the choice of profile $\delta$. Conditions 2, 3, and 4 are similar to the previous case.

\vspace{0.5mm}\noindent{\em Case 4}: $x\in C$ and $\delta_x=\top^-$. Conditions~1, 2 and 3 are similar to case~1. Condition 4 follows from item 3(b) of the choice of profile $\delta$. 
\end{proof-of-claim}

\vspace{-2mm}
As the ending of the proof of the lemma, first, note that $\delta_a=\top^-\in \Delta_a^s\setminus D^s_a$ by the choice of profile $\delta$ and Definition~\ref{df:canonical action space & D}. Second, note that $\phi\in X\subseteq t$ by line~1 of statement~\eqref{eq:10-July-3} and the formation of set $t$.
\end{proof}

\subsection{Proof of Lemma~\ref{lm:truth lemma}}\label{app:core prove of truth lemma}

{\bf Lemma~\ref{lm:truth lemma}} {\em $s\Vdash \phi$ if and only if $\phi\in s$ for each state $s$ of the canonical model and each formula $\phi\in\Phi$.}
\vspace{1mm}

\begin{proof}
We prove the lemma by induction on the structural complexity of formula $\phi$. If $\phi$ is a propositional variable, then the statement of the lemma follows from item~\ref{item:sat propositional} of Definition~\ref{df:sat} and Definition~\ref{df:canonical pi}. If formula $\phi$ is a negation or a disjunction, then the statement of the lemma follows from the induction hypothesis, items~\ref{item:sat negation} and \ref{item:sat disjunction} of Definition~\ref{df:sat} and the maximality and consistency of set $s$ in the standard way.

\vspace{0.5mm}
Suppose that formula $\phi$ has the form $\WA_a\psi$. 

\vspace{0.4mm}
\noindent$\Rightarrow:$
Assume that $s\Vdash\WA_a\psi$. Then, 
by item~\ref{item:sat WA} of Definition~\ref{df:sat},
there exists an action $i\in D^s_a$ such that
$(s,i)\nstit_a\neg\psi$. Hence, by Definition~\ref{df:sat}, there is a tuple $(\delta,t)\in M_s$ such that $\delta_a=i$ and $t\nVdash\neg\psi$. Then, $t\Vdash\psi$ by item~\ref{item:sat negation} of Definition~\ref{df:sat}.
Thus, $\psi\in t$ by the induction hypothesis.  Hence, $\neg\phi\notin t$ because set $t$ is consistent.
Note that $\neg\phi\notin t$ and $\delta_a=i\in D^s_a$. Therefore, $\WA_a\psi\in s$ by Condition~1 of Definition~\ref{df:canonical M}.

\vspace{0.4mm}
\noindent$\Leftarrow:$ Assume that $\WA_a\psi\in s$. Then, by Lemma~\ref{lm:auxiliar WA}, 
there is a tuple $(\delta,t)\in M_s$ such that $\delta_a\in D^s_a$ and $\psi\in t$. Hence, $t\Vdash\psi$ by the induction hypothesis. Thus, $t\nVdash\neg\psi$ by item~\ref{item:sat negation} of Definition~\ref{df:sat}.
Then, $(s,\delta_a)\nstit_a\neg\psi$ by Definition~\ref{df:sat}.
Therefore, $s\Vdash\WA_a\psi$ by item~\ref{item:sat WA} of Definition~\ref{df:sat}.

\vspace{0.5mm}
Suppose that formula $\phi$ has the form $\WE_a\psi$. 

\vspace{0.4mm}
\noindent$\Rightarrow:$
Assume that $\WE_a\psi\notin s$. Then, $\neg\WE_a\psi\in s$ because $s$ is a maximal consistent set. Hence, by Lemma~\ref{lm:auxiliar WE}, for each action $i\in D_a^s$, there is a tuple $(\delta,t)\in M_s$ such that $\delta_a=i$ and $\neg\psi\in t$. Then, $\psi\notin t$ because $t$ is a maximal consistent set. Thus, $t\nVdash\psi$ by the induction hypothesis. Hence, $(s,i)\nstit_a\psi$ for each action $i\in D_a^s$ by Definition~\ref{df:sat}. Therefore, $s\nVdash\WE_a\psi$ by item~\ref{item:sat WE} of Definition~\ref{df:sat}.

\vspace{0.4mm}
\noindent$\Leftarrow:$
Assume that $s\nVdash\WE_a\psi$. Then, $(s,\psi^+)\nstit_a\psi$ by item~\ref{item:sat WE} of Definition~\ref{df:sat}. Thus, there is a tuple $(\delta,t)\in M_s$ such that $\delta_a=\psi^+$ and $t\nVdash\psi$ by Definition~\ref{df:sat}. Then, $\psi\notin t$ by the induction hypothesis. Note that $\delta_a\!=\!\psi^+$ and $\psi\notin t$. Therefore, $\WE_a\psi\notin s$ by Condition~2 of Definition~\ref{df:canonical M}.

\vspace{0.5mm}
Suppose that formula $\phi$ has the form $\SE_a\psi$. 

\vspace{0.4mm}
\noindent$\Rightarrow:$
Assume that $s\Vdash\SE_a\psi$. Then, $(s,\psi^-)\nstit_a\psi$ by item~\ref{item:sat SE} of Definition~\ref{df:sat} because $\psi^-\notin D_a^s$. Thus, there is a tuple $(\delta,t)\in M_s$ such that $\delta_a=\psi^-$ and $t\nVdash\psi$ by Definition~\ref{df:sat}. Then, $\psi\notin t$ by the induction hypothesis. Thus, $\delta_a=\phi^-$ and $\psi\notin t$. Therefore, $\SE_a\psi\in s$ by Condition~3 of Definition~\ref{df:canonical M}.

\vspace{0.4mm}
\noindent$\Leftarrow:$
Assume that $\SE_a\psi\in s$. Then, by Lemma~\ref{lm:auxiliar SE}, for each action $i\in\Delta_a^s\setminus D_a^s$, there is a tuple $(\delta,t)\in M_s$ such that $\delta_a=i$ and $\neg\psi\in t$. Thus, $\psi\notin t$ because $t$ is a maximal consistent set. Then, $t\nVdash\psi$ by the induction hypothesis. Hence, $(s,i)\nstit_a\psi$ for each action $i\in\Delta_a^s\setminus D_a^s$ by Definition~\ref{df:sat}. Then, by contraposition, $i\in D_a^s$ for each action $i$ such that $(s,i)\stit_a\psi$. Therefore, $s\Vdash\SE_a\psi$ by item~\ref{item:sat SE} of Definition~\ref{df:sat}.

\vspace{0.5mm}
Suppose that formula $\phi$ has the form $\SA_a\psi$. 

\vspace{0.4mm}
\noindent$\Rightarrow:$
Assume that $\SA_a\psi\notin s$. Then, $\neg\SA_a\psi\in s$ because $s$ is a maximal consistent set. Thus, by Lemma~\ref{lm:auxiliar SA}, there is a tuple $(\delta,t)\in M_s$ such that $\delta_a\in\Delta_a^s\setminus D_a^s$ and $\psi\in t$. Then, by the induction hypothesis, $t\Vdash\psi$. Thus, $t\nVdash\neg\psi$ by item~\ref{item:sat negation} of Definition~\ref{df:sat}. Hence, $(s,\delta_a)\nstit_a\neg\psi$ by Definition~\ref{df:sat}.  Therefore, $s\nVdash\SA_a\psi$ by item~\ref{item:sat SA} of Definition~\ref{df:sat} and because $\delta_a\in\Delta_a^s\setminus D_a^s$.

\vspace{0.4mm}
\noindent$\Leftarrow:$
Assume that $s\nVdash\SA_a\psi$. Then, by item~\ref{item:sat SA} of Definition~\ref{df:sat}, there is an action $i\in \Delta_a^s\setminus D_a^s$ such that $(s,i)\nstit_a\neg\psi$. Thus, there is a tuple $(\delta,t)\in M_s$ such that $\delta_a=i$ and $t\nVdash\neg\psi$ by Definition~\ref{df:sat}. Then, $t\Vdash\psi$ by item~\ref{item:sat negation} of Definition~\ref{df:sat}. Hence, $\psi\in t$ by the induction hypothesis. Then, $\neg\psi\notin t$ because $t$ is a maximal consistent set. Note that $\delta_a=i\in \Delta_a^s\setminus D_a^s$ and $\neg\psi\notin t$. Hence, $\SA_a\psi\notin s$ by Condition~4 of Definition~\ref{df:canonical M}.
\end{proof}

\subsection{Proof of Theorem~\ref{th:completeness}}\label{app:completeness theorem}

\noindent\textbf{Theorem~\ref{th:completeness}}
\textit{For each set of formulae $X\subseteq \Phi$ and each formula $\phi\in\Phi$ such that $X\nvdash\phi$, there is a state $s$ of a transition system such that $s\Vdash\chi$ for each $\chi\in X$ and $s\nVdash\phi$.}

\begin{proof}
Suppose that $X\nvdash \phi$. Then, the set $X\cup \{\neg\phi\}$ is consistent. According to Lemma~\ref{lm:Lindenbaum}, there is a maximal consistent extension $s$ of the set $X\cup \{\neg\phi\}$. Then, $s\in S$ by Definition~\ref{df:canonical S}. Note that $\chi\in s$ for each $\chi\in X$ because $X\subseteq s$. Also, $\phi\notin s$ due to the consistency of set $s$. Hence, $s\Vdash\chi$ for each $\chi\in X$ and $s\nVdash\phi$ by Lemma~\ref{lm:truth lemma}.
\end{proof}

\bibliographystyle{named}

\end{document}